\renewcommand{\epsilon}{\varepsilon}
\newcommand{\trans}{^{\top}}
\newcommand{\cA}{\mathcal{A}}
\newcommand{\cE}{\mathcal{E}}
\newcommand{\cF}{\mathcal{F}}
\newcommand{\cH}{\mathcal{H}}
\newcommand{\cI}{\mathcal{I}}
\newcommand{\cL}{\mathcal{L}}
\newcommand{\cM}{\mathcal{M}}
\newcommand{\cN}{\mathcal{N}}
\newcommand{\cO}{\mathcal{O}}
\newcommand{\cR}{\mathcal{R}}
\newcommand{\cS}{\mathcal{S}}
\newcommand{\cU}{\mathcal{U}}
\newcommand{\cX}{\mathcal{X}}
\newcommand{\EE}{\mathbb{E}}
\newcommand{\Dr}{\Delta r}
\newcommand{\hA}{{\hat{A}}}
\newcommand{\ph}{{h'}}
\newcommand{\mP}{{\mathbb{P}}}
\newcommand{\mT}{{\mathbb{T}}}
\newcommand{\mO}{{\mathbb{O}}}
\newcommand{\mH}{{\mathbb{H}}}
\newcommand{\mR}{{\mathbb{R}}}
\newcommand{\steerpi}{\psi}
\newcommand\numberthis{\addtocounter{equation}{1}\tag{\theequation}}
\let\hat\widehat
\let\tilde\widetilde
\newtheorem{theorem}{Theorem}[section]
\newtheorem{lemma}[theorem]{Lemma}
\newtheorem{remark}[theorem]{Remark}
\theoremstyle{definition}
\newtheorem{definition}[theorem]{Definition}
\newtheorem{example}[theorem]{Example}
\newtheorem{assumption}{Assumption}
\newtheorem{desiderata}{Desideratum}
\newtheorem*{pridesiderata*}{Primary Desideratum}
\newtheorem*{secdesiderata*}{Secondary Desideratum}
\def\##1\#{\begin{align}#1\end{align}}
\def\$#1\${\begin{align*}#1\end{align*}}
\newcommand{\kibitz}[2]{\ifnum\Comments=1\textcolor{#1}{#2}\fi}
\DeclareMathOperator*{\argmax}{arg\,max}
\DeclareMathOperator*{\argmin}{arg\,min}
\newcommand{\TV}{\mathbb{TV}}
\newcommand{\tpi}{{\tilde{\pi}}}
\newcommand{\MLE}{\text{MLE}}
\newcommand{\NE}{\text{NE}}
\newcommand{\la}{\langle}
\newcommand{\ra}{\rangle}
\newcommand{\vecpi}{{\bm{\pi}}}
\newcommand{\vecu}{{\bm{u}}}
\newcommand{\veca}{{\bm{a}}}
\newcommand{\vecr}{{\bm{r}}}
\newcommand{\tvecpi}{{\tilde{\bm{\pi}}}}
\newcommand{\tz}{\tilde{z}}
\newcommand{\ta}{\tilde{a}}
\newcommand{\RI}{{\rm \uppercase\expandafter{\romannumeral 1 \relax}}}
\newcommand{\RII}{{\rm \uppercase\expandafter{\romannumeral 2 \relax}}}
\newcommand{\dom}{\text{dom}}
\newcommand{\bs}{\bar{s}}
\newcommand{\ba}{\bar{a}}
\newcommand{\Exploit}{\text{Exploit}}
\newcommand{\Explore}{\text{Explore}}
\newcommand{\btau}{\bar{\tau}}
\newcommand{\bvecpi}{\bar{\vecpi}}
\newcommand{\cost}{\text{cost}}
\newcommand{\goal}{\text{goal}}
\newcommand{\terminal}{\text{term}}
\newenvironment{framework}[1][htb]
  {%
   \begin{algorithm}[#1]%
  }{\end{algorithm}}
\newcommand{\blue}[1]{{\color{blue} #1}}
\newcommand{\purple}[1]{{\color{purple} #1}}
\newcommand{\red}[1]{{\color{red} #1}}
\title{Learning to Steer Markovian Agents \\ under Model Uncertainty}
\author{%
    Jiawei Huang$^\dagger$, Vinzenz Thoma$^\ddagger$, Zebang Shen$^\dagger$, Heinrich H. Nax$^\S$, Niao He$^\dagger$\\
    $\dagger$ Department of Computer Science, ETH Zurich \\
    \texttt{~\{jiawei.huang, zebang.shen, niao.he\}@inf.ethz.ch}\\
    $\ddagger$ ETH AI Center \\
    \texttt{~vinzenz.thoma@ai.ethz.ch} \\
    $\S$ University of Zurich \\
    \texttt{~heinrich.nax@uzh.ch} \\
}
\date{\today}
\begin{document}
\maketitle
\allowdisplaybreaks

\begin{abstract}
Designing incentives for an adapting population is a ubiquitous problem in a wide array of economic applications and beyond. In this work, we study how to design additional rewards to steer multi-agent systems towards desired policies \emph{without} prior knowledge of the agents' underlying learning dynamics. Motivated by the limitation of existing works, we consider a new and general category of learning dynamics called \emph{Markovian agents}. We introduce a model-based non-episodic Reinforcement Learning (RL) formulation for our steering problem. Importantly, we focus on learning a \emph{history-dependent} steering strategy to handle the inherent model uncertainty about the agents' learning dynamics. We introduce a novel objective function to encode the desiderata of achieving a good steering outcome with reasonable cost. Theoretically, we identify conditions for the existence of steering strategies to guide agents to the desired policies. Complementing our theoretical contributions, we provide empirical algorithms to approximately solve our objective, which effectively tackles the challenge in learning history-dependent strategies. We demonstrate the efficacy of our algorithms through empirical evaluations.
\end{abstract}

\section{Introduction}\label{sec:introduction}

Many real-world applications can be formulated as Markov Games \citep{littman1994markov} where the agents repeatedly interact and update their policies based on the received feedback.
In this context, different learning dynamics and their convergence properties have been studied extensively (see, for example, \citet{Fudenberg1998Theory}).
Because of the mismatch between the individual short-run and collective long-run incentives, or the lack of coordination in decentralized systems, agents following standard learning dynamics may not converge to outcomes that are desirable from a system designer perspective, such as the Nash Equilibria (NE) with the largest social welfare.
An interesting class of games that exemplify these issues are so-called ``Stag Hunt'' games (see Fig.~\ref{fig:StagHunt}-(a)), which are used to study a broad array of real-world applications including collective action, public good provision, social dilemma, team work and innovation adoption \citep{skyrms2004stag}\footnote{We defer a concrete and practical scenario which can be modeled by the Stag Hunt game to Appx.~\ref{appx:staghunt_example}}.
Stag Hunt games have two pure-strategy NE, one of which is `payoff-dominant', that is, both players obtain higher payoffs in that equilibrium than in the other. Typical algorithms may fail to reach the payoff-dominant equilibrium $(\texttt{H}, \texttt{H})$ (LHS Fig.~\ref{fig:StagHunt}-(b)).
Indeed, the other equilibrium $(\texttt{G}, \texttt{G})$ is typically selected when it is risk-dominant \citep{harsanyi1988general, newton2021conventions}.

This paper focuses on situations when an external ``mediator'' exists, who can influence and \emph{steer} the agents' learning dynamics by modifying the original rewards via additional incentives.
This kind of mediator can be conceptualized in various ways. In particular, we can think of a social planner who provides monetary incentives for joint ventures or for adoption of an innovative technology via individual financial subsidies.
As illustrated on the RHS of Fig.~\ref{fig:StagHunt}-(b), with suitable steering, agents' dynamics can be directed to the best outcome.
Our primary objective is to steer the agents to some desired policies, that is, to minimize the \emph{\textbf{steering gap}} vis-a-vis the target outcome. As a secondary objective, the payments to agents regarding the steering rewards should be reasonable, that is, the \emph{\textbf{steering cost}} should be low.

\begin{wrapfigure}{r}{0.5\textwidth}
    \centering
    \begin{subtable}{0.5\textwidth}
        \centering
        \def\arraystretch{1.}
        \begin{tabular}{ccc}
              & \texttt{H} & \texttt{G} \\ \cline{2-3}
         \texttt{H}   &  \multicolumn{1}{|c|}{(5, 5)} &  \multicolumn{1}{c|}{(0, 4)}  \\ \cline{2-3}
         \texttt{G}   &  \multicolumn{1}{|c|}{(4, 0)} &  \multicolumn{1}{c|}{(2, 2)}  \\ \cline{2-3}
        \end{tabular}
        \caption{Payoff Matrix of the Two-Player ``Stag Hunt'' Game. \texttt{H} and \texttt{G} stand for two actions \texttt{Hunt} and \texttt{Gather}. Both (\texttt{H,H}) and (\texttt{G,G}) are NE and (\texttt{H,H}) is payoff-dominant.  
        }
    \end{subtable}
    \begin{subfigure}{0.5\textwidth}
        \centering
        \includegraphics[scale=0.12]{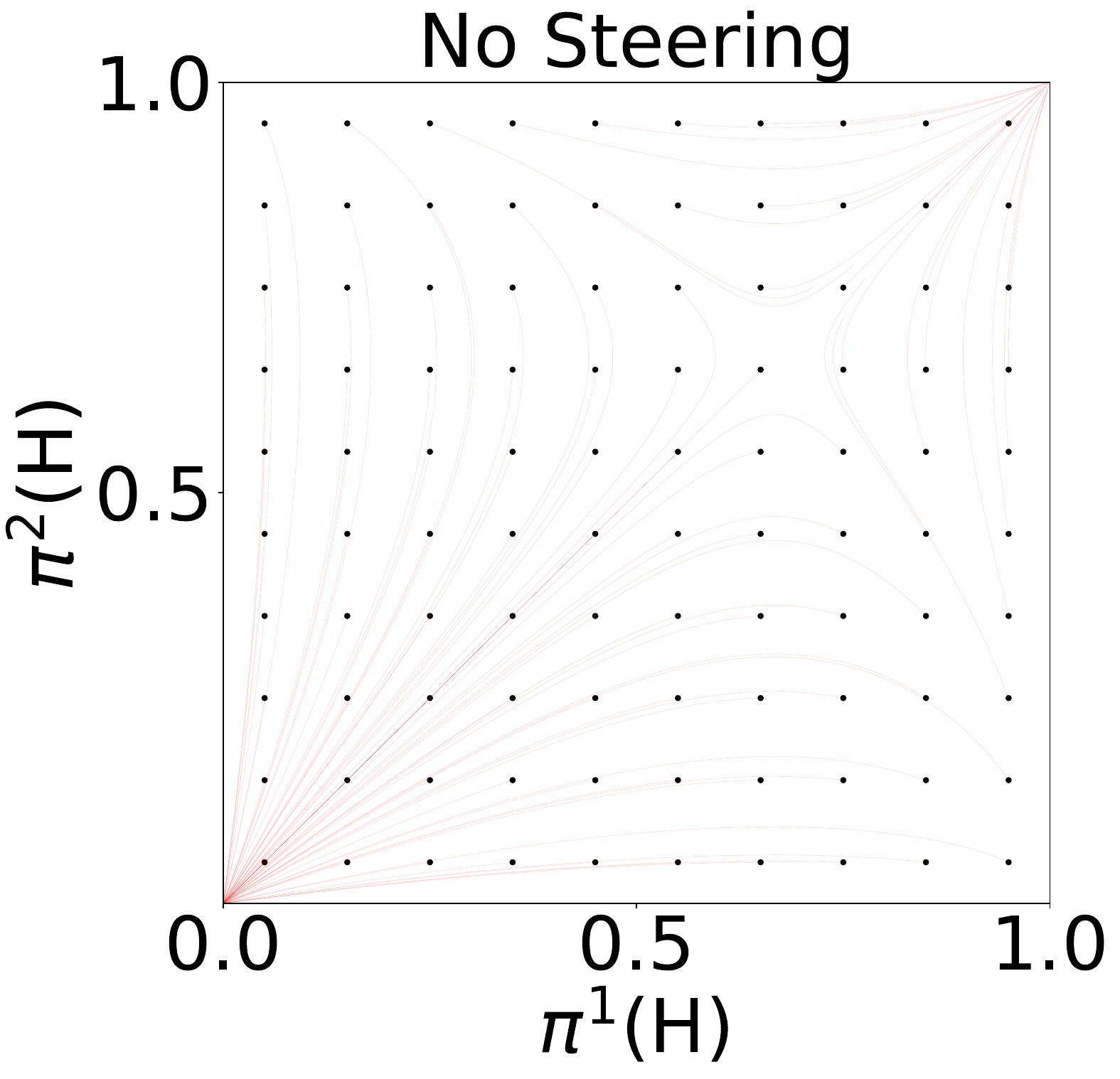}
        \includegraphics[scale=0.12]{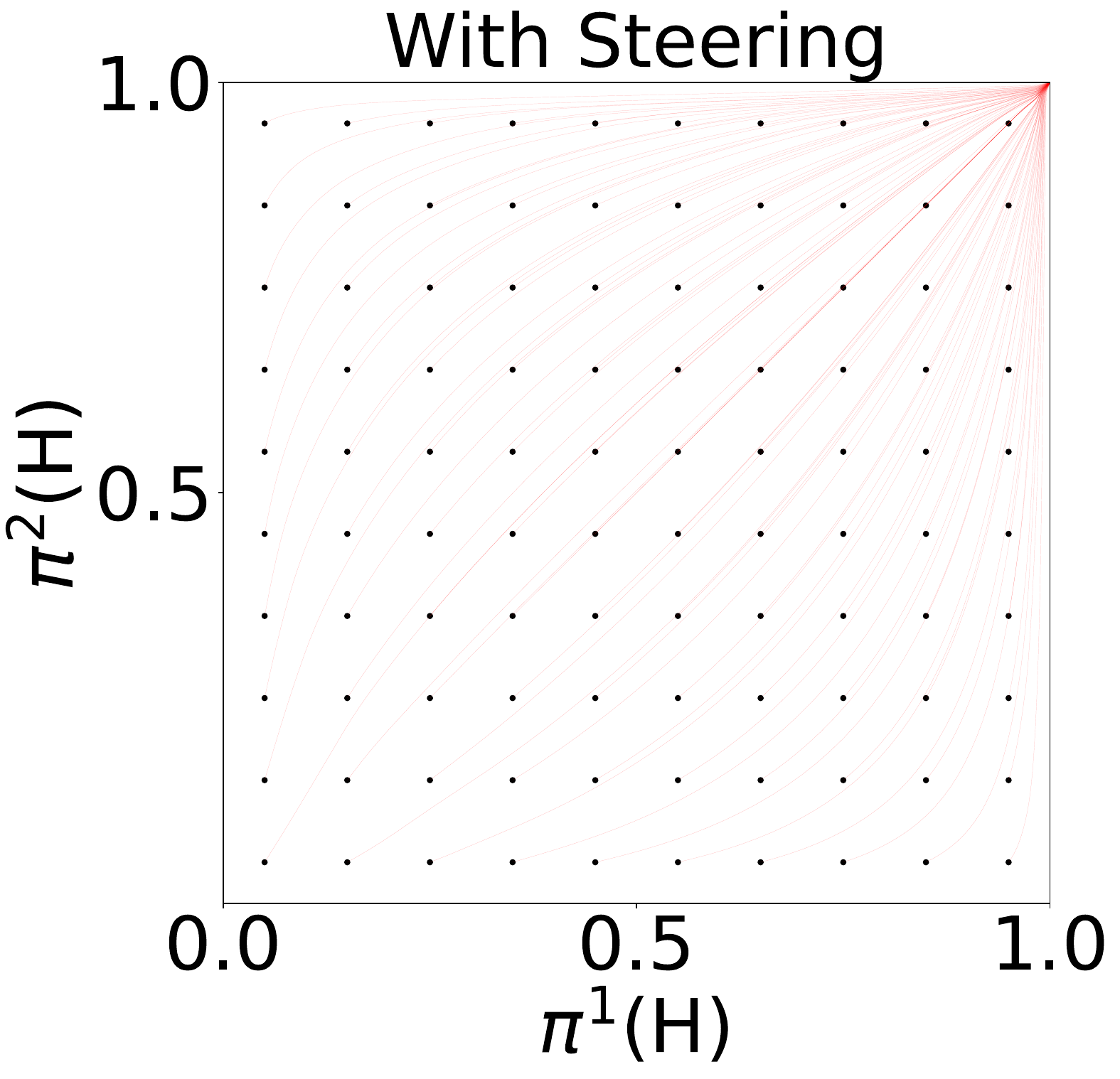}
        \caption{Dynamics of Agents Policies without/with Steering. Agents follow natural policy gradient (replicator dynamics) for policy update. $x$ and $y$ axes correspond to the probability to take action \texttt{H} by the row and column players. Red curves represent the dynamics of agents' policies starting from different intializations (black dots).}
    \end{subfigure}
    \caption{Example: The ``Stag Hunt'' Game}\label{fig:StagHunt}
\end{wrapfigure}

To our knowledge, \citet{zhang2023steering} is the first work studying a similar steering problem as ours. 
They assume the agents are no-regret learners and may act in arbitrarily adversarial ways.
In some natural settings, such assumption may not be practical, because no-regret criterion typically requires careful processing of the entire history of learning. 
In settings with limited cognitive resources and bounded rationality, it is natural to favor models where the agents only process a subset of the available information \citep{camerer2011behavioral}.
In particular, humans have been widely shown to rely overproportionally on recent experiences in decision making, known as `recency bias' \citep{costabile2005finishing,page2010last,durand2021behavioral}. 
Besides, there is evidence that behavioral dynamics that only rely on the most recent experience are able to fit behavioral data well in certain situations \citep{mas2016behavioral}.
Motivated by these insights, we therefore study steering a different category of learning dynamics called \emph{\textbf{Markovian agents}}, where the agents' policy updates only depend on their current policy and the (modified) reward function.
Our model complements the prior work on no-regret agents, and serves as the first abstraction of behavior based on limited cognitive abilities with recency bias in steering setting.
Theoretically, Markovian agents subsumes a broader class of popular policy-based methods as concrete examples \citep{giannou2022convergence, ding2022independent}, which are not covered by no-regret assumptions.
We note that a concurrent work \citep{canyakmaz2024steering} considers a similar setting as ours, and we defer to Sec.~\ref{sec:related_work} for further discussion.

In practice, learning the right steering strategies encounters two main challenges.
First, the agents may not disclose their learning dynamics model to the mediator. 
As a result, this creates fundamental model uncertainty, which we will tackle with appropriate Reinforcement Learning (RL) techniques to trading-off exploration and exploitation.
Second, it may be unrealistic to assume that the mediator is able to force the agents to ``reset'' their policies in order to generate multiple steering episodes with the same initial state. This precludes the possibility of learning steering strategies through episodic trial-and-error. Therefore, the most commonly-considered, fixed-horizon episodic RL \citep{dann2015sample} framework is not applicable here.
Instead, we will consider a \emph{finite-horizon non-episodic} setup, where the mediator can only generate one finite-horizon episode, in which we have to conduct both the model learning and steering of the agents simultaneously.
Motivated by these considerations, we would like to address the following question in this paper:
\begin{center}
    \emph{How to learn desired steering strategies for Markovian agents \\ in the non-episodic setup under model uncertainty?
    }
\end{center}
We consider a model-based setting where the mediator can get access to a model class $\cF$ containing the agents' true learning dynamics $f^*$.
We summarize and highlight our key contributions as follow:
\begin{itemize}[leftmargin=*]
    \item \textbf{Conceptual Contributions}:
    In Sec.~\ref{sec:formal_formulation}, we formulate steering as a non-episodic RL problem, and propose a novel optimization objective in Obj.~\eqref{obj:objective_function}, where we explicitly tackle the inherent model uncertainty by learning \emph{history-dependent} steering strategies.
    As we show in Prop.~\ref{prop:justification}, under certain conditions, even \emph{without prior knowledge of $f^*$}, the optimal solution to Obj.~\eqref{obj:objective_function} achieves not only low steering gap, but also ``Pareto Optimality'' in terms of both steering costs and gaps.
    \item \textbf{Theoretical Contributions}: In Sec.~\ref{sec:existence}, we provide sufficient conditions under which there exists steering strategies achieving low steering gap.
    These results in turn justify our chosen objective and problem formulation.
    \item \textbf{Algorithmic Contributions}: Learning a history-dependent strategy presents challenges due to the exponential growth in the history space. We propose algorithms to overcome these issues.
    \begin{itemize}
        \item When the model class $|\cF|$ is small, in Sec.~\ref{sec:unknown_small_case}, we approach our objective from the perspective of learning in a Partially Observable MDP, and propose to to learn a policy over the model belief state space instead of over the history space.

        \item For the case when $|\cF|$ is large, exactly solving Obj.~\eqref{obj:objective_function} can be challenging. Instead, we focus on approximate solutions to trade-off optimality and tractability. In Sec.~\ref{sec:unknown_large_case}, we propose a First-Explore-Then-Exploit (\texttt{FETE}) framework. Under some conditions, we can still ensure the directed agents converge to the desired outcome.
    \end{itemize}
    \item \textbf{Empirical Validation}: In Sec.~\ref{sec:experiments}, we evaluate our algorithms in various representative environments, and demonstrate their effectiveness under model uncertainty.
\end{itemize}

\subsection{Closely Related Works}\label{sec:related_work}
We discuss the works most closely related to ours in this section, and defer the others to Appx.~\ref{appx:other_related_works}.

\textbf{Steering Learning Dynamics}\quad
As mentioned in the introduction, \citet{zhang2023steering} are the first to introduce the ``steering problem'', but their setting differs quite fundamentally from ours in several key aspects.
Firstly, they assume that agents behave as no-regret and arbitrarily adversarial learners, which may be unrealistic in settings with limited information and feedback, and owed to agents' limited cognitive resources \citep{camerer2011behavioral} including recency bias \citep{costabile2005finishing}.
Motivated by this, we instead focus on a broad class of Markovian dynamics.
Secondly, the mediator's objective in \citet{zhang2023steering} is to steer agents such that the average policy converges to the target NE while maintaining a sublinear accumulative budget, motivated by their infinite-horizon setup.
In contrast, we consider the finite-horizon setting, and therefore, we are concerned with minimizing the steering gap of the terminal policy and the cumulative steering cost.
Thirdly, when the desired NE is not pure, \citet{zhang2023steering} require the mediator to be able to ``give advice'' to the players to facilitate coordination, while we do not allow the mediator to do this.
Because of these differences, the methods and results obtained by \citet{zhang2023steering} and us \emph{are not directly comparable}, yet complement one another depending on application considered.

Perhaps the closest to ours is a concurrent work by \citet{canyakmaz2024steering}.
They consider a similar finite-horizon non-episodic setup as ours, and experimentally investigate the use of control methods to direct game dynamics towards desired outcomes, in particular allowing for model uncertainty.
Compared to their work, we handle the model uncertainty in a more principled way by employing \emph{history-dependent} steering strategies, since history can serve as sufficient information set for decision making under uncertainty.
This leads to several differences in the design principles between our algorithms and \citet{canyakmaz2024steering}.
Concretely, we propose a learning objective for history-dependent strategies in Obj.~\eqref{obj:objective_function}, and two algorithms for low uncertainty (small $\mathcal{F}$) and high uncertainty (large $\mathcal{F}$) settings, respectively.
In the former case, we contribute a belief-state based algorithm that can exactly solve Obj.~\eqref{obj:objective_function}, offering a stronger solution due to the theoretical guarantee in Prop.~\ref{prop:justification}.
For the latter, although both our \texttt{FETE} and SIAR-MPC \citep{canyakmaz2024steering} share a two-phase (exploration + exploitation) structure, ours represents a more general framework with a more advanced exploration strategy (see more explanation in Sec.~\ref{sec:unknown_large_case}).
Besides, we develop additional novel theory regarding the existence of strategies with low steering gap.

\section{Preliminary}
In the following, we formally define the finite-horizon Markov Game that we will focus on.
We summarize all the frequently used notations in this paper in Appx.~\ref{appx:notations}.

\textbf{Finite Horizon Markov Game}\quad
A finite-horizon $N$-player Markov Game is defined by a tuple $G := \{\cN, s_1, H, \cS, \cA:=\{\cA^n\}_{n=1}^N, \mP, \vecr:=\{r^n\}_{n=1}^N\}$, where $\cN:=\{1,2,...,N\}$ is the indices of agents, $s_1$ is the fixed initial state, $H$ is the horizon length, $\cS$ is the finite shared state space, $\cA^n$ is the finite action space for agent $n$, and $\cA$ denotes the joint action space.
Besides, $\mP:=\{\mP_h\}_{h\in[H]}$ with $\mP_h:\cS\times\cA\rightarrow\Delta(\cS)$ denotes the transition function of the shared state, and $r^n:=\{r_h^n\}_{h\in[H]}$ with $r_h^n:\cS\times\cA\rightarrow[0,1]$ denotes the reward function for agent $n$.
For each agent $n$, we consider the non-stationary Markovian policies $\Pi^n:=\{\pi^n=\{\pi_1^n,...,\pi_H^n\}|\forall h\in[H],~\pi_h^n:\cS\rightarrow\Delta(\cA^n)\}$. 
We denote $\Pi:=\Pi^1\times...\times\Pi^N$ to be the joint policy space of all agents.
Given a policy $\vecpi:=\{\pi^1,...,\pi^N\} \in \Pi$, a trajectory is generated by:
    $
    \forall h \in [H],~\forall n\in[N],\quad a^n_h \sim \pi^n(\cdot|s_h),~r^n_h\gets r^n_h(s_h,\veca_h),~s_{h+1}\sim\mP_h(\cdot|s_h,\veca_h),
    $
where $\veca_h := \{a_h^n\}_{n\in[N]}$  denotes the collection of all actions.
Given a policy $\vecpi$, we define the value functions by:
    $
    Q^{n,\vecpi}_{h|\vecr}(\cdot,\cdot) := \EE_{\vecpi}[\sum_{\ph=h}^H r^n_\ph(s_\ph,\veca_\ph)|s_h=\cdot,\veca_h=\cdot],~V^{n,\vecpi}_{h|\vecr}(\cdot) := \EE_{\vecpi}[\sum_{\ph=h}^H r^n_\ph(s_\ph,\veca_\ph)|s_h=\cdot],
    $
where we use $|r$ to specify the reward function associated with the value functions.
In the rest of the paper, we denote $A^{n,\vecpi}_{|\vecr} = Q^{n,\vecpi}_{|\vecr} - V^{n,\vecpi}_{|\vecr}$ to be the advantage value function, and denote $J^n_{|\vecr}(\vecpi) := V^{n,\vecpi}_{1|\vecr}(s_1)$ to be the total return of agent $n$ w.r.t. policy $\vecpi$.

\section{The Problem Formulation of the Steering Markovian Agents}\label{sec:formal_formulation}
We first introduce our definition of Markovian agents. Informally, the policy updates of Markovian agents are independent of the interaction history conditioning on their current policy and observed rewards.
This encompass a broader class of popular policy-based methods as concrete examples \citep{giannou2022convergence, ding2022independent,xiao2022convergence,daskalakis2020independent}.
\begin{definition}[Markovian Agents]\label{def:Markovian}
    Given a game $G$, a finite and fixed $T$, the agents are Markovian if their policy update rule $f$ only depends on the current policy $\vecpi_t$ and the reward function $\vecr$:
    \begin{align*}
        \forall t\in[T],\quad \vecpi_{t+1} \sim f(\cdot|\vecpi_t, \vecr). %
    \end{align*}
\end{definition}
Here we only highlight the dependence on $\vecpi_t$ and $\vecr$, and omit other dependence (e.g. the transition function of $G$).
It is worth to note that we do not restrict whether the updates of agents' policies are independent or correlated with each other, deterministic or stochastic.
We assume $T$ is known to us.

In the steering problem, the mediator has the ability to change the reward function $\vecr$ via the steering reward $\vecu$, so that the agents' dynamics are modified to:
\begin{align*}
    \forall t\in[T],\quad & \purple{\vecu_t} \sim \psi_t(\cdot|\vecpi_1,\vecu_1,...,\vecpi_{t-1},\vecu_{t-1},\vecpi_t), \quad \vecpi_{t+1} \sim f(\cdot|\vecpi_t, \vecr + \purple{\vecu_t})
    ,
\end{align*}
Here $\psi:=\{\psi_t\}_{t\in[T]}$ denotes the mediator's ``steering strategy'' to generate $u_t$. 
We consider history-dependent strategies to handle the model uncertainty, which we will explain later.
Besides, $\vecu_t:=\{u_{t,h}^n\}_{h\in[H],n\in[N]}$, where $u^n_{t,h}:\cS\times\cA\rightarrow[0,U_{\max}]$ is the steering reward for agent $n$ at game horizon $h$ and steering step $t$. $U_{\max} < +\infty$ denotes the upper bound for the steering reward. 
With a bit of abuse of notation, here we treat both $\vecr$ and $\vecu_t$ as vectors with length $HN|\cS||\cA|$ and use $\vecr+ \vecu_t$ to denote elementwise addition.
For practical concerns, we follow the standard constraints that the steering rewards are non-negative.

The mediator has a terminal reward function $\eta^\goal$ and a cost function $\eta^{\cost}$.
First, $\eta^\goal:\Pi\rightarrow[0,\eta_{\max}]$ assesses whether the final policy $\vecpi_{T+1}$ aligns with desired behaviors---this encapsulates our primary goal of a low steering gap. Note that we consider the general setting and do not restrict the maximizer of $\eta^\goal$ to be a Nash Equilibrium.
For instance, to steer the agents to a desired policy $\vecpi^*$, we could choose $\eta^\goal(\vecpi) := -\|\vecpi - \vecpi^*\|_2$. Alternatively, in scenarios focusing on maximizing utility, $\eta^\goal(\vecpi)$ could be defined as the total utility $\sum_{n\in[N]}J^n_{|\vecr}(\vecpi)$.
For $\eta^{\cost}:\Pi\rightarrow \mR_{\geq 0}$, it is used to quantify the steering cost incurred while steering.
In this paper, we fix $\eta^{\cost}(\vecpi,\vecu) := \sum_{n\in[N]} J^n_{|\vecu}(\pi^n)$ to be the total return related to $\vecpi$ and the steering reward $\vecu$.
Note that we always have $0 \leq \eta^{\cost}(\vecpi,\vecu) \leq U_{\max}NH$.

\textbf{Steering Dynamics as a Markov Decision Process (MDP)}\quad
Given a game $G$, the agents' dynamics $f$ and $(\eta^{\cost},\eta^\goal)$, the \emph{steering dynamics} can be modeled by a finite-horizon MDP. $M := \{\vecpi_1, T, \Pi, \cU, f, (\eta^\cost, \eta^\goal)\}$ with initial state $\vecpi_1$, horizon length $T$, state space $\Pi$, action space $\cU := [0,U_{\max}]^{HN|\cS||\cA|}$, stationary transition $f$, running reward $\eta^\cost$ and terminal reward $\eta^\goal$.
For completeness, we defer to Appx.~\ref{appx:MDP} for an introduction of finite-horizon MDP

\textbf{Steering under Model Uncertainty}\quad
In practice, the mediator may not have precise knowledge of agents learning dynamics model, and the uncertainty should be taken into account.
We will only focus on handling the uncertainty in agents' dynamics $f$, and assume the mediator has the full knowledge of $G$ and the reward functions $\eta^\goal$ and $\eta^\cost$.
We consider the model-based setting where the mediator only has access to a finite model class $\cF$ ($|\cF| < +\infty$) satisfying the following assumption:
\begin{assumption}[Realizability]\label{assump:realizability}
    The true learning dynamics $f^*$ is realizable, i.e. $f^* \in \cF$.
\end{assumption}

\textbf{A Finite-Hoziron Non-Episodic Setup and Motivation}\quad
As motivated previously, we formulate steering as a finite-horizon non-episodic RL problem.
To our knowledge, in contrast to our finite-horizon setting, most of the non-episodic RL settings consider the infinite-horizon setup with stationary or non-stationary transitions, and therefore, they are also not suitable here.
We provide more discussion in Appx.~\ref{appx:other_related_works}.
\begin{definition}[Finite Horizon Non-Episodic Steering Setting]\label{def:non_episodic}
    The mediator can only interact with \emph{the real agents} for \emph{one episode} $\{\vecpi_1,\vecu_1,...,\vecpi_T,\vecu_T,\vecpi_{T+1}\}$, where $\vecpi_{t+1} \sim f^*(\cdot|\vecpi_t,\vecr +\vecu_t)$ for all $t\in[T]$. 
    Nonetheless, the mediator can get access to the simulators for all models in $\cF$, and it can sample \emph{arbitrary} trajectories and do episodic learning with those simulators to decide the best steering actions $\vecu_1,\vecu_2,...,\vecu_T$ to deploy.
\end{definition}

\textbf{The Learning Objective}\quad
Motivated by the model-based non-episodic setup, we propose the following objective function, where we search over the set of all history-dependent strategies, denoted by $\Psi$, to optimize the average performance over all $f\in\cF$.
\begin{align}
    \psi^* \gets & \arg\max_{\psi\in\Psi} \frac{1}{|\cF|}\sum_{f\in\cF}\EE_{\psi, f}\Big[\beta \cdot \eta^\goal(\vecpi_{T+1}) - \sum_{t=1}^T  \eta^\cost(\vecpi_t, \vecu_t)\Big]
    ,\label{obj:objective_function}
\end{align}
Here we use $\EE_{\psi, f}[\cdot]:=\EE[\cdot|\forall t\in[T], \vecu_t\sim \psi_t(\cdot|\{\vecpi_{t'},\vecu_{t'}\}_{t'=1}^{t-1},\vecpi_t),\vecpi_{t+1}\sim f(\cdot|\vecpi_t,\vecr+\vecu_t)]$ to denote the expectation over trajectories generated by $\psi$ and $f\in\cF$; $\beta > 0$ is a regularization factor.
Next, we explain the rationale to consider history-dependent strategies.
As introduced in Def.~\ref{def:non_episodic}, we only intereact with the real agents once.
Therefore, the mediator needs to use the interaction history with $f^*$ to decide the appropriate steering rewards to deploy, since the history is the sufficient information set including all the information regarding $f^*$ availale to the mediator.

We want to clarify that in our steering framework, we will first solve Obj.~\eqref{obj:objective_function}, and then deploy $\psi^*$ to steer real agents.
The learning and optimization of $\psi^*$ in Obj.~\eqref{obj:objective_function} only utilizes simulators of $\cF$.
Besides, after deploying $\psi^*$ to real agents, we will not update $\psi^*$ with the data generated during the interaction with real agents.
This is seemingly different from common online learning algorithms which conduct the learning and interaction repeatedly\citep{dann2015sample}.
But we want to highlight that, given the fact that $\psi^*$ is history-dependent, it is already encoded in $\psi^*$ how to make decisions (or say, learning) in the face of uncertainty after gathering data from real agents.
In other words, one can interpret that, in Obj.~\eqref{obj:objective_function}, the $\psi^*$ to be optimized is an ``online algorithm'' which can ``smartly'' decide the next steering reward to deploy given the past interaction history.
As we will justify in the following, our Obj.~\eqref{obj:objective_function} can indeed successfully handle the model uncertainty.

\textbf{Justification for Objective~\eqref{obj:objective_function}}\quad
We use $C_{\psi,T}(f) := \EE_{\psi, f}[\sum_{t=1}^T\eta^\cost(\vecpi_t, \vecu_t)]$ and $\Delta_{\psi,T}(f) := \EE_{\psi, f}[\max_\vecpi\eta^{\goal}(\vecpi) - \eta^\goal(\vecpi_{T+1})]$ as short notes of the steering cost and the steering gap (of the terminal policy $\vecpi_{T+1}$), respectively.
Besides, we denote $\Psi^\epsilon := \{\psi\in\Psi|\max_{f\in\cF}\Delta_{\psi,T}(\cF)\leq \epsilon\}$\footnote{In fact, besides $\epsilon$, $\Psi^\epsilon$ also depends on other parameters like $T$, $U_{\max}$, $\cF$ and the initial policy $\vecpi_1$. For simplicity, we omit those dependence unless necessary.\label{footnote:Psi_class}} to be the collection of all steering strategies with $\epsilon$-steering gap.
Based on these notations, we introduce two desiderata, and show how an optimal solution $\psi^*$ of Obj.~\eqref{obj:objective_function} can achieve them.
\begin{desiderata}[$\epsilon$-Steering Gap]\label{des:steering_gap}
    We say $\psi$ has $\epsilon$-steering gap, if $\max_{f\in\cF}\Delta_{\psi,T}(f) \leq \epsilon$.
\end{desiderata}
\begin{desiderata}[Pareto Optimality]\label{des:PO}
    We say $\psi$ is Pareto Optimal if there does not exist another $\psi' \in \Psi$, such that (1) $\forall f\in\cF$, $C_{\psi',T}(f) \leq C_{\psi,T}(f)$ and $\Delta_{\psi',T}(f) \leq \Delta_{\psi,T}(f)$; (2) $\exists f' \in \cF$, s.t. either $C_{\psi',T}(f') < C_{\psi,T}(f')$ or $\Delta_{\psi',T}(f') < \Delta_{\psi,T}(f')$.
\end{desiderata}
\begin{restatable}{proposition}{PropJustify}[Justification for Obj.~\eqref{obj:objective_function}]\label{prop:justification}
    By solving Obj.~\eqref{obj:objective_function}: 
    (1) $\psi^*$ is Pareto Optimal;
    (2) Given any $\epsilon, \epsilon' > 0$, if 
    $\Psi^{\epsilon/|\cF|} \neq \emptyset$ and $\beta \geq \frac{U_{\max}NHT|\cF|}{\epsilon'}$, we have $\psi^* \in \Psi^{\epsilon + \epsilon'}$;
\end{restatable}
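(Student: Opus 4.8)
The plan is to prove the two claims in sequence, both by direct arguments about the objective value. Write $\mathcal{V}(\psi) := \frac{1}{|\cF|}\sum_{f\in\cF}\EE_{\psi,f}\big[\beta\cdot\eta^\goal(\vecpi_{T+1}) - \sum_{t=1}^T\eta^\cost(\vecpi_t,\vecu_t)\big]$ for the quantity maximized in Obj.~\eqref{obj:objective_function}. The key observation I would record first is the identity $\EE_{\psi,f}[\eta^\goal(\vecpi_{T+1})] = \max_\vecpi \eta^\goal(\vecpi) - \Delta_{\psi,T}(f)$ and the definition $\EE_{\psi,f}[\sum_t \eta^\cost(\vecpi_t,\vecu_t)] = C_{\psi,T}(f)$, so that, up to the additive constant $\beta\max_\vecpi\eta^\goal(\vecpi)$,
\[
\mathcal{V}(\psi) \;=\; \text{const} \;-\; \frac{1}{|\cF|}\sum_{f\in\cF}\big(\beta\,\Delta_{\psi,T}(f) + C_{\psi,T}(f)\big).
\]
So maximizing $\mathcal{V}$ is equivalent to minimizing the average of the scalarized costs $\beta\Delta_{\psi,T}(f)+C_{\psi,T}(f)$ over $f\in\cF$.

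For part (1), Pareto optimality, I would argue by contradiction in the standard way for weighted-sum scalarization. Suppose some $\psi'$ Pareto-dominates $\psi^*$ in the sense of Desideratum~\ref{des:PO}: then $C_{\psi',T}(f)\le C_{\psi^*,T}(f)$ and $\Delta_{\psi',T}(f)\le\Delta_{\psi^*,T}(f)$ for all $f$, with at least one strict inequality for some $f'$. Since $\beta>0$, this gives $\beta\Delta_{\psi',T}(f)+C_{\psi',T}(f)\le\beta\Delta_{\psi^*,T}(f)+C_{\psi^*,T}(f)$ for every $f$, strictly for $f'$, hence the average strictly decreases, i.e. $\mathcal{V}(\psi')>\mathcal{V}(\psi^*)$, contradicting optimality of $\psi^*$. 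The only mild subtlety is that $\Psi$ must be closed under the "improvement" comparison — but since $\psi'\in\Psi$ by hypothesis of the domination, nothing more is needed.

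For part (2), I would combine a lower bound on $\mathcal{V}(\psi^*)$ coming from a witness strategy with an upper bound on the gap term. Since $\Psi^{\epsilon/|\cF|}\neq\emptyset$, pick $\tilde\psi$ with $\max_{f\in\cF}\Delta_{\tilde\psi,T}(f)\le\epsilon/|\cF|$. Using the boundedness $0\le\eta^\cost\le U_{\max}NH$, we have $C_{\tilde\psi,T}(f)\le U_{\max}NHT$ for all $f$, so
\[
\frac{1}{|\cF|}\sum_{f\in\cF}\big(\beta\Delta_{\tilde\psi,T}(f)+C_{\tilde\psi,T}(f)\big) \;\le\; \beta\cdot\frac{\epsilon}{|\cF|} + U_{\max}NHT.
\]
By optimality, $\psi^*$ achieves an average scalarized cost no larger than this, hence $\frac{\beta}{|\cF|}\sum_{f\in\cF}\Delta_{\psi^*,T}(f)\le\beta\epsilon/|\cF| + U_{\max}NHT$, so $\sum_{f\in\cF}\Delta_{\psi^*,T}(f)\le\epsilon + U_{\max}NHT|\cF|/\beta$. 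Since each $\Delta_{\psi^*,T}(f)\ge0$, every individual term is bounded by the same quantity: $\max_{f\in\cF}\Delta_{\psi^*,T}(f)\le\epsilon + U_{\max}NHT|\cF|/\beta$. Finally, plugging in $\beta\ge U_{\max}NHT|\cF|/\epsilon'$ gives $U_{\max}NHT|\cF|/\beta\le\epsilon'$, so $\max_{f\in\cF}\Delta_{\psi^*,T}(f)\le\epsilon+\epsilon'$, i.e. $\psi^*\in\Psi^{\epsilon+\epsilon'}$.

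The main obstacle is essentially bookkeeping rather than conceptual: one must be careful that the bound "average $\le c$ and each term $\ge 0$" only yields "each term $\le |\cF|\cdot c$", which is exactly why the $|\cF|$ factors appear in the statement (the $\epsilon/|\cF|$ in the hypothesis on $\Psi$ and the $|\cF|$ in the threshold on $\beta$), and to make sure the additive constant from the $\eta^\goal$-rewriting cancels correctly when comparing $\mathcal{V}(\psi^*)$ against $\mathcal{V}(\tilde\psi)$. I would also state explicitly that the maximum over the finite set $\cF$ in Desideratum~\ref{des:steering_gap} is attained, so that "$\Psi^{\epsilon/|\cF|}\neq\emptyset$" is meaningful and the witness argument goes through.
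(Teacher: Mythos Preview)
Your proposal is correct and follows essentially the same approach as the paper's proof: both parts rely on rewriting the objective as minimizing $\frac{1}{|\cF|}\sum_{f\in\cF}(\beta\Delta_{\psi,T}(f)+C_{\psi,T}(f))$, then using a standard scalarization contradiction for Pareto optimality, and for part~(2) comparing against a witness $\tilde\psi\in\Psi^{\epsilon/|\cF|}$, dropping the nonnegative cost term to isolate the sum of gaps, and using nonnegativity of each $\Delta_{\psi^*,T}(f)$ to pass from the sum to the maximum. The paper phrases the final step as a contradiction (``if the average exceeds $(\epsilon+\epsilon')/|\cF|$ the RHS is negative''), but the underlying inequality chain is identical to yours.
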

Next, we give some interpretation.
As our primary desideratum, we expect the agents converge to some desired policy that maximizes the goal function $\eta^\goal$ after being steered for $T$ steps, regardless of the true model $f^*$.
Therefore, we restrict the worst case steering gap to be small.
As stated in Prop.~\ref{prop:justification}, for any accuracy level $\epsilon > 0$, as long as $\epsilon/|\cF|$-steering gap is achievable, by choosing $\beta$ large enough, we can approximately guarantee $\psi^*$ has $\epsilon$-steering gap.
For the steering cost, although it is not our primary objective, Prop.~\ref{prop:justification} states that at least we can guarantee the Pareto Optimality: competing with $\psi^*$, there does not exist another $\psi'$, which can improve either the steering cost or gap for some $f'\in\cF$ without deteriorating any others.

Given the above discussion, one natural question is that: \textbf{\emph{when is $\Psi^\epsilon$ non-empty}}, or equivalently, \textbf{\emph{when does a strategy $\psi$ with $\epsilon$-steering gap exist}}?
In Sec.~\ref{sec:existence}, we provide sufficient conditions and concrete examples to address this question in theory.
Notably, we suggest conditions where $\Psi^\epsilon$ is non-empty for any $\epsilon > 0$, so that the condition $\Psi^{\epsilon/|\cF|} \neq \emptyset$ in Prop.~\ref{prop:justification} is realizable, even for large $|\cF|$.
After that, in Sec.~\ref{sec:empirical_algorithms}, we introduce algorithms to solve our Obj.~\eqref{obj:objective_function}.

\section{Existence of Steering Strategy with $\epsilon$-Steering Gap}\label{sec:existence}
In this section, we identify sufficient conditions such that $\Psi^\epsilon$ is non-empty.
In Sec.~\ref{sec:exist_known_model}, we start with the special case when $f^*$ is known, i.e. $\cF = \{f^*\}$.
The results will serve as basis when we study the general unknown model setting with $|\cF| > 1$ in Sec.~\ref{sec:exist_unknown_model}.
\subsection{Existence when $f^*$ is Known: Natural Policy Gradient as an Example}\label{sec:exist_known_model}
In this section, we focus on a popular choice of learning dynamics called Natural Policy Gradient (NPG) dynamics \citep{kakade2001natural, agarwal2021theory} (a.k.a. the replicator dynamics \citep{schuster1983replicator}) with direct policy parameterization.
NPG is a special case of the Policy Mirror Descent (PMD)
 \citep{xiao2022convergence}.
For the readability, we stick to NPG in the main text, and in Appx.~\ref{appx:existence_PMD}, we formalize PMD and extend the results to the general PMD, which subsumes other learning dynamics, like the online gradient ascent \citep{zinkevich2003online}.
\begin{definition}[Natural Policy Gradient]\label{def:NPG}
    For any $n\in[N],t\in[T],h\in[H],s_h\in\cS$, the policy is updated by:
    $
    \pi_{t+1,h}^n(\cdot|s_h) ~ \propto ~ \pi_{t,h}^n(\cdot|s_h) \exp(\alpha \hA^{n,\vecpi_t}_{h|r^n+u^n_t}(s_h,\cdot)).
    $
    Here $\hA^{n,\vecpi_t}_{h|r^n+u^n_t}$ is some random estimation for the advantage value $A^{n,\vecpi_t}_{h|r^n+u^n_t}$ with $\EE_{\pi^n}[\hA^{n,\vecpi_t}_{h|r^n+u^n_t}(s_h,\cdot)] = 0$.
\end{definition}
We use $\hA^\vecpi_{|\vecr+\vecu}$ (and $A^\vecpi_{|\vecr+\vecu}$) to denote the concatenation of the values of all agents, horizon, states and actions.
We only assume $\hA^{\vecpi_t}_{|\vecr+\vecu}$ is controllable and has positive correlation with $A^{\vecpi_t}_{|\vecr+\vecu}$ but could be biased, which we call the ``general incentive driven'' agents.
\begin{assumption}[General Incentive Driven Agents]\label{assump:incentive_driven}
        $$
        \forall t\in[T],\quad \la \EE[\hA^{\vecpi_t}_{|\vecr+\vecu_t}], A^{\vecpi_t}_{|\vecr+\vecu_t}\ra \geq  \lambda_{\min}  \|A^{\vecpi_t}_{|\vecr+\vecu_t}\|_2^2,~\|\hA^{\vecpi_t}_{|\vecr+\vecu_t}\|^2_2 \leq  \lambda_{\max}^2 \|A^{\vecpi_t}_{|\vecr+\vecu_t}\|_2^2,
        $$
\end{assumption}
Note that the exact NPG is captured by the special case with $\lambda_{\min} = \lambda_{\max} = 1$.
For NPG, since the policy is always bounded away from 0. We will use $\Pi^+:=\{\vecpi|\forall n, h, a_h,s_h:\pi^n_h(a_h|s_h) > 0\}$ to denote such feasible policy set.
We state our main result below.
%
%
%
%
%
%
%
%
%
%
%
%
%
%
%
%

%
%

%

%

%
%
%
%
%
%
%
%
%
%

%

\iffalse
\textbf{Proof Sketch of Lem.~\ref{lem:exist_NPG}}\quad
The proof is by construction.
The main challenge is that the steering reward $u_h$ at horizon $h$ will affect the value functions in horizon $h'< h$.
Naive design may lead to exponentially large value function and steering rewards in early steps.
%

To overcome this challenge, for step $t$,
%
%
we consider the design 
$u^n_{t,h}(s_h,a^n_h) = - A_{h|r^n}^{n,\vecpi_t}(s_h,a^n_h) + \nu_h^n(s_h,a^n_h) - \EE_{a^n_h\sim\pi^n_h}[\nu_h^n(s_h,a^n_h)] 
%
+ c_h$, where $\nu^n_h$ is the guidance term and $c_h$ is a constant to ensure the non-negativity.
On the one hand, since $\EE_{a^n\sim\pi^n}[u^n_h(s_h,a^n_h)] = c_h$, we only modify $Q^{n,\vecpi_t}_{h'|r^n}$ by a constant shift and do not change the policy dynamics for any $h'\neq h$.
On the other hand, the agents update by 
$\pi^n_{t+1,h}~\propto~ \pi^n_{t,h}\exp(\nu^n_h)$,
%
%
which implies the convergence to $\tvecpi$ by choosing $\nu^n_h$ appropriately.
\fi

%
%

%

%

%
%
\begin{theorem}[Informal]\label{thm:incentive_driven_agents_NPG}
    Suppose $\eta^\goal$ is Lipschitz in $\vecpi$, given any initial $\vecpi_1 \in \Pi^+$, for any $\epsilon > 0$, if the agents follow Def.~\ref{def:NPG} under Assump.~\ref{assump:incentive_driven}, and $T$ and $U_{\max}$ are large enough, we have $\Psi^\epsilon \neq \emptyset$.
\end{theorem}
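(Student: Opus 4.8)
The plan is to prove the theorem by an explicit construction: exhibit one steering strategy $\psi$ that is Markovian in $\vecpi_t$ (no history dependence is needed when $\cF=\{f^*\}$) and show $\psi\in\Psi^\epsilon$. Write $L$ for the Lipschitz constant of $\eta^\goal$. Since $\Pi^+$ is dense in $\Pi$ and $\eta^\goal$ is continuous, first fix a target $\tvecpi\in\Pi^+$ with $\eta^\goal(\tvecpi)\ge\max_\vecpi\eta^\goal(\vecpi)-\epsilon/2$, and let $p_{\min}>0$ be its smallest entry. It then suffices to steer the true dynamics so that $\EE\|\vecpi_{T+1}-\tvecpi\|_1\le\epsilon/(2L)$; Lipschitzness and the choice of $\tvecpi$ close the remaining gap.

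The construction I would use sets, at step $t$ and for every $n,h,s_h,a^n_h$,
\[
u^n_{t,h}(s_h,a^n_h)=-\,\bar A^{\,n,\vecpi_t}_{h|r^n}(s_h,a^n_h)+\gamma\big(\log\tpi^n_h(a^n_h|s_h)-\log\pi^n_{t,h}(a^n_h|s_h)\big)+c^n_{t,h}(s_h),
\]
where the first term is the $\pi^n_{t,h}$-centered own-action advantage of the original reward (subtracting it neutralizes $r^n$'s influence on agent $n$'s horizon-$h$ update), $\gamma>0$ is a small scaling factor we are free to choose, and $c^n_{t,h}(s_h)$ is a state-dependent constant picked both so that $\EE_{a^n_h\sim\pi^n_{t,h}(\cdot\mid s_h)}[u^n_{t,h}(s_h,a^n_h)]$ depends on $h$ only and so that $u^n_{t,h}\ge0$. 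Since rewards lie in $[0,1]$ the advantages are $O(H)$ and the log-ratio term is $O(\gamma\log(1/p_{\min}))$, so $U_{\max}=O(H+\gamma\log(1/p_{\min}))$ is enough --- this is the ``$U_{\max}$ large enough'' requirement. The two features I would extract are: (i) because $u^n_{t,h}$ has an action-independent expectation under $\pi^n_{t,h}$ for every $s_h$, injecting it at horizon $h$ shifts the $Q$-values at every earlier horizon $h'<h$ only by an additive constant, hence leaves the advantages --- and therefore the NPG updates --- at all horizons $h'\ne h$ unchanged; this is precisely the device that avoids the naive exponential blow-up of early-step steering rewards, and it decouples the dynamics into $H$ independent per-horizon chains; (ii) on chain $h$ the true advantage of $r^n_h+u^n_{t,h}$ in agent $n$'s action is the centered guidance $\gamma(\log\tpi^n_h-\log\pi^n_{t,h})-(\text{const})$, whose $\ell_2$ magnitude vanishes as $\pi^n_{t,h}\to\tpi^n_h$.

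For the convergence step I would use the Lyapunov function $\Phi_t:=\sum_{n,h,s}\KL(\tpi^n_h(\cdot\mid s)\,\|\,\pi^n_{t,h}(\cdot\mid s))$, finite since $\vecpi_1,\tvecpi\in\Pi^+$, together with the standard one-step (generalized Pythagorean) identity for the KL applied to the NPG update $\pi^n_{t+1,h}(\cdot\mid s_h)\propto\pi^n_{t,h}(\cdot\mid s_h)\exp(\alpha\hA^{n,\vecpi_t}_{h|r^n+u^n_t}(s_h,\cdot))$. For the exact guidance direction this gives geometric contraction $\Phi_{t+1}\le(1-\alpha\gamma)\Phi_t$ up to $O((\alpha\gamma)^2)$ terms; under Assump.~\ref{assump:incentive_driven} the correlation lower bound $\lambda_{\min}$ supplies the contraction and the magnitude bound $\lambda_{\max}$ controls the higher-order and noise terms, with the key point that \emph{both} bounds are proportional to $\|A^{n,\vecpi_t}_{h|r^n+u^n_t}\|_2^2$, which by (ii) is $O(\gamma^2 p_{\min}^{-O(1)}\Phi_t)$ --- so there is no additive noise floor and the iteration still contracts. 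Choosing $\gamma$ small enough as a function of $\alpha,\lambda_{\min},\lambda_{\max},p_{\min}$ yields $\EE[\Phi_{t+1}\mid\vecpi_t]\le(1-\rho)\Phi_t$ for some $\rho\in(0,1)$, hence $\EE[\Phi_{T+1}]\le(1-\rho)^T\Phi_1$; Pinsker then gives $\EE\|\vecpi_{T+1}-\tvecpi\|_1=O((1-\rho)^{T/2}\sqrt{\Phi_1})\le\epsilon/(2L)$ once $T$ is large enough, so $\psi\in\Psi^\epsilon$. The general-PMD extension (appendix) would replace the KL by the Bregman divergence of the mirror map and is otherwise parallel.

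I expect the main obstacle to be this convergence step under the full generality of Assumption~\ref{assump:incentive_driven}. The decoupling trick keeps the constructed rewards (hence $U_{\max}$) bounded, but since $\hA$ is only positively correlated with --- not equal to --- the designed advantage, there is no exact per-step contraction of $\vecpi_t$ toward $\tvecpi$, and the delicate work is the Lyapunov argument turning the two inequalities of Assumption~\ref{assump:incentive_driven} into genuine geometric decay of $\Phi_t$, while honestly tracking how the constants --- especially the dependence on $p_{\min}(\tvecpi)$, which simultaneously governs $\|\nu\|_\infty$ (hence the required $U_{\max}$) and the Pinsker conversion (hence the required $T$) --- propagate into the final bound.
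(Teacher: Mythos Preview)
Your construction --- cancel the native advantage, inject a guidance term proportional to $\log\tpi-\log\pi_t$, shift to nonnegativity, and exploit that the $\pi_t$-mean of $u_{t,h}$ is state-independent so the horizons decouple --- is exactly the paper's design. The divergence is in the Lyapunov step: you use $\Phi_t=\sum\KL(\tpi\,\|\,\pi_t)$ in the primal, whereas the paper uses $\|\tilde\theta-\theta_t\|_2^2$ in the dual (for NPG, $\theta=\log\pi$ up to centering). This is not cosmetic. By design the true advantage satisfies $A^{\vecpi_t}_{|\vecr+\vecu_t}=\frac{1}{\gamma}(\tilde\theta-\theta_t)$ (after centering), so with $\theta_{t+1}=\theta_t+\alpha\hA$ the paper gets the one-line recursion
\[
\EE\|\tilde\theta-\theta_{t+1}\|_2^2=\EE\|\tilde\theta-\theta_t\|_2^2-2\alpha\,\la\tilde\theta-\theta_t,\EE[\hA]\ra+\alpha^2\EE\|\hA\|_2^2,
\]
and since $\tilde\theta-\theta_t=\gamma A$ the two inequalities of Assumption~\ref{assump:incentive_driven} plug in \emph{verbatim}: the cross term is $\ge 2\alpha\gamma\lambda_{\min}\|A\|_2^2$ and the square term is $\le\alpha^2\lambda_{\max}^2\|A\|_2^2$, both multiples of $\|\tilde\theta-\theta_t\|_2^2$. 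Choosing $\gamma=\alpha\lambda_{\max}^2/\lambda_{\min}$ gives contraction factor $1-\lambda_{\min}^2/\lambda_{\max}^2$.

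Your KL recursion instead produces a drift term $-\alpha\la\tpi-\pi_t,\EE[\hA]\ra$, and Assumption~\ref{assump:incentive_driven} says nothing about the inner product of $\EE[\hA]$ with $\tpi-\pi_t$; it only controls the component of $\EE[\hA]$ along $A\propto\log\tpi-\log\pi_t$. Writing $\EE[\hA]=\mu A+w$ with $w\perp A$ and $\|w\|_2\le\sqrt{\lambda_{\max}^2-\lambda_{\min}^2}\,\|A\|_2$, the adversarial piece $\la\tpi-\pi_t,w\ra$ can be as large as $\|\tpi-\pi_t\|_2\cdot\sqrt{\lambda_{\max}^2-\lambda_{\min}^2}\,\|A\|_2$. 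Your claimed cure, $\|A\|_2^2=O(\gamma^2 p_{\min}^{-O(1)}\Phi_t)$ with $p_{\min}=p_{\min}(\tvecpi)$, is false once $\pi_t$ drifts toward the boundary: take $\tpi=(\tfrac12,\tfrac12)$ and $\pi_t=(1-\delta,\delta)$, then $\|A\|_2^2\asymp\gamma^2(\log\tfrac1\delta)^2$ while $\Phi_t\asymp\tfrac12\log\tfrac1\delta$, so the ratio diverges. Since $\hA$ is only positively correlated with $A$, nothing in your argument prevents such excursions of $\pi_t$. The fix is to move the Lyapunov to the dual space; then the proportionality $\tilde\theta-\theta_t\propto A$ is exact, Assumption~\ref{assump:incentive_driven} applies directly, and a separate a.s.\ bound $\|\tilde\theta-\theta_{t+1}\|\le(1+\lambda_{\min}/\lambda_{\max})\|\tilde\theta-\theta_t\|$ (from $\|\hA\|\le\lambda_{\max}\|A\|$) controls $U_{\max}$ along every sample path.
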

Our result is strong in indicating the existence of a steering path for any feasible initialization.
The proof is based on construction.
The basic idea is to design the $\vecu_t$ so that $A^{\vecpi_t}_{|\vecr+\vecu_t} \propto \log\frac{\vecpi^*}{\vecpi_t}$, for some target policy $\vecpi^*\in\Pi^+$ (approximately) maximizing $\eta^\goal$, then we can guarantee the convergence of $\vecpi_t$ towards $\vecpi^*$ under Assump.~\ref{assump:incentive_driven}.
The main challenge here would be the design of $u_t$.
We defer the details and the formal statements to Appx.~\ref{appx:existence_known_model}.

\subsection{Existence when $f^*$ is Unknown: the Identifiable Model Class}\label{sec:exist_unknown_model}
Intuitively, when $f^*$ is unknown, if we can first use a few steering steps $\tilde{T} < T$ to explore and identify $f^*$, and then steer the agents from $\vecpi_{\tilde T}$ to the desired policy within $T - \tilde T$ steps given the identified $f^*$, we can expect $\Psi^\epsilon\neq\emptyset$.
Motivated by this insight, we introduce the following notion.
\begin{definition}[$(\delta,T_{\cF}^\delta)$-Identifiable]\label{def:identifiable}
    Given $\delta \in (0,1)$, we say $\cF$ is $(\delta,T_{\cF}^\delta)$-identifiable, if $\max_{\psi}\min_{f\in\cF}\EE_{\psi,f}[\mathbb{I}[f = f_\MLE]] \geq 1 - \delta$, where $\mathbb{I}[\cE]=1$ if $\cE$ is true and otherwise 0; $f_\MLE := \argmax_{f\in\cF} \sum_{t=1}^{T_{\cF}^\delta}\log f(\vecpi_{t+1}|\vecpi_t, \vecu_t)$.
\end{definition}
Intuitively, $\cF$ is $(\delta,T_{\cF}^\delta)$-identifiable, if $\exists \psi$, s.t. after $T_{\cF}^\delta$ steering steps, the hidden model $f$ can be identified by the Maximal Likelihood Estimation (MLE) with high probability.
Next, we provide a general set of $(\delta, T_{\cF}^\delta)$-identifiable function classes with $T_{\cF}^\delta$ upper bounded for any $\delta \in (0,1)$.
\begin{restatable}{example}{PropOneStepDiff}[One-Step Difference]\label{example:one_step_diff}
    If $\forall \vecpi \in \Pi$, there exists a steering reward $\vecu_\vecpi \in \cU$, s.t.
    $
    \min_{f,f'\in\cF}\mH^2(f(\cdot|\vecpi,\vecr+\vecu_\vecpi), f'(\cdot|\vecpi,\vecr+\vecu_\vecpi)) \geq \zeta,
    $
    for some universal $\zeta > 0$, where $\mH$ is the Hellinger distance, then for any $\delta\in(0,1)$, $\cF$ is $(\delta,T_{\cF}^\delta)$-identifiable with $T_{\cF}^\delta = O(\zeta^{-1}\log({|\cF|}{/\delta}))$.
\end{restatable}
Based on Def.~\ref{def:identifiable}, we provide a sufficient condition when $\Psi^\epsilon$ is non-empty.
\begin{restatable}{theorem}{ThmSuffUnknown}[A Sufficient Condition for Existence]\label{prop:distinguishable}
    Given any $\epsilon > 0$, $\Psi^\epsilon_T(\cF;\vecpi_1)\footnote{Here we highlight the dependence on initial policy, model, and time for clarity (see Footnote~\ref{footnote:Psi_class})} \neq \emptyset$, if $\exists~\tilde T < T$, s.t., (1) $\cF$ is $(\frac{\epsilon}{2\eta_{\max}}, \tilde T)$-identifiable, (2) $\Psi^{\epsilon/2}_{T-\tilde T}(\cF;\vecpi_{\tilde T})\neq \emptyset$ for any possible $\vecpi_{\tilde T}$ generated at step $\tilde T$ during the steering.
\end{restatable}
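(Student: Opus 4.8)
The plan is to prove Theorem~\ref{prop:distinguishable} by an explicit two-phase construction: an ``explore'' phase of length $\tilde T$ that identifies $f^*$ via MLE, followed by an ``exploit'' phase of length $T - \tilde T$ that uses the model-dependent near-optimal steering strategies guaranteed by condition (2). Concretely, let $\psi^{\Explore}$ be the strategy achieving the identifiability guarantee in Def.~\ref{def:identifiable}, and for each $f \in \cF$ let $\psi^{f}$ be a strategy in $\Psi^{\epsilon/2}_{T - \tilde T}(\cF; \vecpi_{\tilde T})$ — here I would need to be slightly careful, since such a strategy exists for each possible $\vecpi_{\tilde T}$, so really I am selecting, for each reachable $\vecpi_{\tilde T}$ and each $f$, a strategy $\psi^{f, \vecpi_{\tilde T}} \in \Psi^{\epsilon/2}_{T-\tilde T}(\cF; \vecpi_{\tilde T})$. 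The composite strategy $\psi$ runs $\psi^{\Explore}$ for steps $1,\dots,\tilde T$; then at step $\tilde T$ it computes $f_{\MLE}$ from the observed history $\{\vecpi_{t'}, \vecu_{t'}\}_{t'=1}^{\tilde T}$ and for the remaining steps follows $\psi^{f_{\MLE}, \vecpi_{\tilde T}}$ initialized at the realized $\vecpi_{\tilde T}$. Note this is well-defined as a history-dependent strategy precisely because $\psi$ can condition on the whole past, which is why we are in $\Psi$.

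Next I would bound $\Delta_{\psi, T}(f^*)$ for the true (but unknown) model $f^*$. Condition on the event $\cE := \{f_{\MLE} = f^*\}$. By condition (1) and Def.~\ref{def:identifiable}, $\Pr_{\psi^{\Explore}, f^*}[\cE] \geq 1 - \frac{\epsilon}{2\eta_{\max}}$. On the event $\cE$, the exploit phase runs $\psi^{f^*, \vecpi_{\tilde T}}$ starting from $\vecpi_{\tilde T}$ under the true dynamics $f^*$; since $\psi^{f^*, \vecpi_{\tilde T}} \in \Psi^{\epsilon/2}_{T - \tilde T}(\cF; \vecpi_{\tilde T})$, by Desideratum~\ref{des:steering_gap} (applied with $f = f^*$) we get $\EE[\max_\vecpi \eta^\goal(\vecpi) - \eta^\goal(\vecpi_{T+1}) \mid \cE, \vecpi_{\tilde T}] \leq \epsilon/2$. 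On the complement $\cE^c$ (probability $\leq \frac{\epsilon}{2\eta_{\max}}$), I bound the gap trivially by $\eta_{\max}$, using that $\eta^\goal$ takes values in $[0, \eta_{\max}]$ so the steering gap is at most $\eta_{\max}$. Combining via the law of total expectation:
\[
\Delta_{\psi, T}(f^*) \leq \Pr[\cE]\cdot \tfrac{\epsilon}{2} + \Pr[\cE^c]\cdot \eta_{\max} \leq \tfrac{\epsilon}{2} + \tfrac{\epsilon}{2\eta_{\max}}\cdot \eta_{\max} = \epsilon.
\]

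Since this bound holds for every candidate $f^* \in \cF$ playing the role of the true model — the argument only used that $f^* \in \cF$ and the two conditions, both of which are symmetric over $\cF$ — we conclude $\max_{f \in \cF} \Delta_{\psi, T}(f) \leq \epsilon$, hence $\psi \in \Psi^\epsilon_T(\cF; \vecpi_1)$ and the set is non-empty.

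The main obstacle I anticipate is bookkeeping around the quantifier ``for any possible $\vecpi_{\tilde T}$ generated at step $\tilde T$'': I must make sure that the exploit-phase strategy is chosen measurably/consistently as a function of the realized $\vecpi_{\tilde T}$ and of $f_{\MLE}$, and that plugging a fresh ``initial policy'' $\vecpi_{\tilde T}$ into $\Psi^{\epsilon/2}_{T - \tilde T}$ is legitimate — i.e., that the steering-gap guarantee is genuinely a statement about the process started from that policy with a fresh horizon of length $T - \tilde T$, independent of how $\vecpi_{\tilde T}$ was reached (this uses the Markovian structure of the steering MDP $M$ and the fact that $f$ has stationary transitions). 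A secondary subtlety is that the MLE in Def.~\ref{def:identifiable} is defined with respect to a specific exploration horizon and strategy; I should state explicitly that $\psi^{\Explore}$ is the argmax-achieving strategy from that definition, so the $1 - \delta$ guarantee with $\delta = \frac{\epsilon}{2\eta_{\max}}$ transfers directly. Everything else is a routine law-of-total-expectation splitting.
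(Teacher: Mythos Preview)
Your proposal is correct and follows essentially the same two-phase (explore-then-exploit) construction as the paper's proof, with the same law-of-total-expectation split on the event $\{f_{\MLE} = f^*\}$ yielding the $\tfrac{\epsilon}{2} + \tfrac{\epsilon}{2\eta_{\max}}\cdot\eta_{\max} = \epsilon$ bound. One minor simplification: since condition (2) gives a strategy in $\Psi^{\epsilon/2}_{T-\tilde T}(\cF;\vecpi_{\tilde T})$ that already achieves $\epsilon/2$-gap uniformly over all $f\in\cF$, your exploit strategy need not be indexed by $f_{\MLE}$ at all---a single $\psi^{\vecpi_{\tilde T}}$ suffices, as in the paper's write-up.
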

We conclude this section by noting that, by Thm.~\ref{thm:incentive_driven_agents_NPG}, the above condition (2) is realistic for NPG (or more general PMD) dynamics.
The proofs for all results in this section are deferred to Appx.~\ref{appx:unknown_model}.
\section{Learning (Approximately) Optimal Steering Strategy}\label{sec:empirical_algorithms}
In this section, we investigate how to solve Obj.~\eqref{obj:objective_function}.
Comparing with the episodic RL setting, the main challenge is {to learn a history-dependent policy}.
Since the history space grows exponentially in $T$, directly solving Obj.~\eqref{obj:objective_function} can be computationally intractable for large $T$.
Therefore, the main focus of this section is to design tractable algorithms to overcome this challenge.

As a special case, when the model is known, i.e. $\cF = \{f^*\}$, by the Markovian property, Obj.~\eqref{obj:objective_function} reduces to a normal RL objective, and a state-dependent steering strategy $\psi:\Pi\rightarrow\cU$ is already enough.
For completeness, we include the algorithm but defer to Alg.~\ref{alg:known_case} in Appx.~\ref{appx:alg_known_model}.
In the rest of this section, we focus on the general case $|\cF| > 1$.
In Sec.~\ref{sec:unknown_small_case}, we investigate the solutions when $|\cF|$ is small, and in Sec.~\ref{sec:unknown_large_case}, we study the more challenging case when $|\cF|$ is large.

\subsection{Small Model Class: Dynamic Programming with Model Belief State}\label{sec:unknown_small_case}
\textbf{A Partially Observable MDP Perspective}\quad 
In fact, we can interpret Obj.~\eqref{obj:objective_function} as learning the optimal policy in a POMDP, in which the hidden state is $(\vecpi_t, f)$, i.e. a tuple containing the policy and the hidden model $f$ uniformly sampled from $\cF$, and the mediator can only partially observe the policy $\vecpi_t$.
It is well-known that any POMDP can be lifted to the \emph{belief MDP}, where the state is the \emph{belief state} of the original POMDP.
Then, the optimal policy in the belief MDP is exactly the optimal history-dependent policy in the original POMDP \citep{ibe2013markov}.
In our case, for each step $t\in[T]$, the belief state is $(\vecpi_t, b_t)$, where $b_t:=[\Pr(f|\{\vecpi_{t'},\vecu_{t'}\}_{t'=1}^{t},\vecpi_t)]_{f\in\cF}$ is the ``model belief state'' defined to be the posterior distribution of models given the history of observations and actions.
When $|\cF|$ is small, the model belief state $b_t \in \cR^{|\cF|}$ is low dimensional and computable.
Learning $\psi^*$ is tractable by running any RL algorithm on the lifted MDP.
In Proc.~\ref{procedure:small_model_set}, we show how to steer in this setting.
We defer the detailed algorithm of learning such belief-state dependent strategy to Alg.~\ref{alg:unknown_finite_case} in Appx.~\ref{appx:alg_belief_state}.

\begin{framework}
    \textbf{Input}: Model Set $\cF$; Total step $T$;\\
    Solving Obj.~\eqref{obj:objective_function} by learning a belief state-dependent strategy $\psi^*_{\text{Belief}}$ by Alg.~\ref{alg:unknown_finite_case} with $\cF$ and $T$. \\
    Deploy $\psi^*_{\text{Belief}}$ to steer the real agents for $T$ steps.
    \caption{The Steering Procedure when $|\cF|$ is Small}\label{procedure:small_model_set}
\end{framework}

\subsection{Large Model Class: A First-Explore-Then-Exploit Framework}\label{sec:unknown_large_case}
When $|\cF|$ is large, the method in Sec.~\ref{sec:unknown_small_case} is inefficient since the belief state $b_t$ is high-dimensional.
In fact, the above POMDP interpretation implies the intractability of Obj.~\eqref{obj:objective_function} for large $|\cF|$: the number of hidden states of the POMDP scales with $|\cF|$.
Therefore, instead of exactly solving Obj.~\eqref{obj:objective_function}, we turn to the First-Explore-Then-Exploit (\texttt{FETE}) framework as stated in Procedure~\ref{procedure:large_model_set}.

The first $\tilde{T} < T$ steps are the exploration phase, where we learn and deploy an exploration policy $\psi^\Explore$ 
maximizing the probability of identifying the hidden model with the MLE estimator.
The remaining $T-\tilde{T}$ steps belong to the exploitation stage.
We first estimate the true model by the MLE with the interaction history with real agents.
Next, we learn an exploitation strategy to steer real agents for the rest $T-\tilde{T}$ steps by solving Obj.~\eqref{obj:objective_function} with $\cF =\{f_\MLE\}$, time $T-\tilde{T}$ and the initial policy $\vecpi_{\tilde{T} + 1}$, as if $f_\MLE$ is the true model.

\textbf{Justification for }\texttt{FETE}\quad 
We cannot guarantee that Desiderata~\ref{des:steering_gap}\&~\ref{des:PO} are achievable, because we do not exactly solve Obj.~\ref{obj:objective_function}.
However, if $\cF$ is $({\delta}/|\cF|, T_\cF^{\delta/|\cF|})$-identifiable (Def.~\ref{def:identifiable}) and we choose $\tilde{T} \geq T_\cF^{\delta/|\cF|}$, we can verify $\Pr(f_\MLE = f^*) \geq 1 - \delta$ in \texttt{FETE}.
Therefore, we can still expect the exploitation policy $\psi^\Exploit$ steer the agents to approximately maximize $\eta^\goal(\vecpi_{T+1})$ with reasonable steering cost for the rest $T - \tilde{T}$ steps.

\begin{framework}
    \textbf{Input}: Model Set $\cF$; Total step $T$; Exploration horizon $\tilde{T}$; \\
    \blue{/* ---------------------------------------- Exploration Phase ---------------------------------------- */} \\
    Learn an exploration strategy
    $\psi^{\Explore}\gets \arg\max_\psi \frac{1}{|\cF|}\sum_{f\in\cF} \EE_{\vecpi'_1,\vecu_1',...,\vecpi'_{\tilde T + 1}\sim\psi, f}[\mathbb{I}[f=\argmax_{f'\in\cF}\sum_{t=1}^{\tilde{T}}\log f'(\vecpi_{t+1}'|\vecpi'_t, \vecu'_t)]].$\\
    Deploy $\psi^{\Explore}$ to steer the real agents and collect $\{\vecpi_1,\vecu_1,...,\vecpi_{\tilde{T}}, \vecu_{\tilde T}, \vecpi_{\tilde{T} + 1}\}$ \\
    \blue{/* ---------------------------------------- Exploitation Phase ---------------------------------------- */} \\
    Estimate $f_\MLE \gets \argmax_{f\in\cF} \sum_{t=1}^{\tilde{T}}\log f(\vecpi_{t+1}|\vecpi_t, \vecu_t)$ \\ %
    Deploy $\psi^\Exploit \gets \arg\max_{\psi} \EE_{\psi, f_\MLE}[\beta \cdot \eta^\goal(\vecpi_{T-\tilde T +1}) - \sum_{t=1}^{T-\tilde{T}}  \eta^\cost(\vecpi_t, \vecu_t)|\vecpi_1=\vecpi_{\tilde{T} + 1}]$.
    \caption{The Steering Procedure when $|\cF|$ is Large (The \texttt{FETE} Framework)}\label{procedure:large_model_set}
\end{framework}

We conclude this section by highlighting the computational tractability of \texttt{FETE}.
Note that when computing $\psi^{\Exploit}$, we treat $f_{\MLE}$ as the true model, so an history-independent $\psi^{\Exploit}$ is enough.
Therefore, the only part where we need to learn a history-dependent strategy is in the exploration stage, and the maximal history length is at most $\tilde{T}$, which can be much smaller than $T$. Moreover, in some cases, it is already enough to just learn a history-independent $\psi^\Explore$ to do the exploration (for example, the model class in Example~\ref{example:one_step_diff}).
\textbf{Comparison with \citet{canyakmaz2024steering}}\quad
Both SIAR-MPC in \citet{canyakmaz2024steering} and our FETE (Procedure~\ref{procedure:large_model_set}) adopt a first-explore-then-exploit structure. We examine the differences between two algorihms from two aspects: exploration strategy and model estimation strategy.
Regarding exploration strategy, SIAR-MPC uses noise-based random exploration, whereas we adopt a more strategic approach, which uses the identification success rate as a signal to learn the exploration policy. Empirical results in Sec.~\ref{sec:experiments_unknown} demonstrate the higher efficiency of our methods.
In terms of model estimation strategy, SIAR-MPC estimates the hidden model by solving a constrained regression problem (Eq. (8) in \citet{canyakmaz2024steering}), while we solve a MLE objective. 
In principle, our MLE objective is more general and can recover the regression problem in SIAR-MPC if choosing $\mathcal{F}$ to be the function class including Gaussian noise perturbed dynamics with side-information constraints introduced in \citet{canyakmaz2024steering}.
%

%
%
%

%
%
%

%

%
%
%
%
%

%
\iffalse
\begin{algorithm}
    %
    \textbf{Input}: $\cF$; Exploration horizon $\tilde{T}$; 
    %
    Initialized $\psi^{1}:=\{\psi^{1}_t\}_{t=1}^T$;\\
    %
    \For{$k=1,2,...,K$}{
        Sample $f \sim \cF$;~Initialize $\vecpi^k_1 = \vecpi_1$; Sample trajectories with $\psi^k$ from simulator $f$: \\
        %
            \quad $\forall t\in[\tilde{T}]$ \quad $u_t^k\sim \psi_{t}^{k}(\cdot|\vecpi_1^k,u_1^k,...,\vecpi_{t-1}^k,u_{t-1}^k,\vecpi_{t}^k),~ \vecpi_{t+1}^k \sim f(\cdot|\vecpi_t^k,r+u_t^k)$\\
            \qquad\qquad\qquad $\eta_{\tilde{T} + 1}^k \gets \beta_{\Explore} \cdot \Pr(M|\vecpi_1^k,u_1^k,...,\vecpi_{\tilde{T}}^k,u_{\tilde{T}}^k,\vecpi_{\tilde{T}+1}^k) 
            %
            $\\
        %
        %
        %
        %
        %
        %
        %
        Optimize $\psi^{k+1} \gets \texttt{RLAlgorithms}(\psi^{k}, \{(\vecpi_t^k,  \vecu_t^k)_{t=1}^{\tilde{T}} \cup \{\vecpi^k_{\tilde{T} + 1}, \eta_{\tilde{T} + 1}^k\}\} )$ \\
        %
        %
        %
        %
    }
    %
    %
    \Return{$\psi^{\Explore} := \{\psi_{t}^{K}\}_{t=1}^{\tilde{T}}$}
    %
    %
        %
            %
            %
            %
            %
        %
    %
    %
    %
    %
    %
    %
    %
    %
    %
    %
    \caption{A First-Explore-Then-Exploit Framework (Exploration Policy Learning)}
\end{algorithm}
%

%
%
%
%
%
%
%
%
%
%
%
%
%
%
\fi

%
%

%
%

%

\section{Experiments}\label{sec:experiments}
In this section, we discuss our experimental results.
For more details of all experiments in this section (e.g. experiment setup and training details), we defer to Appx.~\ref{appx:experiment_details}.
The steering horizon is set to be $T = 500$, and all the error bar shows 95\% confidence level.
We denote $[x]^+:=\max\{0,x\}$.
\subsection{Learning Steering Strategies with Knowledge of $f^*$}\label{sec:experiments_known}

\iffalse
\begin{wrapfigure}{r}{0.4\textwidth}
    %
    %
        \centering
        \includegraphics[scale=0.1]{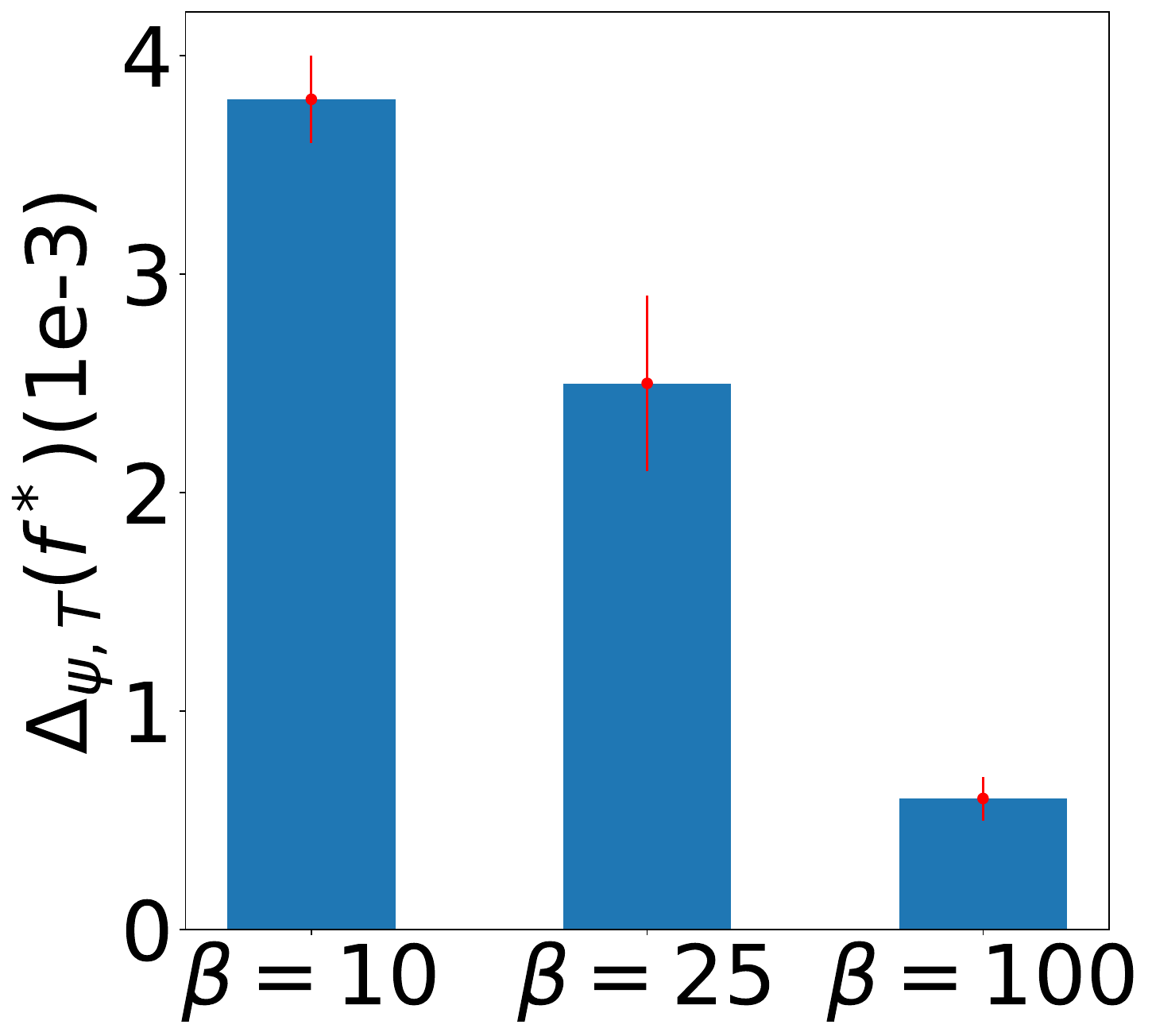}
        \includegraphics[scale=0.1]{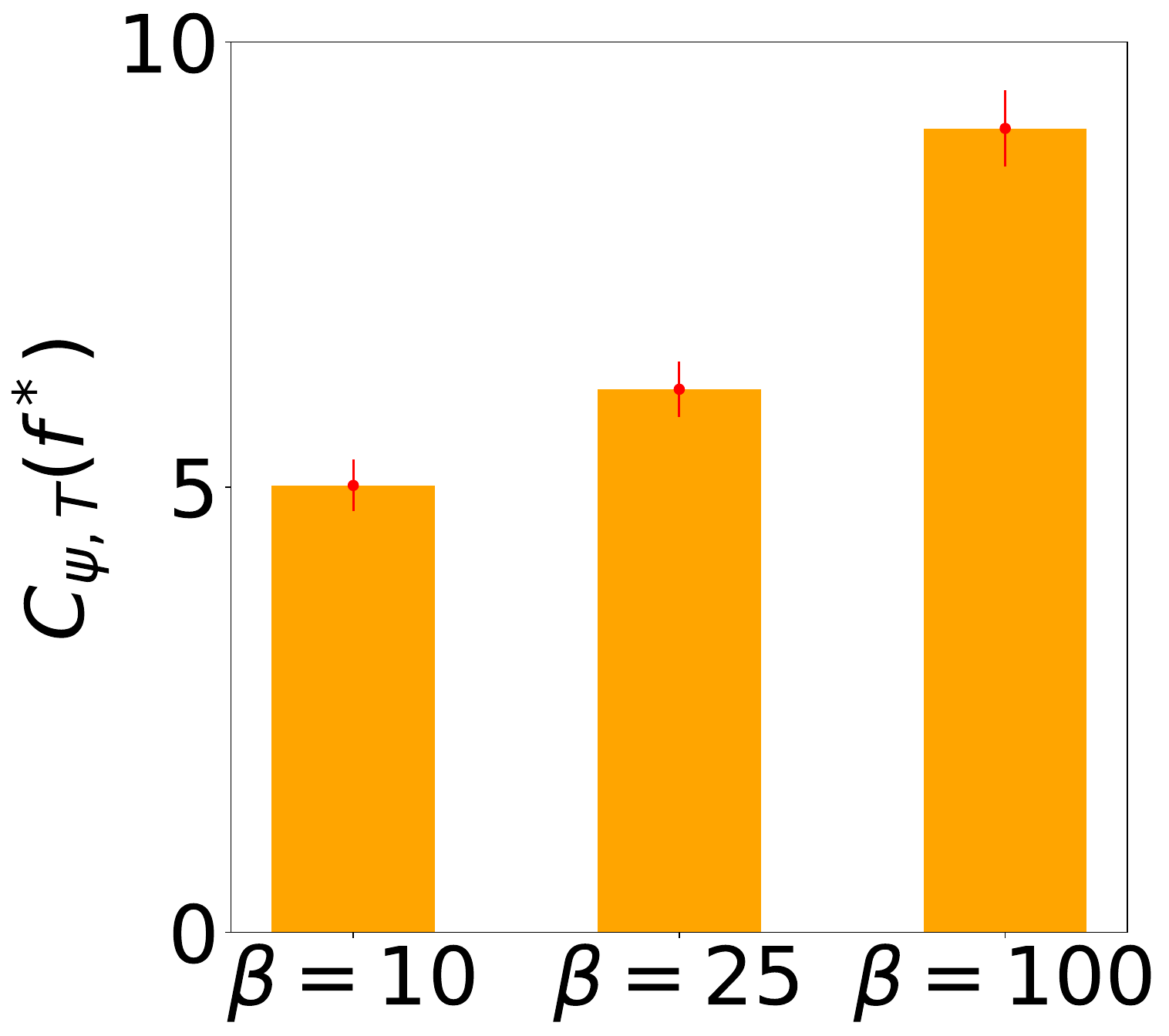}
        %
        \vspace{0.23cm}
        \caption{Trade-off between Accuracy and Cost (averaged over 25 different initial $\vecpi_1$, see Appx.~\ref{appx:initialization}).}
    %
    %
    %
    %
    %
    %
    %
    %
    %
    %
    %
    %
    %
    %
    %
    %
    %
    %
    \label{fig:comp_w_wo_steering}
\end{wrapfigure}
%
%
%
%
%
%
%
%
%
%
%
%
%
%
%
\fi
\textbf{Normal-Form Stag Hunt Game}\quad
In Fig.~\ref{fig:StagHunt}-(b), we compare the agents' dynamics with/without steering, where the agents learn to play the Stag Hunt Game in Fig.~\ref{fig:StagHunt}-(a).
We report the experiment setup here.
Both agents follow the exact NPG (Def.~\ref{def:NPG} with $\hat{A}^\vecpi = A^\vecpi$) with fixed learning rate $\alpha=0.01$.
For the steering setup, we choose the total utility as $\eta^\goal$, and use PPO to train the steering strategy (one can choose other RL or control algorithms besides PPO).
We also conduct experiments in a representative zero-sum game `Matching Pennies', which we defer the details to Appx.~\ref{appx:experiment_known_model}.

\textbf{Grid World Stag Hunt Game: Learning Steering Strategy with Observations on Agents' Behaviors}\quad
In the previous experiments, we consider the direct parameterization and the state space $\cX = \Pi \subset \mR^4$ has low dimension.
In real-world scenarios, the policy space $\Pi$ can be extremely rich and high-dimensional if the agents consider neural networks as policies. In addition, the mediator may not get access to the agents' exact policy $\vecpi$ because of privacy issues.
\begin{wrapfigure}{r}{0.5\textwidth}
    \centering
        \includegraphics[scale=0.6]{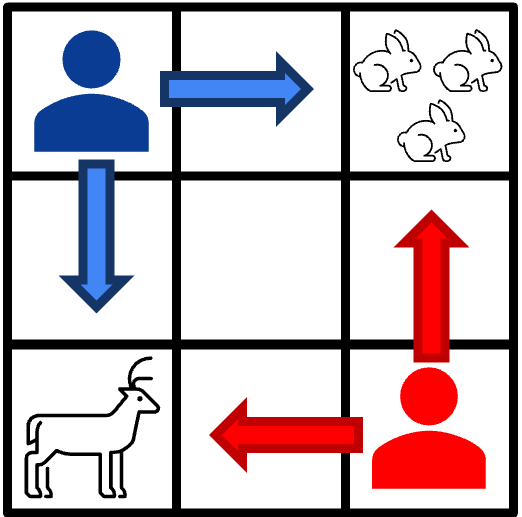}
    \hspace{1em}
    \includegraphics[scale=0.3]{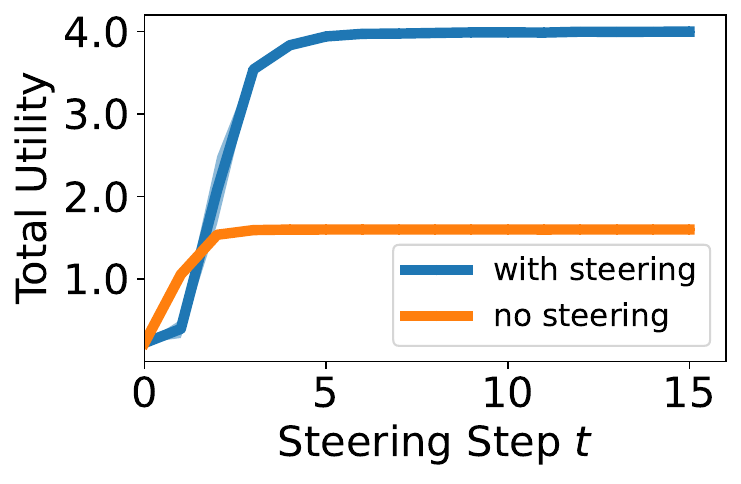}
    \caption{Grid-World Version of Stag Hunt Game. \textbf{Left}: Illustration of game. \textbf{Right}: The performance of agents with/without steering. Without steering, the agents converge to go for hares, which has sub-optimal utility.
    Under our learned steering strategy, the agents converge to a better equilibrium and chase the stag.}
    \label{fig:staghunt_grid_world}
\end{wrapfigure}
This motivates us to investigate the possibility of steering agents with observations on agents' behavior only (e.g. trajectories of agents in a game $G$), instead of the full observation of $\vecpi$.
In Appx.~\ref{appx:POMDP_Extension}, we justify this setup and formalize it as a partially observable extension of our current framework.
We consider the evaluation in a grid-world version of the Stag Hunt Game as shown in Fig.~\ref{fig:staghunt_grid_world}-(a).
In this setting, the state space in game $G$ becomes pixel-based images, and both agents (\blue{blue} and \red{red}) will adopt Convolutional Neural Networks (CNN) based policies with thousands of parameters and update with PPO.
We train a steering strategy, which only takes the agents' recent trajectories as input to infer the steering reward.
As shown in Fig.~\ref{fig:staghunt_grid_world}-(b), without direct usage of the agents' policy, we can still train a steering strategy towards desired solution.

\subsection{Learning Steering Strategies without Knowledge of $f^*$}\label{sec:experiments_unknown}
\textbf{Small Model Set $|\cF|$: Belief State Based Steering Strategy}\quad
In this part, we evaluate Proc.~\ref{procedure:small_model_set} designed for small $\cF$.
We consider the same normal-form Stag Hunt game and setup as Sec.~\ref{sec:experiments_known}, while the agents update by the NPG with a random learning rate $\alpha = [\xi]^+$, where $\xi \sim \mathcal{N}(\mu, 0.3^2)$.
Here the mean value $\mu$ is unknown to the mediator, and we consider a model class $\cF:=\{f_{0.7}, f_{1.0}\}$ including two possible values of $\mu\in\{0.7, 1.0\}$.
We report our experimental results in Table~\ref{table:belief_state_based_strategy}.

\begin{table}[h]
    \caption{\textbf{Evaluation for Proc.~\ref{procedure:small_model_set}} 
    (Averaged over 25 different initial $\vecpi_1$, see Appx.~\ref{appx:initialization}). 
    }\label{table:belief_state_based_strategy}
    \centering
    \begin{subtable}{.45\textwidth} 
    \centering
    \caption{Performance in $f_{0.7}$}
    \begin{tabular}{ccc}
        \hline
        & \makecell{$p(\Delta_{\psi,T}\leq \epsilon)$} & $C_{\psi,T}$  \\ \hline
        $\psi^*_{0.7}$ & 0.99 $\pm$ 0.01 & 10.6 $\pm$ 0.3 \\ %
        $\psi^*_{1.0}$ & \red{0.13 $\pm$ 0.02} & 7.6 $\pm$ 0.2 \\ %
        $\psi^*_{\text{Belief}}$ & 0.87 $\pm$ 0.05 & 10.5 $\pm$ 0.4 \\ \hline
    \end{tabular}
    \end{subtable}
    \hspace{1.0cm}
    \begin{subtable}{.45\textwidth}
    \centering
    \caption{Performance in $f_{1.0}$}
    \begin{tabular}{ccc}
        \hline
            & \makecell{$p(\Delta_{\psi,T}\leq \epsilon)$} & $C_{\psi,T}$ \\ \hline
        $\psi^*_{0.7}$  & 1.00 $\pm$ 0.00 & \red{8.2 $\pm$ 0.2} \\ 
        $\psi^*_{1.0}$ & 1.00 $\pm$ 0.00 & 5.6 $\pm$ 0.2 \\ %
        $\psi^*_{\text{Belief}}$ & 0.99 $\pm$ 0.01 & 6.1 $\pm$ 0.3 \\ \hline
    \end{tabular}
    \end{subtable}
\end{table}
Firstly, we demonstrate the suboptimal behavior if the mediator ignores the model uncertainty and just randomly deploys the optimal strategy of $f_{0.7}$ or $f_{1.0}$.
To do this, we train the (history-independent) optimal steering strategy by Alg.~\ref{alg:known_case}, as if we know $f^*=f_{0.7}$ (or $f^*=f_{1.0}$), which we denote as $\psi^*_{0.7}$ (or $\psi^*_{1.0}$).
To meet with our Desideratum~\ref{des:steering_gap}, we first set the accuracy level $\epsilon = 0.01$, and search the minimal $\beta$ so that the learned steering strategy can achieve $\epsilon$-steering gap (see Appx.~\ref{appx:exp_unknown_small_set}).
Because of the difference in $\mu$, we have $\beta = 70$ and $\beta=20$ in training $\psi^*_{0.7}$ and $\psi^*_{1.0}$, respectively, and empirically, we observe that $\psi^*_{0.7}$ requires much larger steering reward than $\psi^*_{1.0}$.
As we marked in \red{red} in Table~\ref{table:belief_state_based_strategy}-(a) and (b), because of the difference in the steering signal, $\psi^*_{0.7}$ consumes much higher steering cost to achieve the same accuracy level in $f_{1.0}$, and $\psi^*_{1.0}$ may fail to steer agents with $f_{0.7}$ to the desired accuracy.
Next, we train another strategy $\psi^*_{\text{Belief}}$ via Alg.~\ref{alg:unknown_finite_case}, which predicts the steering reward based on both the agents' policy $\vecpi$ and the belief state of the model.
As we can see, $\psi^*_{\text{Belief}}$ can almost always achieve the desired $\epsilon$-steering gap with reasonable steering cost.

\textbf{Large Model Set $|\cF|$: The \texttt{FETE} Framework}\quad
In this part, we evaluate the \texttt{FETE} framework (Proc.~\ref{procedure:large_model_set} in Sec.~\ref{sec:unknown_large_case}).
We consider an cooperative setting with $N=10$ players. Each agent has two actions \texttt{A} and \texttt{B}, and the mediator only receives non-zero utility when all the agents cooperate together to take action $\texttt{A}$, i.e. $\eta^\goal(\vecpi) := \prod_{n=1}^N\pi^n(\texttt{A})$.
The agents do not have intrinsic rewards ($\vecr=0$), but the mediator's can steer them to maximize its own utility by providing additional steering rewards.
\begin{wrapfigure}{r}{0.7\textwidth}
    \centering
        \includegraphics[scale=0.23]{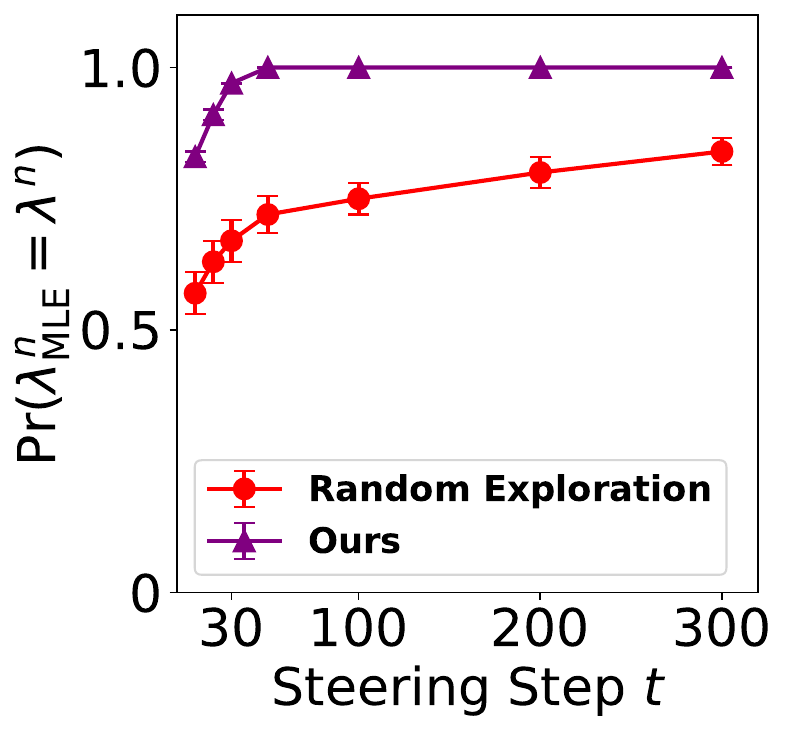}
        \includegraphics[scale=0.23]{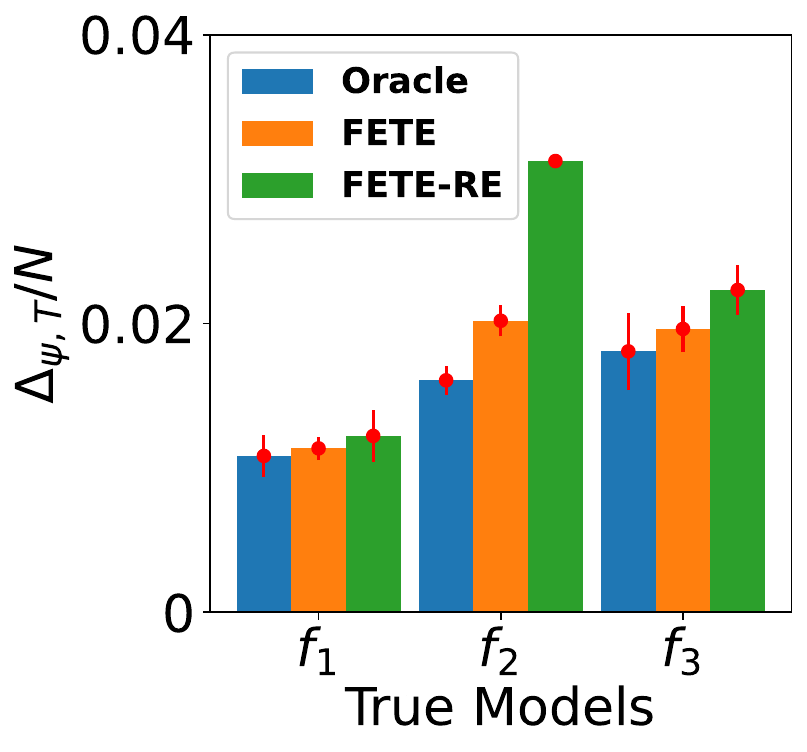}
        \includegraphics[scale=0.23]{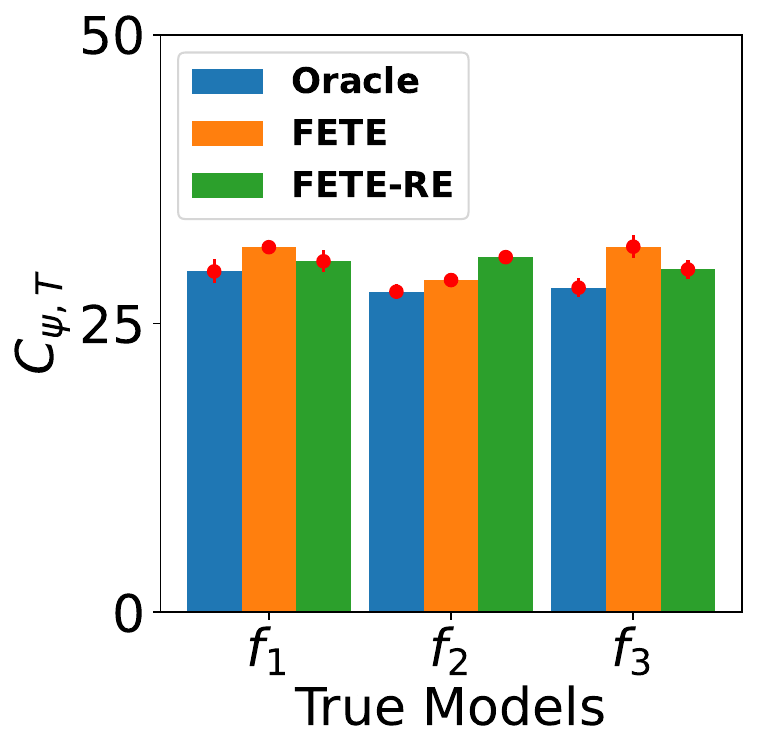}
    \caption{\textbf{Evaluation for Proc.~\ref{procedure:large_model_set}.}
    \textbf{Left}: Accuracy of MLE estimator ($\lambda^n_{\MLE}$) after doing exploration for $t$ steps.
    Ours can achieve near 100\% accuracy after 30 steering steps, while the random exploration takes more than 300 steps.
    \textbf{Middle and Right}: Average steering gap and steering cost of \texttt{Oracle}, \texttt{FETE} and \texttt{FETE-RE}. Our \texttt{FETE} achieves competitive performance comparing with \texttt{Oracle}, and significantly outperforms \texttt{FETE-RE} (adaption of SIAR-MPC \citep{canyakmaz2024steering} to our setting) in terms of steering gap.
    }\label{fig:exploration}
\end{wrapfigure}
We consider ``avaricious agents'' with varying degrees of greediness, who tend to decrease the learning rates if the payments by mediator are high.
Consequently, they require more steering steps to converge to the desired policies, potentially earning more incentive payments from the mediators.
More concretely, the learning rate of agent $n$ is $\alpha_n := [\xi_n]^+ $ with $\xi_n \sim \mathcal{N}(1.5 - \beta^n \cdot [V_{|\vecu}^{n,\vecpi} - \lambda^n]^+, 0.5^2)$, where $\beta^n > 0$ is a scaling factor and $\lambda^n > 0$ is the threshold to exhibit avaricious behavior.
In our experiments, the model uncertainty comes from multiple possible realization of $\lambda^n \in \{0.25, 0.75, +\infty\}$, which results in an extremely large model class $\cF$ with $|\cF| = 3^{10}$. Here $\lambda^n = +\infty$ corresponds to normal agents whose learning rates are stable.
The mediator does not know the agents' types $\{\lambda^n\}_{n\in[N]}$ in advance, and it can only observe one learning rate samples $\{\alpha_n\}_{n\in[N]}$ of agents per iteration $t\in[T]$ and estimate the true types from those samples.
We consider the fixed initial policy with $\forall n\in[N],~\pi^n_1(\texttt{A})=1-\pi^n_1(\texttt{B})=1/3$, and set the maximal steering reward $U_{\max} = 1.0$.

To understand the exploration challenge, note that, during the exploration phase, if the steering signal $\vecu$ is not strong enough, i.e. $V^{n,\vecpi}_{|\vecu} < \lambda^n$, the mediator may fail to distinguish those avaricious agents from the normal ones, because they behave exactly the same.
Such failure can lead to undesired outcomes in the exploitation phase: higher steering rewards can accelerate the convergence of normal agents, but can lead to larger steering gaps for avaricious agents.

We provide the evaluation results in Fig.~\ref{fig:exploration}.
First, we compare the exploration efficiency. We can see the clear advantage of our strategic exploration in \texttt{FETE} (Procedure~\ref{procedure:large_model_set}) compared with noise-based random exploration \citep{canyakmaz2024steering}.
Next, we compare the steering gaps and costs of three methods: (i) \texttt{FETE}; (ii) \texttt{FETE-RE};
(iii) \texttt{Oracle} -- if the mediator knows $f^*$ in advance and solving Obj.~\eqref{obj:objective_function} with $\cF=\{f^*\}$.
Here \texttt{FETE-RE} can be regarded as adaption of SIAR-MPC \citep{canyakmaz2024steering} to our case by replacing strategic exploration in \texttt{FETE} with random exploration (see Appx.~\ref{appx:adaption_explantion} for more explanation).
We choose exploration horizon $\tilde{T}=30$ suggested by the previous exploration experiment, and report results for three realizations of $f^*\in\{f_1,f_2,f_3\}$.
For $f_1$ and $f_2$, all the agents share $\lambda^n = 0.75$ and $+\infty$, respectively.
$f_3$ is a mixed setup where $\lambda^n = 0.75$ for $1\leq n\leq 5$ and $\lambda^n = +\infty$ for $5< n\leq 10$.
As we can see, comparing with \texttt{Oracle}, both the steering gap and cost of our \texttt{FETE} are competitive.
Moreover, thanks to our strategic exploration method, \texttt{FETE} exhibits significiant advantage over \citet{canyakmaz2024steering} in terms of steering gaps.

\section{Conclusion}
In this paper, we introduce the problem of steering Markovian agents under model uncertainty. We provide theoretical foundations for this problem by formulating a novel optimization objective and providing existence results. Moreover, we design several algorithmic approaches suitable for varying degrees of model uncertainty in this problem class. 
We test their performances in different experimental settings and show their effectiveness.
Our work opens up avenues for compelling open problems that merit future investigation.
Firstly, future work could aim to identify superior optimization objectives that guarantee strictly better performances in terms of steering gap and cost than ours.
Secondly, when applying our strategies in real-world applications, constraints on the steering reward budget could be added. 
Finally, the framework could be generalized to permit non-Markovian agents.

\section*{Acknowledgement}
The work is supported by ETH research grant and Swiss National Science Foundation (SNSF) Project Funding No. 200021-207343 and SNSF Starting Grant. VT is supported by an ETH AI Center Doctoral Fellowship. HHN is supported by the Swiss National Science Foundation under Eccellenza Grant `Markets and Norms,' as well as under NCCR Automation Grant 51NF40-180545.

\section*{Reproducibility Statement}
The code of all the experiments in this paper and the instructions for running can be found in \url{https://github.com/jiaweihhuang/Steering_Markovian_Agents}.

\bibliographystyle{apalike}
\bibliography{references}

\newpage
\tableofcontents
\newpage
\appendix

\section{Frequently Used Notations}\label{appx:notations}
\begin{table}[h]
    \centering
    \def\arraystretch{1.2}
    \begin{tabular}{ll}
        \hline
        \textbf{Notation} & \textbf{Description} \\
        \hline
        $G$ & A finite-horizon general-sum Markov Game \\
        $N$ & The number of agents \\
        $\cS,\cA$ & State space and action space of the game $G$ \\
        $H$ & The horizon of the game $G$ \\
        $\mP$ & Transition function of the game $G$ \\
        $r$ & Reward function of the game $G$ \\
        $\vecpi$ & The agents' policy (collection of policies of all agents) \\
        $\vecpi_1$ & The initial policy \\
        $M$ & A finite-horizon Markov Decision Process (the steering MDP) \\
        $\cX,\cU$ & State space and action space of $M$ \\
        $T$ & The horizon of $M$ (i.e. the horizon of the steering dynamics) \\
        $\mT$ & (Stationary) Transition function of $M$\\
        $\eta^\cost$ & The steering cost function of $M$ \\
        $\psi$ & The history-dependent steering strategy by mediator \\
        $u$ (or $u_t$ for a specific horizon $t$) & The steering reward function \\
        $U_{\max}$ & The upper bound for steering reward \\
        $f$ & Agents learning dynamics ($\mT = f$ in the steering MDP) \\
        $\eta^\goal$ & The goal function of $M$ \\
        $\cF$ & The model class of agents dynamics (with finite candidates) \\
        $\beta$ & Regularization coefficient in Obj.~\eqref{obj:objective_function} \\
        $C_{\psi,T}(f)$ & The total expected steering cost $\EE_{\psi, f}[\sum_{t=1}^T\eta^\cost(\vecpi_t, \vecu_t)]$\\
        $\Delta_{\psi,T}(f)$ & The steering gap: $\EE_{\psi, f}[\max_\vecpi\eta^{\goal}(\vecpi) - \eta^\goal(\vecpi_{T+1})]$\\
        $\Psi$ & The collection of all history dependent policies \\
        $\Psi^\epsilon$ as a short note of $\Psi^\epsilon_{T,U_{\max}}(\cF;\vecpi_1)$ & $\{\psi\in\Psi|\EE_{\psi, f}[\max_\vecpi\eta^{\goal}(\vecpi) - \eta^\goal(\vecpi_{T+1})|\vecpi_1]\leq\epsilon\}$ \\
        $Q^{n,\vecpi}_{h|\vecr+\vecu},V^{n,\vecpi}_{h|\vecr+\vecu},A^{n,\vecpi}_{h|\vecr+\vecu}$ & \makecell{The Q-value, V-value and advantage value functions for agent $n$} \\
        $f_{\MLE}$ & The Maximal Likelihood Estimator (introduced in Def.~\ref{def:identifiable}) \\
        $b_t$   & Model belief state $[\Pr(f|\{\vecpi_{t'},u_{t'}\}_{t'=1}^{t},\vecpi_t)]_{f\in\cF}\in\cR^{|\cF|}$\\
        $\psi^{\text{Explore}}/ \psi^{\text{Exploit}}$ & The exploration/exploitation policy in \texttt{FETE} framework. \\
        $O(\cdot),\Omega(\cdot),\Theta(\cdot),\tilde{O}(\cdot),\tilde{\Omega}(\cdot),\tilde{\Theta}(\cdot)$ & Standard Big-O notations, $\tilde{(\cdot)}$ omits the log terms.\\
        \hline
        
    \end{tabular}
\end{table}

\section{Missing Details in the Main Text}\label{appx:missing_details}

\subsection{A Real-World Scenario that Can be Modeled as a Stag Hunt Game}\label{appx:staghunt_example}
As a real-world example, the innovation adaption can be modeled as a (multi-player) Stag Hunt game. 
Consider a situation involving a coordination problem where people can choose between an inferior/unsustainable communication or transportation technology that is cheap (the \texttt{Gather} action) and a superior technology that is sustainable but more expensive (the \texttt{Hunt} action).
If more and more people buy products by the superior technology, the increasing profits can lead to the development of that technology and the decrease of price.
Eventually, everyone can afford the price and benefit from the sustainable technology.
In contrast, if people are trapped by the products of the inferior technology due to its low price, the long-run social welfare can be sub-optimal.
The mediator's goal is to steer the population to adopt the superior technology.

\subsection{Additional Related Works}\label{appx:other_related_works}

We first complements the comparison with \citet{zhang2023steering} in Sec.~\ref{sec:related_work} by noting a minor but worth to be mentioned difference between our setting and \citep{zhang2023steering} in terms of incentive schemes.
While they consider that the mediator influences the agents' learning dynamics through a scalar payment function, we operate with additional steering rewards in a multi-dimensional reward vector space.
As a result, there may not exist direct ways to translate the steering strategies between both settings, especially in the bandit feedback setting where only the sampled actions of agents can be observed \citep{zhang2023steering}.

\textbf{Opponent Shaping}\quad
In the RL literature a line of work focus on the problem of opponent shaping, where agents can influence each others learning by handing out rewards \citep{foerster2018learning,Yang2020Learning,willi2022cola,lu2022model,Willis2023Resolving,zhao2022proximal}. 
Although the ways of influencing agents are similar to our setting, we study the problem of a mediator that acts outside the Markov Game and steers all the agents towards desired policies, while in opponent shaping the agents themselves learn to influence each other for their own interests.

\paragraph{Learning Dynamics in Multi-Agent Systems}
In multi-agent setting, it is an important question to design learning dynamics and understand their convergence properties \citep{hernandez2017survey}.
Previous works has established near-optimal convergence guarantees to equilibra \citep{daskalakis2021near,cai2024near}.
When the transition model of the multi-agent system is unknown, many previous works have studied how to conduct efficient exploration and learn equilibria under uncertainty \citep{jin2021v,bai2020near,zhang2021multi,leonardos2021global,yardim2023policy,huang2024statistical,huang2024model}.
However, most of these results only have guarantees on solving an arbitrary equilibrium when multiple equilibria exists, and it is unclear how to build algorithms based on them to reach some desired policies to maximize some goal functions.

\paragraph{Mathematical Programming with Equilibrium Constraints (MPEC)}
MPEC generalises bilevel optimization to problems where the lower level consists of solving an equilibrium problem \citep{Luo1996Mathematical}. \citep{Li2020End,liu2022inducing,Wang2021Coordinating,Wang2023Differentiable,Yang2022Adaptive}. These works consider variants of an MPEC and present gradient based approaches, most of which rely on computing hypergradients via the implicit function theorem and thus strong assumptions on the lower level problem, such as uniqueness of the equilibrium. Most games fail to satisfy such constraints. 
In contrast, our work makes no assumptions on the equilibrium structure and instead mild assumptions on the learning dynamics.

\paragraph{Game Theory and Mechanism Design}
In Game Theory, a setup such as ours can be modelled as a Stackelberg game. Several works have considered finding Stackelberg equilibria using RL \citep{Gerstgrasser2023Oracles,Zhong2024Can} or gradient-based approaches \citep{Fiez2020Implicit}. \citet{Deng2019Strategizing} showed how agents can manipulate learning algorithms to achieve more reward, as if they were playing a Stackelberg game. Related problems are implementation theory \citep{Monderer2011K} and equilibrium selection \citep{Harsanyi1992general}.
Moreover, the field of mechanism design has been concerned with creating economic games that implement certain outcomes as their equilibria. Several recent works have considered mechanism design on Markov Games \citep{Curry2024Automateda,baumann2020adaptive, Guo2023MESOB}. In the case of congestion games, mechanisms have been proposed to circumvent the price of anarchy \citep{Balcan2013Circumventing,Paccagnan2021Congestion,Roughgarden2004Bounding}, i.e. equililbria with low social welfare.

There is also a line of work has focused on control strategies for evolutionary games \citep{Gong2022Limit,Paarporn2018Optimal}. However, the game and learning dynamics differ significantly from our setting. For a full survey of control-theoretic approaches, we refer the reader to \citet{Ratliff2019Perspective,Riehl2018survey}.

\paragraph{Bilevel Reinforcement Learning} Bilevel RL considers the problem of designing an MDP---by for example changing the rewards---with a desireable optimal policy. Recently, several works have studied gradient-based approaches to find such good MDP configurations \citep{chen2022adaptive,chakraborty2023parl,shen2024principled,Thoma2024Stochastic}. While similar in some regards, in this setting we assume the lower level is a Markov Game instead of just an MDP. Moreover, our aim is not to design a game with a desireable equilibrium from scratch, but to take a given game and agent dynamics and steer them with minimal additional rewards to a desired outcome within a certain amount of time. Therefore our upper-level problem is a strategic decision-making problem, solved by RL instead of running gradient descent on some parameter space.

\paragraph{Episodic RL and Non-Episodic RL}
Most of the existing RL literature focus on the episodic learning setup, where the entire interaction history can be divided into multiple episodes starting from the same initial state distribution\citep{dann2015sample, dann2017unifying}.
Comparing with this setting, our finite-horizon non-episodic setting is more challenging because the mediator cannot simply learn from repeated trial-and-error.
Therefore, the learning criterions (e.g. no-regret \citep{azar2017minimax, jin2018q} or sample complexity \citep{dann2015sample}) in episodic RL setting is not suitable in our case, which targets at finding a near-optimal policy in maximizing return.
This motivates us to consider the new objective (Obj.~\eqref{obj:objective_function}).

To our knowledge, most of the previous works use ``non-episodic RL'' to refer to the learning in infinite-horizon MDP.
One popular setting is the infinite-horizon MDPs with stationary transitions, where people consider the discounted \citep{schulman2017proximal, dong2019q} or average return \citep{auer2008near, wei2020model}.
The infinit-horizon setting with non-stationary dynamics is known as the continual RL \citep{khetarpal2022towards, abel2024definition}, where the learners ``never stops learning'' and continue to adapt to the dynamics.
Since we focus on the steering problem with \emph{fixed and finite} horizon, the methodology in those works cannot be directly applied here.

Most importantly, we are also the first work to model the steering problem as a RL problem.

\subsection{A Brief Introduction to Markov Decision Process}\label{appx:MDP}
A finite-horizon Markov Decision Process is specified by a tuple $M := \{x_1, T, \cX, \cU, \mT, (\eta,\eta^{\terminal})\}$, where $x_1$ is the fixed initial state, $T$ is the horizon length, $\cX$ is the state space, $\cU$ is the action space.
Besides, $\mT:=\{\mT_t\}_{t\in[T]}$ with $\mT_t:\cX\times\cU\rightarrow\Delta(\cX)$ denoting the transition function\footnote{In this paper, we focus on stationary transition function, i.e. $\mT_1=...=\mT_T$.}, $\eta:=\{\eta_t\}_{t\in[T]}$ with $\eta_t:\cX\times\cU\rightarrow[0,1]$ is the normal reward function and $\eta^{\terminal}:\cX\times\cU\rightarrow[0,1]$ denotes the additional terminal reward function.
In this paper, without further specification, we will consider history dependent non-stationary policies $\Psi:=\{\psi:= \{\psi_1,...,\psi_T\}|\forall t\in[T],\psi_t:(\cX\times\cU)^{t-1}\times\cX\rightarrow\Delta(\cU)\}$. Given a $\psi \in \Psi$, an episode of $M$ is generated by:
    $
    \forall t\in[T],~ \vecu_t \sim \psi_t(\cdot|\{x_{t'},\vecu_{t'}\}_{t'=1}^{t-1},x_t),~\eta_t \gets \eta_t(x_t, \vecu_t),~x_{t+1}\sim\mT_t(\cdot|x_t, \vecu_t);~\eta^{\terminal} \gets \eta^{\terminal}(x_{T+1});
    $

\subsection{Algorithm for Learning Optimal (History-Independent) Strategy when $f^*$ is Known}\label{appx:alg_known_model}
\begin{algorithm}
    \textbf{Input}: Model Set $\cF := \{f^*\}$; Initial steering strategy $\psi^1:=\{\psi^1_t\}_{t\in[T]}$; Regularization coefficient $\beta$; Iteration number $K$; \\
    \For{$k=1,2,...,K$}{
        Agents initialize with policy $\vecpi^k_1$. \\
        Sample trajectories with $\psi_{\zeta_k}$, $\forall t\in[T]$:
        \begin{align*}
            \vecu_t^k\sim \psi^k_t(\cdot|\vecpi_t^k),~\vecpi_{t+1}^k \sim f^*(\cdot|\vecpi_t^k,\vecr+\vecu_t^k),~\eta_t^k = -\eta^\cost(\vecpi_t^k, \vecu_t^k).
        \end{align*}\\
        Update $\psi^{k+1} \gets \texttt{RLAlgorithm}(\psi^k, \{\vecpi_t^k, \vecu_t^k, \eta_t^k\}_{t=1}^T \cup \{\beta \cdot \eta^\goal(\vecpi_{T+1}^k)\})$.\\
    }
    Output $\hat\psi^* \gets \psi_{\zeta_K}$.
    \caption{Learning with Known Steering Dynamics}\label{alg:known_case}
\end{algorithm}

\subsection{Algorithm for Learning Belief-State Dependent Steering Strategy}\label{appx:alg_belief_state}
\begin{algorithm}[h]
    \textbf{Input}: Model Set $\cF$; Regularization coefficient $\beta$; Initial steering strategy $\psi^1:=\{\psi^1_t\}_{t=1}^T$; Iteration number $K$; \\
    \For{$k=1,2,...,K$}{
        Sample $f\sim \text{Uniform}(\cF)$;~Initialize $\vecpi^k_1=\vecpi_1$. \\
        Sample trajectories with $\psi^k$ from simulator of $f$:\\
        \qquad $\forall t\in[T]$ \quad $b_{t}^k := \Pr(\cdot|\vecpi_1^k,\vecu_1^k,...,\vecpi^k_{t-1},\vecu_{t-1}^k,\vecpi^k_{t}),\quad \vecu_t^k\sim \psi^k_t(\cdot|b_t^k,\vecpi_t^k),
        $\\
        \qquad\qquad \qquad~ $\vecpi_{t+1}^k \sim f(\cdot|\vecpi_t^k,\vecr+\vecu_t^k),\quad \eta_t^k \gets -\eta^\cost(\vecpi_t^k, \vecu_t^k)$\\
        Update $\psi^{k+1} \gets \texttt{RLAlgorithm}(\psi^k, \{(\vecpi_t^k,b_t^k), \vecu_t^k, \eta_t^k\}_{t=1}^T \cup \{\beta \cdot \eta^\goal(\vecpi_{T+1}^k)\})$.\\
    }
    \Return{$\hat{\psi}^* := \psi^K = \{\psi^K_t\}_{t=1}^T$}
    \caption{Solving Obj.~\eqref{obj:objective_function} by Learning Belief State-Dependent Strategy}\label{alg:unknown_finite_case}
\end{algorithm}

\section{Missing Proofs in Section~\ref{sec:formal_formulation}}\label{appx:justification}
\PropJustify*
\begin{proof}
    Suppose $\Psi^{\epsilon/|\cF|}$ is non-empty, we denote $\psi^{\epsilon/|\cF|}$ as one of the elements in $\Psi^{\epsilon/|\cF|}$. By definition, since $\max_\vecpi\eta^\goal(\vecpi)$ is fixed, we have:
    \begin{align*}
        \psi^* \gets \arg\max_\psi \frac{1}{|\cF|} \sum_{f\in\cF} -\beta \Delta(\psi, f, T) - C(\psi, f, T). \label{eq:optimal_condition}
    \end{align*}
    
    \(\)If $\beta \geq \frac{U_{\max}NHT|\cF|}{\epsilon'}$, by definition,
    \begin{align*}
        0\leq &\Big(\frac{1}{|\cF|} \sum_{f\in\cF} -\beta \Delta(\psi^*, f, T) - C(\psi^*, f, T)\Big) - \Big(\frac{1}{|\cF|} \sum_{f\in\cF} -\beta \Delta(\psi^{\epsilon/|\cF|}, f, T) - C(\psi^{\epsilon/|\cF|}, f, T)\Big)\\
        \leq &\frac{1}{|\cF|} \sum_{f\in\cF} \beta\Big(\Delta(\psi^{\epsilon/|\cF|}, f, T) - \Delta(\psi^*, f, T)\Big) + U_{\max}NHT \tag{the steering reward $u\in[0, U_{\max}]$}\\
        \leq & \frac{1}{|\cF|} \sum_{f\in\cF} \beta\Big(\frac{\epsilon}{|\cF|} - \Delta(\psi^*, f, T)\Big) + U_{\max}NHT \tag{$\psi^{\epsilon/|\cF|} \in \Psi^\epsilon_{T,U_{\max}}(\cF)$} \\
        \leq & \frac{U_{\max}NHT}{\epsilon'} \Big(\frac{\epsilon}{|\cF|} - \frac{1}{|\cF|} \sum_{f\in\cF} \Delta(\psi^*, f, T)\Big) + U_{\max}NHT
    \end{align*}
    As a direct observation, if $\EE_{f\sim\text{Unif}(\cF)}[\Delta(\psi^*, f, T)] = \frac{1}{|\cF|} \sum_{f\in\cF} \Delta(\psi^*, f, T) > \frac{\epsilon+\epsilon'}{|\cF|}$, the RHS will be strictly less than 0, which results in contradiction.
    Therefore, we must have 
    \begin{align*}
        \forall f\in\cF,\quad \Delta(\psi^*, f, T) \leq |\cF| \cdot \EE_{f\sim\text{Unif}(\cF)}[\Delta(\psi^*, f, T)] \leq \epsilon + \epsilon'.
    \end{align*}
    which implies $\psi^* \in \Psi^{\epsilon+\epsilon'}$.

    Next, we show the Pareto Optimality. 
    If there exists $\psi$ and $f$ such that 
    \begin{itemize}
        \item For all $f'\in\cF$ with $f\neq f'$, $C(\psi^*, f, T) \geq C(\psi, f, T)$ and $\Delta(\psi^*, f, T) \geq \Delta(\psi, f, T)$;
        \item For $f$, either $C(\psi^*, f, T) > C(\psi, f, T)$  and $\Delta(\psi^*, f, T) \geq \Delta(\psi, f, T)$ or $C(\psi^*, f, T) \geq C(\psi, f, T)$  and $\Delta(\psi^*, f, T) > \Delta(\psi, f, T)$. 
    \end{itemize}
    Therefore, we must have:
    \begin{align*}
        \frac{1}{|\cF|} \sum_{f\in\cF} \beta \Delta(\psi, f, T) - C(\psi, f, T) < \frac{1}{|\cF|} \sum_{f\in\cF} \beta \Delta(\psi^*, f, T) - C(\psi^*, f, T),
    \end{align*}
    which conflicts with the optimality condition of Obj.~\eqref{obj:objective_function}.
\end{proof}

\begin{remark}
    Intuitively, Pareto optimality ensures the steering rewards predicted by $\psi^*$ are not necessarily large.
    For example, consider agents following NPG dynamics (Def.~\ref{def:NPG}). Given any steering strategy $\psi$ and a constant $c > 0$, consider another steering strategy $\psi'(\cdot) := \psi(\cdot) + c$, i.e., by increasing the steering reward predicted by $\psi$ with $c$. Note that the learning dynamics under both $\psi$ and $\psi'$ are the same, and therefore the final steering gaps are the same. 
    Pareto optimality implies that $\psi'$ will never be more preferable than $\psi$.
\end{remark}

\section{Missing Proofs for Existence when the True Model $f^*$ is Known}\label{appx:existence_known_model}
In this section, we study the Policy Mirror Descent as a concrete example.
In Appx.~\ref{appx:existence_PMD}, we provide more details about PMD.
Then, we study the PMD with exact updates and stochastic updates in Appx.~\ref{appx:existence_PMD_exact} and~\ref{appx:existence_PMD_stochastic_biased}, respectively.
The theorems in Sec.~\ref{sec:exist_known_model} will be subsumed as special cases.
\subsection{More Details about Policy Mirror Descent}\label{appx:existence_PMD}
\begin{definition}[Policy Mirror Descent]\label{def:PMD}
    For each agent $n\in[N]$, the updates at step $t\in[T]$ follows:
    \begin{align*}
        \forall h\in[H],s_h\in\cS, \quad & \theta^n_{t+1,h}(\cdot|s_h) \gets \theta_{t,h}^n(\cdot|s_h) + \alpha \hA^{n,\vecpi_t}_{h|r^n+u^n_t}(s,\cdot), \tag{Update in the mirror space}\\
        & z^n_{t+1,h}(\cdot|s_h) \gets (\nabla \phi^n)^{-1}(\theta^n_{t+1,h}(\cdot|s_h)) \tag{Map $\theta$ back to the primal space}\\
        & \pi^n_{t+1,h}(\cdot|s_h) \gets \argmin_{z\in \Delta(\cA^n)} D_{\phi^n}(z, z^n_{t+1,h}(\cdot|s_h)) \tag{Projection},
    \end{align*}
    Similar to Def.~\ref{def:NPG}, here $\hA^{n,\vecpi_t}_{h|r^n+u^n_t}$ is some random estimation for the advantage value $A^{n,\vecpi_t}_{h|r^n+u^n_t}$ with $\EE_{\pi^n}[\hA^{n,\vecpi_t}_{h|r^n+u^n_t}(s_h,\cdot)] = 0$.
    Besides, $\theta^{n}_{t,h}\in\mR^{|\cS||\cA|}$ denotes the variable in the dual space.
    $\phi^n:\dom(\phi^n)\rightarrow \mR$ is a function satisfying Assump.~\ref{assump:regularizer} below, which gives the mirror map $\nabla\phi^n$; $(\nabla\phi^n)^{-1}$ is the inverse mirror map; $D_{\phi^n}(z,\tilde z) := \phi^n(z) - \phi^n(\tz) - \la \nabla \phi^n(\tz), z - \tz\ra$ is the Bregman divergence regarding $\phi^n$.
\end{definition}
\begin{assumption}\label{assump:regularizer}
    We assume for all $n\in[N]$, $\phi^n$ is $\mu$-strongly convex and essentially smooth, i.e. differentiable and $\|\nabla \phi^n(z^k)\| \rightarrow +\infty$ for any sequence $z^k\in\dom(\phi^n)$ converging to a point on the boundary of $\dom(\phi^n)$. 
\end{assumption}
By Pythagorean Theorem and the strictly convexity of $D_{\phi^n}$, the projection $\vecpi$ in Def.~\ref{def:PMD} is unique.
\begin{lemma}
    Given a convex set $\mathcal{C}$ and a function $\phi$ which is $\mu$-strongly convex on $\mathcal{C}$, we have
    \begin{align*}
        \|\arg\min_{z\in \mathcal{C}} D_\phi(z,(\nabla\phi)^{-1}(\theta_1)) - \arg\min_{z\in \mathcal{C}} D_\phi(z,(\nabla\phi)^{-1}(\theta_2))\| \leq \frac{1}{\mu} \|\theta_1 - \theta_2\|_2.
    \end{align*}
\end{lemma}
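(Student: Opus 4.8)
The plan is to establish the non-expansiveness of the Bregman projection composed with the inverse mirror map, which is a standard fact from mirror-descent analysis. First I would recall the two ingredients I am allowed to use: that $\phi$ is $\mu$-strongly convex on $\mathcal{C}$ (so $D_\phi(z,w) \geq \frac{\mu}{2}\|z-w\|_2^2$), and that the Bregman projection onto the convex set $\mathcal{C}$ satisfies the generalized Pythagorean/three-point inequality. Concretely, write $z_i := \arg\min_{z\in\mathcal{C}} D_\phi(z, w_i)$ where $w_i := (\nabla\phi)^{-1}(\theta_i)$ for $i=1,2$. The first-order optimality condition for $z_i$ gives, for every $u\in\mathcal{C}$,
\[
\langle \nabla\phi(z_i) - \nabla\phi(w_i),\, u - z_i\rangle \geq 0,
\]
which, since $\nabla\phi(w_i) = \theta_i$ by definition of the inverse mirror map, reads $\langle \nabla\phi(z_i) - \theta_i,\, u - z_i\rangle \geq 0$.

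Next I would plug $u = z_2$ into the optimality condition for $z_1$, and $u = z_1$ into that for $z_2$, and add the two inequalities. This yields
\[
\langle \nabla\phi(z_1) - \nabla\phi(z_2),\, z_2 - z_1\rangle \geq \langle \theta_1 - \theta_2,\, z_2 - z_1\rangle,
\]
i.e. $\langle \nabla\phi(z_1) - \nabla\phi(z_2),\, z_1 - z_2\rangle \leq \langle \theta_1 - \theta_2,\, z_1 - z_2\rangle$. The left-hand side is bounded below by $\mu\|z_1 - z_2\|_2^2$ using $\mu$-strong convexity of $\phi$ (monotonicity of $\nabla\phi$ with modulus $\mu$). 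The right-hand side is bounded above by $\|\theta_1-\theta_2\|_2\,\|z_1-z_2\|_2$ via Cauchy--Schwarz. Combining, $\mu\|z_1-z_2\|_2^2 \leq \|\theta_1-\theta_2\|_2\|z_1-z_2\|_2$, and dividing by $\|z_1-z_2\|_2$ (the inequality is trivial when this is zero) gives $\|z_1 - z_2\|_2 \leq \frac{1}{\mu}\|\theta_1-\theta_2\|_2$, as claimed.

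The main subtlety — not a deep obstacle, but the one place to be careful — is justifying that the minimizers $z_i$ exist, are unique, and satisfy the stated first-order condition even though $w_i$ may lie outside $\mathcal{C}$ or near its boundary. Existence and uniqueness follow from strict convexity of $D_\phi(\cdot, w_i)$ on $\mathcal{C}$ (a consequence of strong convexity of $\phi$) together with the essential smoothness assumption (Assump.~\ref{assump:regularizer}), which guarantees the minimizer lies in the relative interior of $\dom(\phi)$ so that $\nabla\phi(z_i)$ is well-defined; this is exactly why Assumption~\ref{assump:regularizer} was imposed. With that in place, the variational inequality above is just the standard optimality condition for minimizing a differentiable convex function over a convex set, and the rest is the short computation sketched above. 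I would also remark that this lemma is precisely what is needed to control how the policy iterates $\pi^n_{t+1,h}$ move when the dual variables $\theta^n_{t+1,h}$ move, which is the quantitative engine behind the convergence/existence arguments for PMD dynamics in the subsequent subsections.
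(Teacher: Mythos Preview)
Your proposal is correct and follows essentially the same approach as the paper: both write down the first-order optimality conditions $\langle \nabla\phi(z_i)-\theta_i,\,u-z_i\rangle\ge 0$, substitute $u=z_2$ and $u=z_1$ respectively, add, use $\mu$-strong convexity to lower-bound $\langle\nabla\phi(z_1)-\nabla\phi(z_2),\,z_1-z_2\rangle$ by $\mu\|z_1-z_2\|^2$, and finish with Cauchy--Schwarz. Your extra paragraph on existence/uniqueness of the projections and the role of essential smoothness is a welcome clarification that the paper's proof omits.
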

\begin{proof}
    Given any dual variables $\theta_1$ and $\theta_2$, and their projection $z_1:=\arg\min_{z\in \mathcal{C}} D_\phi(z,(\nabla\phi)^{-1}(\theta_1))$ and $z_2:=\arg\min_{z\in \mathcal{C}} D_\phi(z,(\nabla\phi)^{-1}(\theta_2))$, by the first order optimality condition, we have:
    \begin{align*}
        \forall z\in\mathcal{C},\quad &\la \nabla \phi(z_1) - \theta_1, z - z_1 \ra \geq 0,\\
        & \la \nabla \phi(z_2) - \theta_2, z - z_2 \ra \geq 0
    \end{align*}
    If we choose $z=z_2$ in the first equation and $z=z_1$ in the second equation, and sum together, we have:
    \begin{align*}
        \la \theta_1 - \nabla \phi(z_1) + \nabla \phi(z_2) - \theta_2, z_1 - z_2\ra \geq 0,
    \end{align*}
    By strongly convexity of $\phi$, the above implies:
    \begin{align*}
        \la \theta_1 - \theta_2, z_1 - z_2\ra \geq \la \nabla\phi(z_1) - \nabla\phi(z_2), z_1 - z_2 \ra \geq \mu  \cdot \|z_1 - z_2\|^2
    \end{align*}
    Therefore,
    \begin{align*}
        \mu \|z_1 - z_2\| \leq \|\theta_1 - \theta_2\|,
    \end{align*}
    and we finish the proof.
\end{proof}
Next, we discuss some concrete examples.\\
\begin{example}[Natural Policy Gradient]\label{example:NPG}
    If we consider the mirror map and Bregman Divergence generated by $\phi^n(z):=\sum_{a^n\in\cA^n}z(a^n)\log z(a^n)$, we have $D_\phi^n(z_1,z_2) = \text{KL}(z_1\|z_2)$, and recover the NPG in Def.~\ref{def:NPG}. 
    Note that $\phi^n$ is 1-strongly convex on the convex set $\Delta(\cA^n)$, Assump.~\ref{assump:regularizer} is satisfied with $\mu = 1$.\\
\end{example}

\begin{example}[Online Gradient Ascent \citep{zinkevich2003online}]
If we consider the Euclidean distance generated by $l_2$-norm $\phi^n(z) = \frac{1}{2}\|z\|_2^2$, we recover the projected gradient ascent
\begin{definition}\label{def:PGA}
    For each agent $n\in[N]$, the updates at step $t\in[T]$ follows:
    \begin{align*}
        \forall h\in[H],s_h\in\cS, \quad & \pi^n_{t+1,h}(\cdot|s_h) \gets \text{Proj}_{\Delta(\cA^n)}(\pi^n_{t,h}(\cdot|s_h) + \alpha \hA^{n,\vecpi_t}_{h|r^n+u^n_t}(s,\cdot)),
    \end{align*}
\end{definition}
Note that the projection with Euclidean distance is 1-Lipschitz, Assump.~\ref{assump:regularizer} is satisfied with $\mu = 1$.
\end{example}

\paragraph{Other Notations and Remarks}
In the following, we use $\Pi^+$ to be the ``feasible policy set'' (for NPG in Def.~\ref{def:NPG}, $\Pi^+$ refers to be set of policies bounded away from 0), such that for any $\vecpi \in \Pi^+$, there exists a dual variable $\theta$ corresponding to $\vecpi$, i.e.,
\begin{align*}
    \forall n\in[N],h\in[H],s_h\in\cS,\quad \pi^n_h(\cdot|s_h) \gets \argmin_{z\in\Delta(\cA^n)} D_{\phi^n}(z, (\nabla\phi^n)^{-1}(\theta^n_h(\cdot|s_h))).
\end{align*}
In the following Lem.~\ref{lem:const_vec_same_projection}, we show that constant shift in $\theta^n_{t,h}(\cdot|s_h)$ does not change the projection result. 
Therefore, when we say the dual variable $\theta$ associated with a given policy $\vecpi$, we only consider those $\theta$ satisfying
$\EE_{a_h^n\sim\pi^n_h}[\theta^{n}_h(a_h^n|s_h)] = 0$.
\begin{lemma}[Constant Shift does not Change the Projection]\label{lem:const_vec_same_projection}
    For any $n\in[N]$, regularizer $\phi^n$ satisfying conditions in Assump.~\ref{assump:regularizer}, and any $\theta\in\mR^{|\cA^n|}$, consider the constant vector $c \mathrm{\textbf{1}}$, where $c \in \mR$ is a constant and $\mathrm{\textbf{1}} = \{1,1,...,1\} \in \mR^{|\cA^n|}$, we have:
    \begin{align*}
        \argmin_{z\in\Delta(\cA^n)} D_{\phi^n}(z, (\nabla\phi^n)^{-1}(\theta)) = \argmin_{z\in\Delta(\cA^n)} D_{\phi^n}(z, (\nabla\phi^n)^{-1}(\theta + c \mathrm{\textbf{1}}))
    \end{align*}
\end{lemma}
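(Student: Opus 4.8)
The plan is to show that the two optimization problems have identical objective functions up to an additive term that is constant over the feasible set $\Delta(\cA^n)$, so that the argmin is unchanged. First I would unfold the Bregman divergence $D_{\phi^n}(z,\tz) = \phi^n(z) - \phi^n(\tz) - \la \nabla\phi^n(\tz), z - \tz \ra$. Writing $\tz_1 := (\nabla\phi^n)^{-1}(\theta)$ and $\tz_2 := (\nabla\phi^n)^{-1}(\theta + c\mathbf{1})$, and noting that by the definition of the inverse mirror map $\nabla\phi^n(\tz_1) = \theta$ and $\nabla\phi^n(\tz_2) = \theta + c\mathbf{1}$, we get
\begin{align*}
    D_{\phi^n}(z,\tz_1) &= \phi^n(z) - \phi^n(\tz_1) - \la \theta, z - \tz_1\ra, \\
    D_{\phi^n}(z,\tz_2) &= \phi^n(z) - \phi^n(\tz_2) - \la \theta + c\mathbf{1}, z - \tz_2\ra.
\end{align*}
Subtracting, the terms $\phi^n(z)$ cancel, and the remaining discrepancy is $D_{\phi^n}(z,\tz_1) - D_{\phi^n}(z,\tz_2) = -\la c\mathbf{1}, z\ra + (\text{terms not involving } z)$.

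Next I would use the key observation that on the simplex $\Delta(\cA^n)$ we have $\la \mathbf{1}, z\ra = \sum_{a^n} z(a^n) = 1$ for every feasible $z$. Hence $\la c\mathbf{1}, z\ra = c$ is constant over $\Delta(\cA^n)$, so the two objective functions differ by a constant on the feasible set. Since minimizing a function and minimizing that function plus a constant yield the same minimizer, and the minimizer is unique by strong convexity of $D_{\phi^n}(\cdot,\tz)$ on the convex set $\Delta(\cA^n)$ (guaranteed by Assump.~\ref{assump:regularizer}), the two argmins coincide. This completes the argument.

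The main subtlety — not really an obstacle but the step requiring care — is justifying that $(\nabla\phi^n)^{-1}(\theta + c\mathbf{1})$ is well-defined, i.e. that $\theta + c\mathbf{1}$ lies in the range of $\nabla\phi^n$. This follows from essential smoothness in Assump.~\ref{assump:regularizer}: $\nabla\phi^n$ is a bijection from the interior of $\dom(\phi^n)$ onto all of $\mathbb{R}^{|\cA^n|}$, so every $\theta + c\mathbf{1} \in \mathbb{R}^{|\cA^n|}$ has a preimage. One should also note the argument only uses that the feasible set lies in the affine hyperplane $\{z : \la\mathbf{1},z\ra = 1\}$; the nonnegativity constraints play no role here beyond making the feasible set convex so uniqueness holds. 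I would present the two displays above, invoke $\la\mathbf{1},z\ra=1$, and conclude.
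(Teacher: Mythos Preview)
Your proposal is correct and takes essentially the same approach as the paper: both unfold the Bregman divergence, discard the $z$-independent terms, and use $\la \mathbf{1}, z\ra = 1$ on the simplex to conclude the two objectives differ by a constant. (Minor note: in your subtraction the sign should be $+\la c\mathbf{1}, z\ra$ rather than $-\la c\mathbf{1}, z\ra$, but this does not affect the argument.)
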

\begin{proof}
    \begin{align*}
        &\arg\min_{z\in\Delta(\cA^n)} D_{\phi^n}(z, (\nabla\phi^n)^{-1}(\theta + c \mathrm{\textbf{1}})) \\
        =& \arg\min_{z\in\Delta(\cA^n)} \phi^n(z) - \la \theta + c \mathrm{\textbf{1}}, z\ra \\
        = & \arg\min_{z\in\Delta(\cA^n)} \phi^n(z) - \la \theta, z\ra + c \tag{we have constraints that $z\in\Delta(\cA^n)$}\\
        = & \arg\min_{z\in\Delta(\cA^n)} \phi^n(z) - \la \theta, z\ra \\
        = & \argmin_{z\in\Delta(\cA^n)} D_{\phi^n}(z, \theta).
    \end{align*}
\end{proof}

\subsection{Proofs for the Existence of Desired Steering Strategy}
We first formally introduce the Lipschitz condition that Thm.~\ref{thm:incentive_driven_agents_NPG} requires.
\begin{assumption}[$\eta^\goal$ is $L$-Lipschitz]\label{assump:Lipschitz}
    For any $\vecpi,\vecpi'\in\Pi$, $|\eta^\goal(\vecpi) - \eta^\goal(\vecpi')| \leq L\|\vecpi - \vecpi'\|_2$.
\end{assumption}
In the following, in Appx.~\ref{appx:existence_PMD_exact}, as a warm-up, we start with the exact case when the estimation $\hA^\vecpi$ is exactly the true advantage value $A^\vecpi$ (which can be regarded as a special case of Assump.~\ref{assump:incentive_driven}).
Then, in Appx.~\ref{appx:existence_PMD_stochastic_biased}, we study the general setting and prove Thm.~\ref{thm:incentive_driven_agents_NPG} as a special case of PMD.
\subsubsection{Special Case: PMD with Exact Advantage-Value}\label{appx:existence_PMD_exact}
\begin{lemma}[Existence of Steering Path between Feasible Policies]\label{lem:exist_PMD}
    Consider two feasible policies $\vecpi,\tvecpi$ which are induced by dual variables $\{\theta^{n}_{1,h}\}_{h\in[H],n\in[N]}$ and $\{\tilde{\theta}_{h}^{n}\}_{h\in[H],n\in[N]}$, respectively.
    If the agents follow Def.~\ref{def:PMD} with exact Q value and start with $\vecpi_1=\vecpi$, as long as $U_{\max} \geq 2H+\frac{2}{\alpha T}(\max_{n,h,s_h,a^n_h}|\tilde\theta_h^{n}(a^n_h|s_h) - \theta_{h}^{n}(a^n_h|s_h) - \EE_{\ba_h^n\sim\pi^{n}_{t,h}(\cdot|s_h^n)}[\tilde\theta_h^{n}(\ba^n_h|s_h) - \theta_{h}^{n}(\ta^n_h|s_h)]|)$, there exists a (history-independent) steering strategy $\psi:=\{\psi_t\}_{t\in[T]}$ with $\psi_t:\Pi^+\rightarrow\cU$, s.t., $\vecpi_{T+1} = \tilde\vecpi$.
\end{lemma}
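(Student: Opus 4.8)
The plan is to construct the steering strategy explicitly. The core idea, as hinted in the theorem discussion, is to choose each $\vecu_t$ so that the PMD update on the dual variable $\theta^n_{t,h}$ moves it along the straight line from $\theta^n_{1,h}$ toward $\tilde\theta^n_h$ by an increment of $\frac{1}{T}(\tilde\theta^n_h - \theta^n_{1,h})$ at every step, so that after $T$ steps $\theta^n_{T+1,h} = \tilde\theta^n_h$ (up to the constant shift that Lemma~\ref{lem:const_vec_same_projection} shows is irrelevant for the projection), hence $\vecpi_{T+1} = \tilde\vecpi$. Since PMD updates $\theta^n_{t+1,h}(\cdot|s_h) = \theta^n_{t,h}(\cdot|s_h) + \alpha \hA^{n,\vecpi_t}_{h|r^n+u^n_t}(s_h,\cdot)$ and here $\hA = A$ is the exact advantage, I need to pick $u^n_{t,h}$ so that $\alpha A^{n,\vecpi_t}_{h|r^n+u^n_t}(s_h,\cdot) = \frac{1}{T}\big(\tilde\theta^n_h(\cdot|s_h) - \theta^n_{1,h}(\cdot|s_h)\big)$, i.e. so the induced advantage equals a prescribed target vector $g^n_{t,h}(s_h,\cdot) := \frac{1}{\alpha T}(\tilde\theta^n_h - \theta^n_{1,h})$.

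The key steps, in order: (1) Observe that adding a steering reward of the form $u^n_{t,h}(s_h,a^n_h) = -A^{n,\vecpi_t}_{h|r^n}(s_h,a^n_h) + \nu^n_h(s_h,a^n_h) - \EE_{\ba^n_h\sim\pi^n_{t,h}}[\nu^n_h(s_h,\ba^n_h)] + c_h$ cancels the original advantage and replaces it by a chosen "guidance" term $\nu^n_h$ (re-centered so that its $\pi^n$-expectation vanishes, matching the zero-mean requirement on the advantage estimator), plus a state-dependent constant $c_h$ that does not affect the policy dynamics at any horizon because a constant added to $r^n_h$ only shifts $Q^{n,\vecpi}_{h'|r^n}$ by a constant for all $h'\le h$ and hence leaves all advantages unchanged. (2) Verify that with this choice, the advantage of the modified reward at horizon $h$, step $t$, equals exactly $\nu^n_h$ re-centered, so setting $\nu^n_h = g^n_{t,h}$ (re-centered) achieves the desired dual-space increment; here I should actually index $\nu$ by $t$ as well, $\nu^{n}_{t,h} := g^n_{t,h}$, which is independent of $t$ in this construction since the target increment is the same every step. (3) Check feasibility: the steering reward must lie in $[0,U_{\max}]$. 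The term $-A^{n,\vecpi_t}_{h|r^n}$ is bounded in absolute value by $H$ (rewards in $[0,1]$, horizon $H$), the recentered guidance term is bounded by $2\max_{n,h,s_h,a^n_h}|\tilde\theta^n_h(a^n_h|s_h) - \theta^n_{1,h}(a^n_h|s_h) - \EE[\cdots]|/(\alpha T)$, and choosing $c_h$ to make everything nonnegative costs at most another $H$; summing these gives exactly the stated bound $U_{\max}\ge 2H + \frac{2}{\alpha T}(\cdots)$. (4) Conclude that $\theta^n_{T+1,h} = \theta^n_{1,h} + T\cdot\frac{1}{T}(\tilde\theta^n_h-\theta^n_{1,h}) + (\text{accumulated constants}) = \tilde\theta^n_h + (\text{const})$, and invoke Lemma~\ref{lem:const_vec_same_projection} to get $\pi^n_{T+1,h} = \tilde\pi^n_h$ for all $n,h$; since the strategy $\psi_t(\vecpi_t)$ defined this way depends only on $\vecpi_t$ (through $A^{n,\vecpi_t}_{h|r^n}$, the current policy's expectation, and the fixed targets), it is history-independent, as claimed.

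The main obstacle I expect is the careful bookkeeping in step (1)–(2): one must confirm that modifying $r^n_h$ at a single horizon $h$ by a function whose conditional expectation under the current policy is a constant genuinely leaves the advantage functions $A^{n,\vecpi_t}_{h'|r^n+u^n_t}$ at all other horizons $h'\ne h$ unchanged, and changes $A^{n,\vecpi_t}_{h|r^n+u^n_t}$ to precisely the prescribed guidance term. This requires writing out the Bellman recursion for $Q$ and $V$ and tracking how a constant-in-expectation perturbation propagates; the subtlety is that $u^n_{t,h}$ does change $Q^{n,\vecpi_t}_{h'|r^n+u^n_t}$ for $h'<h$ by the additive constant $\EE_{\pi^n}[\nu^n_h] - \EE_{\pi^n}[\nu^n_h] + c_h = c_h$-type shifts, but these cancel in the advantage $Q - V$. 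A secondary technical point is handling the dependence of $A^{n,\vecpi_t}_{h|r^n}$ on the joint policy $\vecpi_t$ (other agents' actions) rather than just $\pi^n_t$; since $\vecpi_t$ is fully observed at step $t$ this is fine, but the re-centering expectation must be taken with respect to $\pi^n_{t,h}$ only, which is exactly what makes the zero-mean condition on $\hat A$ in Def.~\ref{def:PMD} consistent with the construction.
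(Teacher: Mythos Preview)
Your proposal is correct and follows essentially the same route as the paper: construct the steering reward as $u^n_{t,h} = -A^{n,\vecpi_t}_{h|r^n} + \nu^n_{t,h} - \EE_{\pi^n_{t,h}}[\nu^n_{t,h}] + c_h$ with $\nu^n_{t,h}(s_h,a^n_h) = \frac{1}{\alpha T}(\tilde\theta^n_h - \theta^n_{1,h})(a^n_h|s_h)$, verify that the $\pi^n_{t,h}$-expectation of $u^n_{t,h}$ is an $a^n_h$-independent constant so that the modified $Q$ at horizon $h$ equals $\nu^n_{t,h} + C'_h(s_h)$ and advantages at other horizons are unchanged, conclude the dual dynamics become $\theta^n_{t+1,h} = \theta^n_{t,h} + \alpha\nu^n_{t,h}$ (up to constants handled by Lemma~\ref{lem:const_vec_same_projection}), and bound $u^n_{t,h}$ by $2H + \frac{2}{\alpha T}\max|\tilde\theta - \theta|$ using $|A^{n,\vecpi_t}_{h|r^n}|\le H$ and taking $c_h$ as minus the minimum of the recentered expression. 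The only cosmetic difference is that the paper takes $c_h$ explicitly as the negative minimum over $(\bar s_h,\bar a^n_h)$ rather than describing it as ``choosing $c_h$ to make everything nonnegative,'' and bounds $u$ directly as twice the sup of the recentered expression rather than summing your three pieces, but these lead to the same inequality.
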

\begin{proof}

    For agent $n\in[N]$, given a $\vecpi_t$, we consider the following steering reward functions
    \begin{align*}
        u^{n}_{t,h}(s_h,a^n_h) =& \nu^{n}_{t,h}(s_h,a^n_h) - A^{n,\vecpi_t}_{h|r^n}(s_h,a^n_h) - \EE_{\ta^n\sim\pi^{n}_{t,h}(\cdot|s_h)}[\nu^{n}_{t,h}(s_h,\ta^n_h) - A^{n,\vecpi_t}_{h|r^n}(s_h,\ta^n_h)] \\
        & -\min_{\bs_h,\ba^n_h}\{\nu^{n}_{t,h}(\bs_h,\ba^n_h) - A^{n,\vecpi_t}_{h|r^n}(\bs_h,\ba^n_h) - \EE_{\ta^n\sim\pi^{n}_{t,h}(\cdot|s_h)}[\nu^{n}_{t,h}(\bs_h,\ta^n_h) - A^{n,\vecpi_t}_{h|r^n}(\bs_h,\ta^n_h)]\},
    \end{align*}
    where $\nu^{n}_{t,h}:\cS\times\cA^n\rightarrow\mR$ will be defined later. By construction, we have:
    \begin{align}
        &\EE_{a^n_h\sim \pi^{n}_{t,h}(\cdot|s_h)}[u^{n}_{t,h}(s_h,a^n_h)] \\
        =& -\min_{\bs_h,\ba^n_h}\{\nu^{n}_{t,h}(\bs_h,\ba^n_h) - A^{n,\vecpi_t}_{h|r^n}(\bs_h,\ba^n_h) - \EE_{\ta^n\sim\pi^{n}_{t,h}(\cdot|s_h)}[\nu^{n}_{t,h}(\bs_h,\ta^n_h) - A^{n,\vecpi_t}_{h|r^n}(\bs_h,\ta^n_h)]\}, \label{eq:constant_expectation}
    \end{align}
    which is a constant and independent w.r.t. $s_h,a^n_h$.
    Besides, by definition, we can ensure the non-negativity of $u^{n}_{t,h}$.
    As a result,
    \begin{align*}
        \forall t\in[T],\quad &Q^{t,\vecpi_t}_{h|r^n + u^{n}_t}(s_h,a^n_h) \\
        =& A^{t,\vecpi_t}_{h|r^n}(s_h,a^n_h) + u^{n}_{t,h}(s_h,a^n_h) + C_h(s_h) \tag{Eq.~\eqref{eq:constant_expectation}}\\
        = & \nu^{n}_{t,h}(s_h,a^n_h) + C'_h(s_h). \numberthis\label{eq:expand_Q_value}
    \end{align*}
    where we use $C_h(s_h)$ and $C_h'(s_h)$ to denote some state-dependent but action-independent value.
    According to Lem.~\ref{lem:const_vec_same_projection}, under the above steering reward design, the dynamics of $\vecpi_1,...,\vecpi_t,...,\vecpi_T$ can be described by the following dynamics:
    \begin{align}
        \forall t\in[T],~\forall n\in[N],h\in[H],s_h\in\cS: \quad & \theta^{n}_{t+1,h}(\cdot|s_h) \gets \theta^n_{t,h}(\cdot|s_h) + \alpha \nu^{n}_{t,h}(s_h,a^n_h)\\
        & \pi^{n}_{t+1,h}(\cdot|s_h) \gets \argmin_{z\in \Delta(\cA^n)} D_{\phi^n}(z, \theta^{n}_{t+1,h}(\cdot|s_h)),
    \end{align}
    Now we consider the following choice of $\nu^{n}_{t,h}$:
    \begin{align*}
        \nu^{n}_{t,h}(s_h,a^n_h) = \frac{\tilde\theta_h^{n}(a^n_h|s_h) - \theta_{h}^{n}(a^n_h|s_h)}{\alpha T},
    \end{align*}
    which implies $\theta_{T+1} = \tilde\theta$, and therefore, $\vecpi_{T+1} = \tvecpi$.
    Besides, the steering reward function can be upper bounded by:
    \begin{align*}
        u^{n}_{t,h}(s_h,a^n_h) \leq & 2\max_{\bs_h,\ba^n_h}|\nu^{n}_{t,h}(\bs_h,\ba^n_h) - A^{n,\vecpi_t}_{h|r^n}(\bs_h,\ba^n_h) - \EE_{\ta^n\sim\pi^{n}_{t,h}(\cdot|s_h)}[\nu^{n}_{t,h}(\bs_h,\ta^n_h) - A^{n,\vecpi_t}_{h|r^n}(\bs_h,\ta^n_h)]|\\
        \leq & 2H+\frac{2}{\alpha T}(\max_{n,h,s_h,a^n_h}|\tilde\theta_h^{n}(a^n_h|s_h) - \theta_{h}^{n}(a^n_h|s_h)|),
    \end{align*}
    which implies the appropriate choice of $U_{\max}$.

\end{proof}
\begin{theorem}\label{thm:non_empty_PMD}
    Under Assump.~\ref{assump:Lipschitz}, given the initial $\vecpi_1:=\vecpi \in \Pi^+$, for any $T \geq 1$ and $\epsilon > 0$, if the agents follow Def.~\ref{def:NPG} with exact Q value, then $\Psi^\epsilon_{T,U_{\max}} \neq \emptyset$ if the following conditions are satisfied:
    \begin{itemize}[leftmargin=*]
        \item There exists feasible $\tvecpi \in \Pi^+$ such that $\eta^\goal(\tvecpi) \geq \max_\vecpi \eta^\goal(\vecpi) - \epsilon$
        \item Denote $\theta$ and $\tilde\theta$ as the dual variables associated with $\vecpi$ and $\tvecpi$, respectively. We require $U_{\max} \geq 2H+\frac{2}{\alpha T}(\max_{n,h,s_h,a^n_h}|\tilde\theta_h^{n}(a^n_h|s_h) - \theta_{h}^{n}(a^n_h|s_h)|)$
    \end{itemize}
\end{theorem}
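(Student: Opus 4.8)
The plan is to read the statement off Lemma~\ref{lem:exist_PMD}. Since NPG with direct parameterization (Def.~\ref{def:NPG}) is exactly the instance of PMD (Def.~\ref{def:PMD}) generated by negative entropy (Example~\ref{example:NPG}, with $\mu=1$), Lemma~\ref{lem:exist_PMD} already produces, for any two feasible policies, a \emph{history-independent} steering strategy that drives the first to the second in exactly $T$ steps as soon as $U_{\max}$ clears an explicit threshold. So the proof breaks into three short pieces: (i) exhibit a feasible target $\tvecpi\in\Pi^+$ that nearly maximizes $\eta^\goal$; (ii) check that the $U_{\max}$ bound hypothesized in the theorem implies the one Lemma~\ref{lem:exist_PMD} needs; (iii) rewrite ``$\vecpi_{T+1}=\tvecpi$'' as ``$\psi\in\Psi^\epsilon_{T,U_{\max}}$''.

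\textbf{Carrying it out.} For (i) I would take the $\tvecpi$ supplied by the theorem's first bullet; this is precisely where Assump.~\ref{assump:Lipschitz} earns its keep if one instead started from an exact maximizer $\vecpi^*\in\argmax_\vecpi\eta^\goal(\vecpi)$, which may lie on the boundary of the simplex product $\Pi$ and hence outside $\Pi^+$: since $\Pi^+$ is the relative interior of $\Pi$ and therefore dense in it, picking $\tvecpi\in\Pi^+$ with $\|\tvecpi-\vecpi^*\|_2\le\epsilon/L$ gives, by $L$-Lipschitzness, $\eta^\goal(\tvecpi)\ge\max_\vecpi\eta^\goal(\vecpi)-\epsilon$. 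Let $\theta,\tilde\theta$ be the associated centered dual variables. For (ii) I would note that along the trajectory constructed in the proof of Lemma~\ref{lem:exist_PMD} the dual iterate interpolates linearly, $\theta^n_{t,h}=\theta^n_h+\tfrac{t-1}{T}(\tilde\theta^n_h-\theta^n_h)$, so $\EE_{\pi^n_{t,h}}[\tilde\theta^n_h-\theta^n_h]$ is a convex average of the coordinates of $\tilde\theta^n_h-\theta^n_h$ and lies between their minimum and maximum; a one-line triangle-inequality argument (using the centering convention) then bounds the de-meaned quantity $\max_{n,h,s_h,a^n_h}\bigl|\tilde\theta^n_h(a^n_h|s_h)-\theta^n_h(a^n_h|s_h)-\EE_{\pi^n_{t,h}}[\tilde\theta^n_h-\theta^n_h]\bigr|$ appearing in the lemma by the plain quantity $\max_{n,h,s_h,a^n_h}|\tilde\theta^n_h(a^n_h|s_h)-\theta^n_h(a^n_h|s_h)|$ in the theorem's second bullet, uniformly over $t\in[T]$. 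Hence Lemma~\ref{lem:exist_PMD} applies and returns a history-independent $\psi=\{\psi_t\}_{t\in[T]}$ with $\psi_t:\Pi^+\to\cU$ under which the deterministic NPG dynamics reach $\vecpi_{T+1}=\tvecpi$.

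\textbf{Finish, and where the real work lives.} For (iii): in this known-model section $\cF=\{f^*\}$ and the run is deterministic, so $\Delta_{\psi,T}(f^*)=\max_\vecpi\eta^\goal(\vecpi)-\eta^\goal(\vecpi_{T+1})=\max_\vecpi\eta^\goal(\vecpi)-\eta^\goal(\tvecpi)\le\epsilon$, whence $\max_{f\in\cF}\Delta_{\psi,T}(f)\le\epsilon$ and $\psi\in\Psi^\epsilon_{T,U_{\max}}$, i.e.\ the set is non-empty. Given Lemma~\ref{lem:exist_PMD}, the only point needing care is the $U_{\max}$ bookkeeping of step (ii) --- matching the centered form of the bound to the plain form and ensuring uniformity in $t$ --- so I expect that to be the ``hard'' step here, though it is routine. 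If one wanted a self-contained proof, however, the genuine obstacle is the content of Lemma~\ref{lem:exist_PMD}: for each $t$ one must set $\vecu_t$ to $\nu^n_{t,h}-A^{n,\vecpi_t}_{h|r^n}$, re-centered under $\pi^n_{t,h}$ and then shifted by a state-dependent constant, so that simultaneously (a) $\vecu_t\in[0,U_{\max}]^{HN|\cS||\cA|}$, (b) by Lemma~\ref{lem:const_vec_same_projection} the added constant leaves the mirror-descent update untouched, and (c) the guidance $\nu^n_{t,h}=(\tilde\theta^n_h-\theta^n_h)/(\alpha T)$ makes the dual variable land exactly at $\tilde\theta$ after $T$ updates; there the $2H$ in the threshold absorbs $\|A^{n,\vecpi_t}_{h|r^n}\|_\infty\le H$ and the $\tfrac{2}{\alpha T}\|\tilde\theta-\theta\|_\infty$ term absorbs the interpolation step, and it necessarily blows up as $\tvecpi$ is driven toward the boundary of $\Pi$.
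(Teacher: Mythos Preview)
Your approach is exactly the paper's: the theorem is deduced directly from Lemma~\ref{lem:exist_PMD}, and the paper's own proof is literally the single sentence ``The proof is a direct application of Lem.~\ref{lem:exist_PMD}.'' Your step~(ii) bookkeeping on $U_{\max}$ is actually more care than the paper itself takes---if you look at the \emph{proof} of Lemma~\ref{lem:exist_PMD} (rather than its statement), the final displayed bound on $u^n_{t,h}$ is already the ``plain'' quantity $2H+\tfrac{2}{\alpha T}\max_{n,h,s_h,a^n_h}|\tilde\theta^n_h(a^n_h|s_h)-\theta^n_h(a^n_h|s_h)|$, which is exactly the hypothesis of the theorem, so there is no gap to bridge.
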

\begin{proof}
    The proof is a directly application of Lem.~\ref{lem:exist_PMD}.
\end{proof}

\paragraph{NPG as a Special Case} For NPG, we have the following results.
\begin{lemma}\label{lem:exist_NPG}
    Given $\forall \vecpi,\tilde\vecpi \in \Pi^+$, $T \geq 1$, if the agents follow Def.~\ref{def:NPG} with exact adv-value and start from $\vecpi_1=\vecpi$, by choosing $U_{\max}$ appropriately,
    there exists a (history-independent) steering strategy $\psi:=\{\psi_t\}_{t\in[T]}$ with $\psi_t:\Pi^+\rightarrow\cU$, s.t., $\vecpi_{T+1} = \tilde\vecpi$.
\end{lemma}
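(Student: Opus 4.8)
The plan is to apply Lemma~\ref{lem:exist_PMD} with the specific choice of mirror map corresponding to NPG, i.e., $\phi^n(z) = \sum_{a^n} z(a^n)\log z(a^n)$, which gives the KL-divergence as the Bregman divergence (cf.\ Example~\ref{example:NPG}). Under this choice, the inverse mirror map followed by the simplex projection recovers exactly the exponential-weights update $\pi^n_{t+1,h}(\cdot|s_h) \propto \pi^n_{t,h}(\cdot|s_h)\exp(\alpha \hA^{n,\vecpi_t}_{h})$, so Def.~\ref{def:NPG} with exact advantage is the $\phi^n = $ negative-entropy instance of Def.~\ref{def:PMD} with exact advantage. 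First I would verify that both $\vecpi$ and $\tvecpi$ are feasible in the sense required by Lemma~\ref{lem:exist_PMD}: since $\vecpi, \tvecpi \in \Pi^+$ are bounded away from $0$, each admits a dual representative $\theta^n_h(\cdot|s_h) := \log \pi^n_h(\cdot|s_h)$ (finite, since all coordinates are strictly positive), and by Lemma~\ref{lem:const_vec_same_projection} we may normalize so that $\EE_{a^n_h\sim\pi^n_h}[\theta^n_h(a^n_h|s_h)] = 0$. Thus the hypotheses of Lemma~\ref{lem:exist_PMD} are met.

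Next I would invoke Lemma~\ref{lem:exist_PMD} directly: it guarantees a history-independent steering strategy $\psi = \{\psi_t\}_{t\in[T]}$ with $\psi_t : \Pi^+ \to \cU$ driving $\vecpi_1 = \vecpi$ to $\vecpi_{T+1} = \tvecpi$, provided the steering-reward bound satisfies
\[
U_{\max} \;\geq\; 2H + \frac{2}{\alpha T}\Big(\max_{n,h,s_h,a^n_h}\big|\tilde\theta_h^{n}(a^n_h|s_h) - \theta_{h}^{n}(a^n_h|s_h) - \EE_{\ba_h^n\sim\pi^n_{t,h}}[\tilde\theta_h^{n}(\ba^n_h|s_h) - \theta_{h}^{n}(\ba^n_h|s_h)]\big|\Big).
\]
Since $\vecpi$ and $\tvecpi$ are fixed and strictly positive, the quantity $\max_{n,h,s_h,a^n_h}|\tilde\theta^n_h(a^n_h|s_h) - \theta^n_h(a^n_h|s_h)| = \max |\log\tilde\pi^n_h(a^n_h|s_h) - \log\pi^n_h(a^n_h|s_h)|$ (with the chosen normalization) is a finite constant depending only on $\vecpi$, $\tvecpi$; call it $\Theta_{\max}$. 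Then taking any $U_{\max} \geq 2H + \frac{4\Theta_{\max}}{\alpha T}$ (the factor accounting for the centering term via the triangle inequality) suffices, which is the meaning of ``by choosing $U_{\max}$ appropriately.'' This completes the argument.

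The only point requiring a little care—and the step I would flag as the main (mild) obstacle—is confirming that the centering/normalization of the dual variables is consistent between what Lemma~\ref{lem:exist_PMD} assumes and what the NPG representation naturally provides, and that the running advantage estimates $A^{n,\vecpi_t}_{h|r^n}$ appearing inside the construction of $u^n_{t,h}$ in the proof of Lemma~\ref{lem:exist_PMD} are well-defined along the constructed trajectory (they are, since each intermediate $\vecpi_t$ stays in $\Pi^+$: the updates only add bounded shifts to the log-probabilities, so no coordinate ever reaches $0$). Once these bookkeeping points are checked, the lemma follows immediately as the negative-entropy special case of Lemma~\ref{lem:exist_PMD}; there is no new analytic content beyond instantiating the general PMD result.
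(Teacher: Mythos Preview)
Your proposal is correct and follows essentially the same approach as the paper: the paper also obtains Lemma~\ref{lem:exist_NPG} by instantiating Lemma~\ref{lem:exist_PMD} with the negative-entropy mirror map, taking the (centered) log-probabilities $\theta^n_h(\cdot|s_h) = \log \pi^n_h(\cdot|s_h) - \EE_{a_h^n\sim\pi^n_h}[\log \pi^n_h(a_h^n|s_h)]$ (and analogously for $\tilde\theta$) as dual variables, and reading off the resulting $U_{\max}$ bound. Your flagged bookkeeping points about normalization and feasibility of intermediate policies are exactly the minor checks the paper implicitly relies on.
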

\begin{theorem}\label{thm:non_empty_NPG}
    Under Assump.~\ref{assump:Lipschitz}, given any initial $\vecpi_1 \in \Pi^+$, for any $T \geq 1$ and $\epsilon > 0$, if the agents follow Def.~\ref{def:NPG} with exact Q value, by choosing $U_{\max}$ appropriately, we have $\Psi^\epsilon \neq \emptyset$.
\end{theorem}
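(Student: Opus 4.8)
The plan is to reduce Theorem~\ref{thm:non_empty_NPG} to the general PMD result Theorem~\ref{thm:non_empty_PMD} (equivalently, to the constructive transport Lemma~\ref{lem:exist_NPG}), using two facts: first, by Example~\ref{example:NPG}, NPG is exactly PMD with the negative-entropy regularizer, which is $1$-strongly convex and essentially smooth on each simplex, so Assumption~\ref{assump:regularizer} and all of the PMD machinery apply; second, the feasible set $\Pi^+$ of strictly-positive policies is dense in $\Pi$, so the near-optimality requirement on the target policy in Theorem~\ref{thm:non_empty_PMD} can always be met. Concretely, I would first fix $\vecpi^\star \in \argmax_{\vecpi} \eta^\goal(\vecpi)$ and, given $\epsilon$, choose a target $\tvecpi \in \Pi^+$ with $\|\tvecpi - \vecpi^\star\|_2 \le \epsilon/L$ (for instance by mixing $\vecpi^\star$ with a vanishingly small amount of the uniform policy at every state and step). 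By Assumption~\ref{assump:Lipschitz} this yields $\eta^\goal(\tvecpi) \ge \eta^\goal(\vecpi^\star) - \epsilon = \max_\vecpi \eta^\goal(\vecpi) - \epsilon$, so the first bullet of Theorem~\ref{thm:non_empty_PMD} holds.

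With $\tvecpi$ now fixed, I would invoke Lemma~\ref{lem:exist_NPG} (the NPG instance of Lemma~\ref{lem:exist_PMD}): starting from $\vecpi_1 \in \Pi^+$ and with exact advantages, there is a history-independent strategy $\psi = \{\psi_t\}_{t\in[T]}$ that drives the dynamics exactly to $\vecpi_{T+1} = \tvecpi$, provided $U_{\max}$ is at least the threshold $2H + \tfrac{2}{\alpha T}\max_{n,h,s_h,a^n_h}|\tilde\theta^{n}_h(a^n_h|s_h) - \theta^{n}_h(a^n_h|s_h)|$ appearing in Lemma~\ref{lem:exist_PMD}, where $\theta, \tilde\theta$ are the zero-mean dual variables of $\vecpi_1$ and $\tvecpi$. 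Since $\vecpi_1$ and $\tvecpi$ are fixed elements of $\Pi^+$, this threshold is a finite number, so ``choosing $U_{\max}$ appropriately'' just means taking it at least this large. For this $\psi$ one then has $\Delta_{\psi,T}(f^*) = \max_\vecpi \eta^\goal(\vecpi) - \eta^\goal(\vecpi_{T+1}) = \max_\vecpi \eta^\goal(\vecpi) - \eta^\goal(\tvecpi) \le \epsilon$, hence $\psi \in \Psi^\epsilon$ and $\Psi^\epsilon \ne \emptyset$.

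The delicate point to flag is the order of quantifiers. The required $U_{\max}$ depends on the dual variable $\tilde\theta$ of $\tvecpi$, and for the entropy map $\tilde\theta^n_h(a|s)$ behaves like $\log \tilde\pi^n_h(a|s)$ (up to the zero-mean shift), which blows up as $\tvecpi$ approaches the boundary of $\Pi$ — this is exactly the essential smoothness of $\phi^n$, and it is why NPG can only reach $\epsilon$-close to a deterministic optimizer, never exactly. So one cannot pick $U_{\max}$ first and $\tvecpi$ afterward; the correct order is to fix $\epsilon$, then fix $\tvecpi \in \Pi^+$ close enough to $\vecpi^\star$ (using only Lipschitzness and density, with no budget involved), and only then set $U_{\max}$ as a function of $\vecpi_1, \tvecpi, \alpha, T, H$. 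Because the theorem quantifies $U_{\max}$ after $\epsilon$ and $T$, there is no circularity, and no lower bound on $T$ is needed: the transport construction spreads the required dual increment $(\tilde\theta - \theta)/(\alpha T)$ evenly across the $T$ steps. The rest is bookkeeping already carried out in Example~\ref{example:NPG} and Lemma~\ref{lem:exist_PMD} — verifying the regularizer meets Assumption~\ref{assump:regularizer} and that the explicit steering rewards there lie in $[0, U_{\max}]$ under the stated bound.
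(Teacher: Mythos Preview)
Your proposal is correct and follows essentially the same route as the paper: reduce NPG to the PMD case via Example~\ref{example:NPG}, use Lipschitzness of $\eta^\goal$ together with a uniform-mixture approximation to obtain a feasible target $\tvecpi \in \Pi^+$ with $\eta^\goal(\tvecpi) \ge \max_\vecpi \eta^\goal(\vecpi) - \epsilon$, then invoke Lemma~\ref{lem:exist_PMD}/Theorem~\ref{thm:non_empty_PMD} and set $U_{\max}$ to the resulting finite threshold. Your extra paragraph on the order of quantifiers and the blow-up of the entropy dual variables near $\partial\Pi$ is a nice clarification the paper leaves implicit.
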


\paragraph{Proof for Lem.~\ref{lem:exist_NPG} and Thm.~\ref{thm:non_empty_NPG}}
The proof is by directly applying Lem.~\ref{lem:exist_PMD} and Thm.~\ref{thm:non_empty_PMD} since NPG is a special case of PMD with KL-Divergence as Bregman Divergence.
For any $\vecpi, \tvecpi \in \Pi^+$, we consider the dual variables $\theta, \tilde\theta$ such that:
\begin{align}
    \theta^n_h(\cdot|s_h) = \log \pi^n_h(\cdot|s_h) - \EE_{a_h^n\sim\pi^n_h}[\log \pi^n_h(a_h^n|s_h)],\quad \tilde\theta^n_h(\cdot|s_h) = \log \tilde\pi^n_h(\cdot|s_h) - \EE_{a_h^n\sim\pi^n_h}[\log \tilde\pi^n_h(a_h^n|s_h)].\label{eq:NPG_dual_var}
\end{align}
\paragraph{Choice of $U_{\max}$ in Lem.~\ref{lem:exist_NPG}}
By applying Lem.~\ref{lem:exist_PMD} and Thm.~\ref{thm:non_empty_PMD}, we consider the following choice of $U_{\max}$
\begin{align}
    U_{\max} \geq 2H+\frac{2}{\alpha T}(\max_{n,h,s_h,a_h}|\log \frac{\tpi^n_h(s_h,a^n_h)}{\pi^n_h(s_h,a^n_h)} - \EE_{\ta^n_h\sim\pi^n_h}[\log \frac{\tpi^n_h(s_h,\ta^n_h)}{\pi^n_h(s_h,\ta^n_h)}]|).\label{eq:choice_Umax_NPG}
\end{align}
\paragraph{Choice of $U_{\max}$ in Thm.~\ref{thm:non_empty_NPG}}
We denote $\vecpi^*\in\argmax_{\vecpi\in\Pi}\eta^\goal(\vecpi) \not\in \Pi^+$. 

When $\vecpi^* \in \Pi^+$, we can directly apply Thm.~\ref{thm:non_empty_PMD} with $\tvecpi \gets \vecpi^*$, and choosing $U_{\max}$ correspondingly following Eq.~\eqref{eq:choice_Umax_NPG}.

However, in some cases, $\vecpi^* \not\in\Pi^+$ because it takes deterministic action in some states.
In that case, since $\eta^\goal$ is $L$-Lipschitz in $\vecpi$, we can consider the mixture policy $\tvecpi := (1-O(\frac{\epsilon}{L}))\vecpi^* + O(\frac{\epsilon}{L})\vecpi_{\text{Uniform}}$, where $\vecpi_{\text{Uniform}}$ is the uniform policy.
As a result, we have $\tvecpi \in \Pi^+$ as well as $\eta^\goal(\tvecpi)\geq  \max_{\vecpi\in\Pi}\eta^\goal(\vecpi) - \epsilon$.
Then the $U_{\max}$ can be chosen following Eq.~\eqref{eq:choice_Umax_NPG}.

\subsubsection{The General Incentive Driven Agents under Assump.~\ref{assump:incentive_driven}}\label{appx:existence_PMD_stochastic_biased}
\begin{theorem}[Formal Version of Thm.~\ref{thm:incentive_driven_agents_NPG} for the general PMD]\label{thm:incentive_agents_PMD}
    Under Assump.~\ref{assump:Lipschitz} and Assump.~\ref{assump:regularizer}, given the initial $\vecpi_1:=\vecpi \in \Pi^+$, for any $\epsilon > 0$, if the agents follow Def.~\ref{def:NPG} under the Assump.~\ref{assump:incentive_driven}, then $\Psi^\epsilon_{T,U_{\max}} \neq \emptyset$ if the following conditions are satisfied:
    \begin{itemize}[leftmargin=*]
        \item There exists feasible $\tvecpi \in \Pi^+$ such that $\eta^\goal(\tvecpi) \geq \max_\vecpi \eta^\goal(\vecpi) - \frac{\epsilon}{2}$
        \item Denote $\theta$ and $\tilde\theta$ as the dual variables associated with $\vecpi$ and $\tvecpi$, respectively. We require $U_{\max} \geq 2(H + \frac{\lambda_{\min}}{\alpha\lambda_{\max}^2}(1 + \frac{\lambda_{\min}}{\lambda_{\max}})^T\|\tilde\theta - \theta\|_2)$ and $T = \Theta(\frac{\lambda_{\max}^2}{\lambda_{\min}^2}\log\frac{L\|\tilde\theta-\theta\|_2}{\mu\epsilon})$.
    \end{itemize}
\end{theorem}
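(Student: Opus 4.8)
The plan is to reduce the statement to a tracking problem in the dual (mirror) space and then to imitate the construction behind Lem.~\ref{lem:exist_PMD}, now absorbing the bias and variance of $\hat A$ permitted by Assump.~\ref{assump:incentive_driven} via a contraction estimate. First I would handle the target policy: if $\argmax_{\vecpi}\eta^\goal(\vecpi)$ is already in $\Pi^+$, take it as $\tvecpi$; otherwise mix it with the uniform policy by a $\Theta(\epsilon/L)$ fraction, which by Assump.~\ref{assump:Lipschitz} yields $\tvecpi\in\Pi^+$ with $\eta^\goal(\tvecpi)\geq\max_\vecpi\eta^\goal(\vecpi)-\epsilon/2$. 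Let $\theta,\tilde\theta$ be the (policy-centered) dual variables of $\vecpi_1$ and $\tvecpi$. Using the Lipschitz-projection bound stated right after Assump.~\ref{assump:regularizer} together with the $L$-Lipschitzness of $\eta^\goal$, it suffices to build $\psi$ so that the dual iterates satisfy $\EE_{\psi,f^*}\big[\|\theta_{T+1}-\tilde\theta\|_2\big]\leq \mu\epsilon/(2L)$ (up to the usual constant shift), since then $\EE\|\vecpi_{T+1}-\tvecpi\|_2\leq\epsilon/(2L)$ and hence $\EE[\max_\vecpi\eta^\goal(\vecpi)-\eta^\goal(\vecpi_{T+1})]\leq\epsilon$.

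For the construction, at each step $t$ I would feed the reward formula from the proof of Lem.~\ref{lem:exist_PMD} with guidance term $\nu^n_{t,h}(s_h,\cdot)=\tfrac{c}{\alpha}\big(\tilde\theta^n_{t,h}(\cdot|s_h)-\theta^n_{t,h}(\cdot|s_h)\big)$, where $c:=\lambda_{\min}/\lambda_{\max}^2$ (note $\lambda_{\min}\leq\lambda_{\max}$ by Cauchy--Schwarz applied to Assump.~\ref{assump:incentive_driven}) and $\theta^n_{t,h},\tilde\theta^n_{t,h}$ are the $\pi^n_{t,h}$-centered representatives of the current and target dual variables. As in Lem.~\ref{lem:exist_PMD}, this makes $\vecu_t$ nonnegative and makes $\EE_{\pi^n_{t,h}}[u^n_{t,h}]$ a global constant, so by Lem.~\ref{lem:const_vec_same_projection} the policy dynamics collapse to $\theta_{t+1}=\theta_t+\alpha\hat{A}^{\vecpi_t}_{|\vecr+\vecu_t}$ with exact advantage $A^{\vecpi_t}_{|\vecr+\vecu_t}=\tfrac{c}{\alpha}(\tilde\theta_t-\theta_t)$. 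Writing $g_t:=\theta_t-\tilde\theta_t$, expanding $\|g_{t+1}\|_2^2=\|g_t\|_2^2+2\alpha\langle g_t,\hat{A}^{\vecpi_t}_{|\vecr+\vecu_t}\rangle+\alpha^2\|\hat{A}^{\vecpi_t}_{|\vecr+\vecu_t}\|_2^2$ and invoking the two inequalities of Assump.~\ref{assump:incentive_driven} (with $A^{\vecpi_t}_{|\vecr+\vecu_t}=-\tfrac{c}{\alpha}g_t$) gives the one-step contraction $\EE\big[\|g_{t+1}\|_2^2\mid\vecpi_t\big]\leq(1-2c\lambda_{\min}+c^2\lambda_{\max}^2)\|g_t\|_2^2=(1-\lambda_{\min}^2/\lambda_{\max}^2)\|g_t\|_2^2$. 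Taking iterated expectations and using Jensen yields $\EE\|g_{T+1}\|_2\leq(1-\lambda_{\min}^2/\lambda_{\max}^2)^{T/2}\|\tilde\theta-\theta\|_2$, which falls below $\mu\epsilon/(2L)$ once $T=\Theta\!\big(\tfrac{\lambda_{\max}^2}{\lambda_{\min}^2}\log\tfrac{L\|\tilde\theta-\theta\|_2}{\mu\epsilon}\big)$. For the $U_{\max}$ requirement, the \emph{pathwise} bound $\|\hat{A}^{\vecpi_t}_{|\vecr+\vecu_t}\|_2\leq\lambda_{\max}\|A^{\vecpi_t}_{|\vecr+\vecu_t}\|_2$ gives $\|g_{t+1}\|_2\leq(1+\lambda_{\min}/\lambda_{\max})\|g_t\|_2$, hence $\|g_t\|_2\leq(1+\lambda_{\min}/\lambda_{\max})^T\|\tilde\theta-\theta\|_2$ for every $t$, and the reward bound from Lem.~\ref{lem:exist_PMD} ($\|u^n_{t,h}\|_\infty\leq 2H+2\|\nu^n_{t,h}\|_\infty\leq 2H+\tfrac{2c}{\alpha}\|g_t\|_2$, using that $\nu^n_{t,h}$ and $A^{n,\vecpi_t}_{h|r^n}$ are $\pi^n_{t,h}$-centered) reproduces exactly $U_{\max}\geq 2\big(H+\tfrac{\lambda_{\min}}{\alpha\lambda_{\max}^2}(1+\lambda_{\min}/\lambda_{\max})^T\|\tilde\theta-\theta\|_2\big)$.

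The step I expect to be the main obstacle is the bookkeeping of the constant-shift ambiguity of the dual parametrization. The recursion $\theta_{t+1}=\theta_t+\alpha\hat{A}^{\vecpi_t}_{|\vecr+\vecu_t}$ keeps $\theta_{t+1}$ centered with respect to $\pi_t$ rather than $\pi_{t+1}$, and re-centering against $\pi_{t+1}$ in the Euclidean norm is not contractive in general; the clean fix is to track progress by the quotient distance $\Phi_t:=\min_{c}\|\theta_t-\tilde\theta+c\mathbf{1}\|_2^2$ over per-block constants $c$ (equivalently, center against the uniform distribution, which is the Euclidean-optimal centering), verify that both the one-step contraction and the Lipschitz-projection inequality carry over to $\Phi_t$, and use $\pi_t$-centered representatives only where they are actually needed, namely in bounding $\|\nu^n_{t,h}\|_\infty$ for the $U_{\max}$ constraint. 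Once this is in place, the NPG statement of Thm.~\ref{thm:incentive_driven_agents_NPG} follows by specializing to $\phi^n(z)=\sum_{a^n}z(a^n)\log z(a^n)$ (so $\mu=1$ and the dual variables are those of Eq.~\eqref{eq:NPG_dual_var}), exactly as in Appx.~\ref{appx:existence_PMD_exact}.
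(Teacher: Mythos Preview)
Your proposal is correct and essentially identical to the paper's proof: your step-size $c/\alpha=\lambda_{\min}/(\alpha\lambda_{\max}^2)$ is exactly the paper's $1/\gamma$ with $\gamma=\alpha\lambda_{\max}^2/\lambda_{\min}$, and the one-step contraction $\EE[\|g_{t+1}\|_2^2]\leq(1-\lambda_{\min}^2/\lambda_{\max}^2)\|g_t\|_2^2$, the resulting $T=\Theta\big(\tfrac{\lambda_{\max}^2}{\lambda_{\min}^2}\log\tfrac{L\|\tilde\theta-\theta\|_2}{\mu\epsilon}\big)$, and the pathwise growth bound for $U_{\max}$ all match line by line. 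Your closing discussion of the constant-shift bookkeeping is a valid technical caveat that the paper's proof simply leaves implicit (it fixes the centering convention once after Lem.~\ref{lem:const_vec_same_projection} and does not revisit it), so if anything you are being more careful there, not departing from the approach.
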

\begin{remark}
    In Thm.~\ref{appx:existence_PMD_stochastic_biased}, our bound for $U_{\max}$ here is just a worst-case bound to handle the noisy updates in the worst case.
    With high probability, the dual variable $\theta_t$ will converge to $\tilde{\theta}$ and the steering reward does not have to be as large as $U_{\max}$.
\end{remark}
\begin{proof}
    Given a $\vecpi_t$, we consider the following steering reward $u_t$:
    \begin{align*}
        u^{n}_{t,h}(s_h,a^n_h) =& \nu^{n}_{t,h}(s_h,a^n_h,\vecpi_t) - A^{n,\vecpi_t}_{h|r^n}(s_h,a^n_h) - \EE_{\ta^n\sim\pi^{n}_{t,h}(\cdot|s_h)}[\nu^{n}_{t,h}(s_h,\ta^n_h,\vecpi_t) - A^{n,\vecpi_t}_{h|r^n}(s_h,\ta^n_h)] \\
        & -\min_{\bs_h,\ba^n_h}\{\nu^{n}_{t,h}(\bs_h,\ba^n_h,\vecpi_t) - A^{n,\vecpi_t}_{h|r^n}(\bs_h,\ba^n_h) - \EE_{\ta^n\sim\pi^{n}_{t,h}(\cdot|s_h)}[\nu^{n}_{t,h}(\bs_h,\ta^n_h,\vecpi_t) - A^{n,\vecpi_t}_{h|r^n}(\bs_h,\ta^n_h)]\},
    \end{align*}
    Here we choose $\nu^n_{t,h}(s_h,a_h^n,\vecpi_t) := \frac{1}{\gamma} \cdot (\tilde\theta^{n}_h(a_h^n|s_h) - \theta^n_{t+1,h}(a_h^n|s_h))$, where $\tilde\theta$ denotes the dual variable of policy $\tilde\vecpi$ and $\gamma$ will be determined later.
    Comparing with the design in the proof of Thm.~\ref{thm:non_empty_PMD}, here the ``driven term'' $\nu^n_h$ need to depend on $\vecpi_t$ because of the randomness in updates.

    As we can see, $u^{n}_{t,h}(s_h,a^n_h) \geq 0$, and for each $t$, we have:
    \begin{align*}
        \EE[\|\tilde\theta - \theta_{t+1}\|_2^2] =& \EE[\|\tilde\theta - \theta_t\|_2^2] - 2\EE[\la \tilde\theta - \theta_t, \theta_{t+1} - \theta_t \ra] + \EE[\|\theta_{t+1} - \theta_t\|_2^2] \\
        = & \EE[\|\tilde\theta - \theta_t\|_2^2] - 2\alpha \EE[\la \tilde\theta - \theta_t, \hA^{\vecpi_t}_{|\vecr+\vecu_t} \ra] + \EE[\|\hA^{\vecpi_t}_{|\vecr+\vecu_t}\|_2^2] \\
        \leq & (1 - 2\lambda_{\min}\frac{\alpha}{\gamma} + \lambda_{\max}^2\frac{\alpha^2}{\gamma^2})\EE[\|\tilde\theta - \theta_{t}\|_2^2],
    \end{align*}
    which implies
    \begin{align*}
        \EE[\|\tilde\theta - \theta_{T+1}\|_2^2] \leq (1 - 2\lambda_{\min}\frac{\alpha}{\gamma} + \lambda_{\max}^2\frac{\alpha^2}{\gamma^2})^T \|\tilde\theta - \theta\|_2^2.
    \end{align*}
    We consider the choice $\gamma = \frac{\lambda_{\max}^2\alpha}{\lambda_{\min}}$, which implies,
    \begin{align*}
        \EE[\|\tilde\theta - \theta_{T+1}\|_2^2] \leq  (1 - \frac{\lambda_{\min}^2}{\lambda_{\max}^2})^T \|\tilde\theta - \theta\|_2^2.
    \end{align*}
    When $T = 2c_0 \frac{\lambda_{\max}^2}{\lambda_{\min}^2}\log\frac{2L\|\tilde\theta - \theta\|_2}{\mu\epsilon} \geq c_0 \log_{1-\frac{\lambda^2_{\min}}{\lambda_{\max}^2}}(\frac{\nu^2\epsilon^2}{2L^2 \|\tilde\theta - \theta\|_2^2})$ for some constant $c_0$, we have:
    \begin{align*}
        \EE[\|\tilde\theta - \theta_{T+1}\|_2] \leq \frac{\mu\epsilon}{2L},
    \end{align*}
    which implies,
    \begin{align*}
        \EE[\eta(\vecpi^*) - \eta^\goal(\vecpi_{T+1})] \leq \frac{\epsilon}{2} + L\EE[\|\tvecpi - \vecpi_{T+1}\|_2] \leq \frac{\epsilon}{2} + \frac{L}{\mu}\EE[\|\tilde\theta - \theta_{T+1}\|_2] = \epsilon.
    \end{align*}
    Next, we discuss the choice of $U_{\max}$, by Assump.~\ref{assump:incentive_driven}, we know,
    \begin{align*}
        \|\tilde\theta - \theta_{t+1}\|_2 =& \|\tilde\theta - \theta_t - \alpha \hA^{\vecpi_t}_{|\vecr+\vecu_t}\|_2 \leq \|\tilde\theta - \theta_t\|_2 + \alpha\|\hA^{\vecpi_t}_{|\vecr+\vecu_t}\|_2 \\
        \leq & \|\tilde\theta - \theta_t\|_2 + \alpha\lambda_{\max}\|A^{\vecpi_t}_{|\vecr+\vecu_t}\|_2 \\
        \leq & (1 + \frac{\lambda_{\min}}{\lambda_{\max}})\|\tilde\theta - \theta_t\|_2
    \end{align*}
    where we use the fact that $\|A^{\vecpi_\tau}_{|\vecr+\vecu_\tau}\|_2 = \frac{1}{\gamma}\|\tilde\theta - \theta_\tau\|_2$ and our choice of $\gamma$.
    Therefore, for all $t\in[T]$, $\|\tilde\theta - \theta_t\|_2 \leq (1 + \frac{\lambda_{\min}}{\lambda_{\max}})^T\|\tilde\theta - \theta\|_2$.
    To ensure our design of $u^n_{t,h}$ is feasible, we need to set:
    \begin{align*}
        U_{\max} =& 2(H + \frac{1}{\gamma}(1 + \frac{\lambda_{\min}}{\lambda_{\max}})^T\|\tilde\theta - \theta\|_2) \\
        =& 2(H + \frac{\lambda_{\min}}{\alpha\lambda_{\max}^2}(1 + \frac{\lambda_{\min}}{\lambda_{\max}})^T\|\tilde\theta - \theta\|_2).
    \end{align*}
\end{proof}

\paragraph{Proof for Thm.~\ref{thm:incentive_driven_agents_NPG}}
As we discuss in Example.~\ref{example:NPG}, Assump.~\ref{assump:regularizer} is satisfied with $\mu = 1$.
The proof is a direct application of Thm.~\ref{thm:non_empty_PMD} with the same choice of dual variables as Eq.~\eqref{eq:NPG_dual_var}.

\section{Missing Proofs for Existence when the True Model $f^*$ is Unknown}\label{appx:unknown_model}

In the following, we establish some technical lemmas for the maximal likelihood estimator. 
Given a steering dynamics model class $\cF$ and the true dynamics $f^* \sim p_0$ and a steering strategy $\psi:\Pi\rightarrow\cU$, we consider a steering trajectory $\tau_{T_0}:=\{\vecpi_1,\vecu_1,...,\vecpi_{T_0},\vecu_{T_0},\vecpi_{T_0+1}\}$ generated by:
\begin{align}
    \forall t\in[T_0],\quad \vecu_t \gets \psi(\vecpi_t),~\vecpi_{t+1}\sim f^*(\cdot|\vecpi_t, \vecu_t), \label{eq:traj_sampling}
\end{align}
where $\vecpi_{t+1}$ is independent w.r.t. $\vecpi_{t'}$ for $t' < t$ conditioning on $\vecpi_t$.
In the following, we will denote $\tau_t:=\{\vecpi_1,\vecu_1,...,\vecpi_{t},\vecu_{t},\vecpi_{t+1}\}$ to be the trajectory up to step $t$.

For any $f\in\cF$, we define:
\begin{align}
    p_f(\tau_{T_0}) := \prod_{t=1}^{T_0} f(\vecpi_{t+1}|\vecpi_t, \vecu_t). \label{eq:def_pf}
\end{align}

\noindent Given $\tau_{T_0}$, we use $\btau_{T_0}$ to denote the ``tangent'' trajectory $\{(\vecpi_t, \vecu_t,\bar{\vecpi}_{t+1})\}_{t=1}^{T_0}$ where $\bar{\vecpi}_{t+1} \sim f^*(\cdot|\vecpi_t, \vecu_t)$ is independently sampled from the same distribution as $\vecpi_{t+1}$ conditioning on the same $\vecpi_t$ and $u_t$.
\begin{lemma}\label{lem:expectation_of_loss}
    Let $l:\Pi\times\cU\times\Pi\rightarrow \mR$ be a real-valued loss function. 
    Define $L(\tau_{T_0}):=\sum_{t=1}^{T_0} l(\vecpi_t, \vecu_t,\vecpi_{t+1})$ and $L(\btau_{T_0}):=\sum_{t=1}^{T_0} l(\vecpi_t, \vecu_t,\bvecpi_{t+1})$. Then, for arbitrary $t\in[T_0]$,
    \begin{align*}
        \EE[\exp(L(\tau_{t}) - \log \EE_{\btau_{T_0}}[\exp(L(\btau_{t}))|\tau_t])] = 1.
    \end{align*}
\end{lemma}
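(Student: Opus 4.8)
The plan is to recognize the claimed identity as the statement that an exponential-tilting (likelihood-ratio) process built from $l$ is a mean-one martingale along the roll-out generated by $f^*$. Every step can be carried out conditionally on $f^*$, which is fixed throughout a single roll-out, so the randomness of $f^*\sim p_0$ plays no role and I suppress it below; the argument below yields the constant $1$ for every realization of $f^*$, hence also after averaging over $p_0$.

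\textbf{Step 1: simplify the inner expectation.} Fix $t\in[T_0]$ and condition on $\tau_t=\{\vecpi_1,\vecu_1,\dots,\vecpi_t,\vecu_t,\vecpi_{t+1}\}$. By the definition of the tangent trajectory $\btau_{T_0}$, each $\bvecpi_{t'+1}$ is drawn independently from $f^*(\cdot\mid\vecpi_{t'},\vecu_{t'})$ using the \emph{same} $\vecpi_{t'},\vecu_{t'}$ as in $\tau_{T_0}$. Since $L(\btau_t)=\sum_{t'=1}^t l(\vecpi_{t'},\vecu_{t'},\bvecpi_{t'+1})$ depends only on these $t$ independent draws,
\[
\EE_{\btau_{T_0}}\big[\exp(L(\btau_t))\,\big|\,\tau_t\big]=\prod_{t'=1}^t g(\vecpi_{t'},\vecu_{t'}),\qquad g(\vecpi,\vecu):=\EE_{\vecpi'\sim f^*(\cdot\mid\vecpi,\vecu)}\big[\exp(l(\vecpi,\vecu,\vecpi'))\big],
\]
assuming (implicitly needed for the statement to be meaningful) that $g(\vecpi,\vecu)$ is finite on all reachable $(\vecpi,\vecu)$. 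Hence the quantity inside the outer expectation equals
\[
Z_t:=\exp\Big(L(\tau_t)-\log\prod_{t'=1}^t g(\vecpi_{t'},\vecu_{t'})\Big)=\prod_{t'=1}^t\frac{\exp\big(l(\vecpi_{t'},\vecu_{t'},\vecpi_{t'+1})\big)}{g(\vecpi_{t'},\vecu_{t'})},
\]
with the convention $Z_0=1$, and it remains to show $\EE[Z_t]=1$.

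\textbf{Step 2: $Z_t$ is a mean-one martingale.} Let $\cG_t:=\sigma(\vecpi_1,\dots,\vecpi_t)$ be the natural filtration of the roll-out \eqref{eq:traj_sampling}. Since $\vecu_{t'}=\psi(\vecpi_{t'})$ is a deterministic function of $\vecpi_{t'}$, both $Z_{t-1}$ and $g(\vecpi_t,\vecu_t)$ are $\cG_t$-measurable. Writing $Z_t=Z_{t-1}\cdot \exp(l(\vecpi_t,\vecu_t,\vecpi_{t+1}))/g(\vecpi_t,\vecu_t)$ and using that, conditionally on $\cG_t$, $\vecpi_{t+1}\sim f^*(\cdot\mid\vecpi_t,\vecu_t)$,
\[
\EE[Z_t\mid\cG_t]=Z_{t-1}\cdot\frac{1}{g(\vecpi_t,\vecu_t)}\,\EE_{\vecpi_{t+1}\sim f^*(\cdot\mid\vecpi_t,\vecu_t)}\big[\exp(l(\vecpi_t,\vecu_t,\vecpi_{t+1}))\big]=Z_{t-1}.
\]
Iterating the tower property down to $t=0$, and noting $Z_0=1$ since $\vecpi_1$ is deterministic, yields $\EE[Z_t]=\EE[Z_0]=1$, which is the claim.

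\textbf{Main obstacle.} There is no substantive difficulty here: the only care needed is (i) unpacking the definition of the tangent trajectory so as to see that it resamples only the successor policies while freezing each $(\vecpi_{t'},\vecu_{t'})$ — this is exactly what turns the inner expectation into the product $\prod_{t'}g(\vecpi_{t'},\vecu_{t'})$ — and (ii) the integrability caveat that $g$ must be finite on reachable pairs (automatic in the uses of this lemma, where $l$ is built from bounded log-likelihood-ratio terms). Given these, the identity is the standard fact that the exponential-tilting process has constant mean $1$.
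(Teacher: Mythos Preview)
Your proof is correct and follows essentially the same approach as the paper: both recognize that the inner conditional expectation factorizes as $\prod_{t'} g(\vecpi_{t'},\vecu_{t'})$ (the paper writes $\prod_i E^i$), and then peel off one factor at a time by conditioning. You phrase the second step as ``$Z_t$ is a mean-one martingale'' while the paper simply unrolls the expectation inductively from $T_0$ down to $1$, but the underlying computation is identical.
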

\begin{proof}
    We denote $E^i := \EE_{\bvecpi_{i+1}}[\exp(l(\vecpi_i,u_i,\bvecpi_{i+1}))|\vecpi_i,u_i,f^*]$. By definition, we have:
    \begin{align*}
        \EE_{\btau_{t}}[\exp(\sum_{i=1}^t l(\vecpi_i,u_i,\bvecpi_{i+1}))|\tau_{t}] = \prod_{i=1}^k E^i.
    \end{align*}
    Therefore,
    \begin{align*}
        &\EE_{\tau_{T_0}}[\exp(L(\tau_{T_0}) - \log \EE_{\btau_{T_0}}[\exp(L(\btau_{T_0}))|\tau_{T_0}])]\\
        =&\EE_{\tau_{T_0-1}\cup\{\vecpi_{T_0},\vecu_{T_0}\}}[\EE_{\vecpi_{T_0+1}}[\frac{\exp(\sum_{t=1}^{T_0}l(\vecpi_{t},\vecu_{t},\vecpi_{t+1}))}{\EE_{\btau_{T_0}}[\exp(\sum_{t=1}^{T_0} l(\vecpi_{t},\vecu_{t},\vecpi_{t+1}))|\tau_{T_0}]} |\tau_{T_0-1}\cup\{\vecpi_{T_0},\vecu_{T_0}\}]]\\
        =&\EE_{\tau_{T_0-1}\cup\{\vecpi_{T_0},\vecu_{T_0}\}}[\EE_{\vecpi_{T_0+1}}[\frac{\exp(\sum_{t=1}^{T_0}l(\vecpi_{t},\vecu_{t},\vecpi_{t+1}))}{\prod_{t=1}^{T_0} E^t} |\tau_{T_0-1}\cup\{\vecpi_{T_0},\vecu_{T_0}\}]]\\
        =&\EE_{\tau_{T_0-1}\cup\{\vecpi_{T_0},\vecu_{T_0}\}}[\frac{\exp(\sum_{t=1}^{T_0-1}l(\vecpi_{t},\vecu_{t},\vecpi_{t+1}))}{\prod_{t=1}^{T_0-1} E^t} \cdot \EE_{\vecpi_{T_0+1}}[\frac{l(\vecpi_{T_0},\vecu_{T_0},\vecpi_{T_0+1})}{E^{T_0}} |\tau_{T_0-1}\cup\{\vecpi_{T_0},\vecu_{T_0}\}]]\\
        =&\EE_{\tau_{T_0-1}}[\frac{\exp(\sum_{t=1}^{T_0-1}l(\vecpi_{t},\vecu_{t},\vecpi_{t+1}))}{\prod_{t=1}^{T_0-1} E^t}] = ... = 1.
    \end{align*}
\end{proof}
\begin{restatable}{lemma}{LemMLE}[Property of the MLE Estimator]\label{lem:MLE_Estimator}
    Under the condition in Prop.~\ref{example:one_step_diff}, given the true model $f^*$ and any deterministic steering strategy $\psi:\Pi\rightarrow\cU$, define $f_{\MLE} \gets \arg\max_{f\in\cF} \sum_{t=1}^{T_0} \log f(\vecpi_{t+1}|\vecpi_t, \vecu_t)$, where the trajectory is generated by:
    \begin{align*}
        \forall t\in[T_0],\quad \vecu_t\gets\psi(\vecpi_t),~\vecpi_{t+1}\sim f^*(\cdot|\vecpi_t, \vecu_t),
    \end{align*}
    then, for any $\delta \in (0,1)$, w.p. at least $1-\delta$, we have:
    \begin{align*}
        \sum_{t=1}^{T_0} \mH^2(f_\MLE(\cdot|\vecpi_t, \vecu_t), f^*(\cdot|\vecpi_t, \vecu_t)) \leq \log (\frac{|\cF|}{\delta}).
    \end{align*}
\end{restatable}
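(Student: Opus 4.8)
## Proof Proposal

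The plan is to establish the standard MLE concentration bound via an exponential martingale argument, using Lemma~\ref{lem:expectation_of_loss} as the key technical tool. The strategy is to control each fixed alternative $f \in \cF$ by a suitable choice of loss function, apply a Chernoff-style bound, and then union bound over the finite class $\cF$ to handle the (data-dependent) choice $f_\MLE$.

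First I would fix an arbitrary $f \in \cF$ and define the loss $l_f(\vecpi, \vecu, \vecpi') := \tfrac12 \log \tfrac{f(\vecpi'|\vecpi, \vecu)}{f^*(\vecpi'|\vecpi, \vecu)}$, so that $L_f(\tau_{T_0}) := \sum_{t=1}^{T_0} l_f(\vecpi_t, \vecu_t, \vecpi_{t+1})$ equals half the log-likelihood ratio between $f$ and $f^*$ along the trajectory. The core computation is the identity
\begin{align*}
    \log \EE_{\bvecpi_{t+1}}\!\big[\exp(l_f(\vecpi_t, \vecu_t, \bvecpi_{t+1})) \,\big|\, \vecpi_t, \vecu_t\big]
    = \log \EE_{\bvecpi_{t+1}\sim f^*(\cdot|\vecpi_t,\vecu_t)}\!\Big[\sqrt{\tfrac{f(\bvecpi_{t+1}|\vecpi_t,\vecu_t)}{f^*(\bvecpi_{t+1}|\vecpi_t,\vecu_t)}}\Big]
    = -\mH^2(f(\cdot|\vecpi_t,\vecu_t), f^*(\cdot|\vecpi_t,\vecu_t)),
\end{align*}
using the elementary fact that the Hellinger affinity $\int \sqrt{pq} = 1 - \mH^2(p,q)$ together with $\log(1-x) \le -x$. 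Summing over $t$ (the terms telescope into a product inside the log because each $\bvecpi_{t+1}$ is conditionally independent given $\vecpi_t, \vecu_t$), Lemma~\ref{lem:expectation_of_loss} then gives
\begin{align*}
    \EE\Big[\exp\!\Big(L_f(\tau_{T_0}) + \sum_{t=1}^{T_0}\mH^2\big(f(\cdot|\vecpi_t,\vecu_t), f^*(\cdot|\vecpi_t,\vecu_t)\big)\Big)\Big] \le 1.
\end{align*}
By Markov's inequality, for each fixed $f$, with probability at least $1 - \delta/|\cF|$ we have $L_f(\tau_{T_0}) + \sum_{t=1}^{T_0}\mH^2(f(\cdot|\vecpi_t,\vecu_t), f^*(\cdot|\vecpi_t,\vecu_t)) \le \log(|\cF|/\delta)$, i.e. $\sum_t \mH^2 \le \log(|\cF|/\delta) - L_f(\tau_{T_0})$.

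Then I would take a union bound over all $f \in \cF$, so that the above holds simultaneously for every $f$ with probability at least $1 - \delta$. On this event, apply it to $f = f_\MLE$. By definition of the MLE, $\sum_{t=1}^{T_0} \log f_\MLE(\vecpi_{t+1}|\vecpi_t, \vecu_t) \ge \sum_{t=1}^{T_0} \log f^*(\vecpi_{t+1}|\vecpi_t, \vecu_t)$ (since $f^* \in \cF$ by realizability), which means exactly that $L_{f_\MLE}(\tau_{T_0}) \ge 0$. Plugging this in yields $\sum_{t=1}^{T_0} \mH^2(f_\MLE(\cdot|\vecpi_t, \vecu_t), f^*(\cdot|\vecpi_t, \vecu_t)) \le \log(|\cF|/\delta)$, which is the claim.

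The main obstacle — really the only subtle point — is handling the data dependence of $f_\MLE$: the exponential-moment bound is proved for each \emph{fixed} $f$, but $f_\MLE$ is chosen based on the trajectory. The finiteness of $\cF$ is what rescues us, via the union bound; this is why the $\log|\cF|$ term appears. A secondary point to get right is the bookkeeping in the telescoping/conditional-independence step when invoking Lemma~\ref{lem:expectation_of_loss} — one must check that the "tangent trajectory" $\btau_{T_0}$ in that lemma is exactly the right object, i.e. that $\log\EE_{\btau_{T_0}}[\exp(L_f(\btau_{T_0}))\mid \tau_{T_0}]$ decomposes as the sum of per-step log-affinities, which it does because the $\bvecpi_{t+1}$ are conditionally independent given $(\vecpi_t,\vecu_t)$. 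I would also note in passing that the statement is invoked for deterministic $\psi$, which is exactly the regime in which the trajectory sampling \eqref{eq:traj_sampling} and the definition \eqref{eq:def_pf} of $p_f$ are stated, so no extra care about randomized strategies is needed here.
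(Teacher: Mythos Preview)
Your proposal is correct and follows essentially the same route as the paper: define the half log-likelihood-ratio loss, invoke Lemma~\ref{lem:expectation_of_loss} to get an exponential-moment identity, apply Markov's inequality with a union bound over $\cF$, then specialize to $f_\MLE$ using realizability and the inequality $-\log x \ge 1-x$ to extract the per-step Hellinger terms. The only slip is notational: in your displayed identity the final ``$=$'' should be ``$\le$'' (as you yourself indicate by citing $\log(1-x)\le -x$), but the argument that follows is unaffected.
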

\begin{proof}
Given a model $f\in\cF$, we consider the loss function:
\begin{align*}
    l_M(\vecpi, u, \vecpi') := 
        \begin{cases}
            \frac{1}{2}\log\frac{f(\vecpi'|\vecpi,u)}{f^*(\vecpi'|\vecpi,u)},& \text{if}~ f^*(\vecpi'|\vecpi,u) \neq 0\\
            0,              & \text{otherwise}
        \end{cases}
\end{align*}
Considering the event $\cE$:
\begin{align*}
    \cE := \{-\log \EE_{\btau_{T_0}}[\exp L_M(\btau_{T_0})|\tau_{T_0}] \leq - L_M(\tau_{T_0}) + \log (\frac{|\cF|}{\delta}),\quad \forall f\in\cF\}.
\end{align*}
where we define $L_M(\tau_{T_0}):=\sum_{t=1}^{T_0} l_M(\vecpi_t, \vecu_t,\vecpi_{t+1})$ and $L_M(\btau_{T_0}):=\sum_{t=1}^{T_0} l_M(\vecpi_t, \vecu_t,\bvecpi_{t+1})$.
Besides, by applying Lem.~\ref{lem:expectation_of_loss} on $l_M$ defined above and applying Markov inequality and the union bound over all $f\in\cF$, we have $\Pr(\cE) \geq 1-\delta$.
On the event $\cE$, we have:
\begin{align*}
    &-\log \EE_{\btau_{T_0}}[\exp L_{f_\MLE}(\btau_{T_0})|\tau_{T_0}] \\
    \leq& - L_{f_\MLE}(\tau_{T_0}) + \log (\frac{|\cF|}{\delta}) \\
    \leq &  l_{\MLE}(f^*) - l_{\MLE}(f_\MLE) + \log (\frac{|\cF|}{\delta}) \\
    \leq & \log (\frac{|\cF|}{\delta}) \tag{$f_\MLE$ maximizes the log-likelihood}.
\end{align*}
Therefore,
\begin{align*}
    \log (\frac{|\cF|}{\delta}) \geq & -\sum_{t=1}^{T_0} \log \EE_{_{\btau_{T_0}}}[\sqrt{\frac{f(\bvecpi_{t+1}|\vecpi_t, \vecu_t)}{f^*(\bvecpi_{t+1}|\vecpi_t, \vecu_t)}}|\vecpi_t, \vecu_t,f^*]\\
    \geq & \sum_{t=1}^{T_0} 1 - \EE_{\bvecpi_{t+1}}[\sqrt{\frac{f(\bvecpi_{t+1}|\vecpi_t, \vecu_t)}{f^*(\bvecpi_{t+1}|\vecpi_t, \vecu_t)}}|\vecpi_t, \vecu_t,f^*] \tag{$-\log x \geq 1-x$}\\
    =&\sum_{t=1}^{T_0} \mH^2(f(\cdot|\vecpi_t, \vecu_t), f^*(\cdot|\vecpi_t, \vecu_t)).
\end{align*}
\end{proof}

\PropOneStepDiff*
\begin{proof}
    Consider the steering strategy $\psi(\vecpi) = u_\vecpi$.
    Given any $f\in\cF$, and the trajectory sampled by $\psi$ and $f$, by Lem.~\ref{lem:MLE_Estimator}, w.p. $1-\frac{\delta}{|\cF|}$, if $f_\MLE \neq f$, we have:
    \begin{align*}
        2\log (\frac{|\cF|}{\delta}) \geq \sum_{t=1}^{T_0} \mH^2(f(\cdot|\vecpi_t, \vecu_t), f_\MLE(\cdot|\vecpi_t, \vecu_t)) \geq T_0 \zeta.
    \end{align*}
    By union bound over all $f\in\cF$, if $T_0 = \lceil\frac{4}{\zeta}\log\frac{|\cF|}{\delta}\rceil + 1$, the following holds:
    \begin{align*}
        \max_{f\in\cF}\EE_{f,\psi}\left[\mathbb{I}[f = f_{\MLE}]\right] = \max_{f\in\cF}\EE_{f,\psi}\left[\mathbb{I}[f = \argmax_{f'\in\cF} \sum_{t=1}^{T_\delta}\log f'(\vecpi_{t+1}|\vecpi_t, \vecu_t)]\right] \geq 1 - \delta.
    \end{align*}

\end{proof}

\ThmSuffUnknown*
\begin{proof}
    We denote $\psi_{\Explore} := \{\psi_{\Explore,t}\}_{t\in[T]}$ to be the exploration strategy to identify $f^*$.
    Given a $\vecpi_{\tilde T}$, we denote $\psi_{\vecpi_{\tilde T}}^{\epsilon/2} := \{\psi_{\vecpi_{\tilde T},t}^{\epsilon/2}\}_{t\in[T]} \in \Psi^{\epsilon/2}_{T - \tilde T}(\vecpi_{\tilde T})$ to be one of the steering strategy with $\epsilon$-optimal gap starting from $\vecpi_{\tilde T}$.

    We consider the history-dependent steering strategy $\psi:=\{\psi_t\}_{t\in[T]}$, such that for $t \leq \tilde T$, $\psi_t = \psi_{\Explore,t}$, and for all $t > \tilde T$, we have $\psi_t = \psi_{\vecpi_{\tilde T},t}^{\epsilon/2}$.

    As a result, for any $f\in\cF$, the final gap would be:
    \begin{align*}
        \Delta_{\psi, T}(f) = \Pr(f_\MLE = f) \cdot \frac{\epsilon}{2} + \Pr(f_\MLE \neq f) \cdot \eta_{\max} \leq \epsilon,
    \end{align*}
    which implies $\psi \in \Psi^\epsilon_T(\cF;\vecpi_1)$.
\end{proof}

\section{Generalization to Partial Observation MDP Setup}\label{appx:POMDP_Extension}
\subsection{POMDP Basics}

\paragraph{Partial Observation Markov Decision Process}
A (finite-horizon) Partial-Observation Markov Decision Process (with hidden states) can be specified by a tuple $M:=\{\nu_1,T,\cX,\cU,\cO,\mT,\eta,\mO\}$. Here $\nu_1$ is the initial state distribution, $L$ is the maximal horizon length, $\cX$ is the hidden state space, $\cU$ is the action space, $\cO$ is the observation space. Besides, $\mT:\cX\times\cU\rightarrow\cX$ denotes the stationary transition function, $\mO:\cX\rightarrow\Delta(\cO)$ denotes the stationary emission model, i.e. the probability of some observation conditioning on some state.
We will denote $\cH_h:=\cO_1\times\cU_1...\times\cO_h$ to be the history space, and use $\tau_h := \{o_1,u_1,...,o_h\}$ to history observation up to step $h$.
We consider the history dependent policy $\steerpi:=\{\steerpi_1,...,\steerpi_H\}$ with $\steerpi_h:\cH_h\rightarrow\Delta(\cU)$.
Starting from the initial state $x_1$, the trajectory induced by a policy $\steerpi$ is generated by:
\begin{align*}
    \forall h\in[H],\quad o_h \sim \mO(\cdot|x_h),\quad u_h\sim \steerpi_h(\cdot|\tau_h),\quad \eta_h\sim \eta_h(o_h,u_h),\quad x_{h+1} \sim \mT(\cdot|x_h,u_h).
\end{align*}

\subsection{Steering Process as a POMDP}
Given a game $G$, we consider the following Markovian agent dynamics:
\begin{align*}
    \forall t\in[T],\quad \tau_t\sim \vecpi_t,\quad \vecpi_{t+1} \sim f(\cdot|\vecpi_t, \tau_t, r),
\end{align*}
where $\tau_t := \{s_1^{t,k},\veca_1^{t,k},,...,s_H^{t,k},\veca_H^{t,k}\}_{k=1}^K$ is several trajectories generated by the policy $\vecpi_t$.

In each step $t$, we assume the agents first collect trajectories $\tau_t$ with policy $\vecpi_t$, and then optimize their policies following some update rule $f(\cdot|\vecpi_t,\tau_t,r)$.
Comparing with the Markovian setup in Sec.~\ref{sec:formal_formulation}, here $f$ has additional dependence on the trajectories $\tau_t$.

Based on this new formulation, the dynamics given the steering strategy is defined by:
\begin{align*}
    \forall t\in[T],\quad & \tau_t\sim \vecpi_t, \quad \red{u_t} \sim \psi_t(\cdot|\tau_1,u_1,...,\tau_{t-1},u_{t-1},\vecpi_t), \quad \vecpi_{t+1} \sim f(\cdot|\vecpi_t, r + \red{u_t})
    ,
\end{align*}
In Fig.~\ref{fig:POMDP_Formulation}, we illustrate the steering dynamics by Probabilistic Graphical Model (PGM).
Here we treat the joint of $\vecpi_t$ and $\tau_t$ as the hidden state at step $t$, and the trajectory $\tau_t$ is the partial observation $o_t$ received by the mediator.
Next, we introduce the notion of decodable POMDP, where the hidden state is determined by a short history.

\begin{definition}[$m$-Decodable POMDP]\label{def:decodable_MDP}
    Given a POMDP $M$, we say it is $m$-decodale, if there exists a decoder $\phi$, such that, $x_h = \phi(o_{h-m},u_{h-m},...o_{h-1},u_{h-1},o_h)$,
\end{definition}
In our steering setting, if for any $f\in\cF$, $f$ is $m$-decodable, we just need to learn a steering strategy $\psi:=(\cO\times\cU)^{m}\times\cO\rightarrow\cU$, which predicts the steering reward given the past $m$-step history.
This is the motivation for our experiment setup in the Grid World Stag Hunt game in Sec.~\ref{sec:experiments_known}.
More concretely, we assume the agents trajectories in the past few steps can be used as sufficient statistics for the current policy, and use them as input of the steering strategy (see Appx.~\ref{appx:exp_staghunt_normal_form} for more details).

\begin{figure}
    \centering
    \includegraphics[scale=0.6]{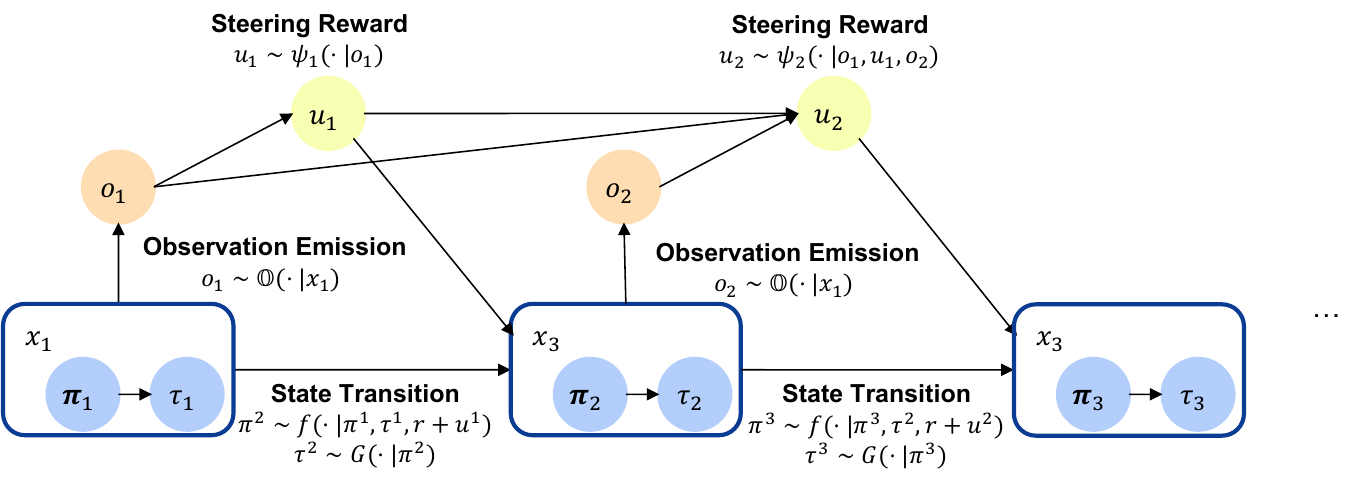}
    \caption{\textbf{Probabilistic Graphic Model (PGM) of the POMDP formulation of the steering process}. Starting with the initial state $x_1:=(\pi_1,\tau_1)$, for all $t \geq 1$, the mediator receives observation $o_t\sim\mO(\cdot|x_t)$ and output the steering reward given the history $u_t \sim \psi(\cdot|o_1,u_1,...,o_t)$. The agents then update their policies following the dynamics $f$ and the modified reward function $r+u_t$.}
    \label{fig:POMDP_Formulation}
\end{figure}

\section{Missing Experiment Details}\label{appx:experiment_details}
\subsection{About Initialization in Evaluation}\label{appx:initialization}
In some experiments, we will evaluate our steering strategies with multiple different initial policy $\vecpi_1$, in order to make sure our evaluation results are representative.

Here we explain how we choose the initial policies $\vecpi_1$. We will focus on games with two actions which is the only case we use this kind of initialization.
For each player, given an integer $i$, we construct an increasing sequence with common difference $\text{Seq}_i:=(\frac{1}{2i},\frac{3}{2i},...,\frac{2i-1}{2i})$.
Then, we consider the initial policies $\vecpi_1$ such that $\pi^1(a^1) = 1- \pi^1(a^2)\in\text{Seq}_i$, $\pi^2(a^1) = 1 - \pi^2(a^2)\in\text{Seq}_i$.
In this way, we obtain a set of initial policies uniformly distributed in grids with common difference $\frac{1}{i}$.
As a concrete example, the initial points in Fig.~\ref{fig:StagHunt}-(b) marked in color black is generated by the above procedure with $i=10$.

\subsection{Experiments for Known Model Setting}\label{appx:experiment_known_model}
\subsubsection{Experiment Details in Normal-Form Stag Hunt Game}\label{appx:StagHunt_NormalForm}
We provide the missing experiment details for the steering experiments in Fig.~\ref{fig:StagHunt}-(b). 
\paragraph{Choice of $\eta^\goal$}
We consider the total utility as the goal function. But for the numerical stability, we choose $\eta^\goal(\vecpi) = \sum_{n\in[N]} J^n_{|\vecr}(\vecpi) - 10$ where we shift the reward via the maximal utility value 10.

\paragraph{The Steering Strategy}
The steering strategy is a 2-layer MLP with 256 hidden layers and $\texttt{tanh}$ as the activation function.
Given a time step $t$ and the policy $\vecpi_t := \{\pi^1_t, \pi^2_t\}$ with $\pi^n_t(\texttt{H}) + \pi^n_t(\texttt{G}) = 1$ for $n\in\{1,2\}$, the input of the steering strategy is 
\begin{align}
    (\log \sqrt{\frac{\pi^1_t(\texttt{H})}{\pi^1_t(\texttt{G})}}, -\log \sqrt{\frac{\pi^1_t(\texttt{H})}{\pi^1_t(\texttt{G})}}, \log \sqrt{\frac{\pi^2_t(\texttt{H})}{\pi^2_t(\texttt{G})}}, -\log \sqrt{\frac{\pi^2_t(\texttt{H})}{\pi^2_t(\texttt{G})}}, \frac{T - t}{100}).\label{eq:state_input}
\end{align}
Here the first (second) two components correspond to the ``dual variable'' of the policy $\pi^1_t(\texttt{H})$ and $\pi^1_t(\texttt{G})$ ($\pi^2_t(\texttt{H})$ and $\pi^2_t(\texttt{G})$), respectively; the last component is the time embedding because our steering strategy is time-dependent.

The steering strategy will output a vector with dimension 4, which corresponds to the steering rewards for two actions of two players.
Note that here the steering reward function $u^1:\cS\times\cA^1[0, U_{\max}]$ (for agent 1) and $u^2:=\cS\times\cA^2\rightarrow[0, U_{\max}]$ (agent 2) is defined on the joint of state space and individual action space.
This can be regarded as a specialization of the setup in our main text, where we consider $u^n:\cS\times\cA\rightarrow[0, U_{\max}]~ \forall n\in[N]$, which is defined on the joint of state space and the entire action space.

\paragraph{Training Details}
The maximal steering reward $U_{\max}$ is set to be 10, and we choose $\beta = 25$.
We use the PPO implementation of StableBaseline3 \citep{stablebaselines3}.
The training hyper-parameters can be found in our codes in our supplemental materials.

During the training, the initial policy is randomly selected from the feasible policy set, in order to ensure the good performance in generalizing to unseen initialization points.
Another empirical trick we adopt in our experiments is that, we strengthen the learning signal of the goal function by including $\eta^\goal(\vecpi_t)$ for each step $t\in[T]$. In another word, we actually optimize the following objective function:
\begin{align}
    \psi^* \gets & \arg\max_{\psi\in\Psi} \frac{1}{|\cF|}\sum_{f\in\cF}\EE_{\psi, f}\Big[\beta \cdot \eta^\goal(\vecpi_{T+1}) + \sum_{t=1}^T  \beta \cdot \eta^\goal(\vecpi_{t}) - \eta^\cost(\vecpi_t, \vecu_t)\Big]. \label{eq:empirical_trick}
\end{align}
The main reason is that here $T=500$ is very large, and if we only have the goal reward at the terminal step, the learning signal is extremely sparse and the learning could fail.

\paragraph{Other Experiment Results}
In Fig.~\ref{fig:StagHunt_Tradeoff}, we investigate the trade-off between steering gap and the steering cost when choosing different coefficients $\beta$. In general, the larger $\beta$ can result in lower steering gap and higher steering cost.
\begin{figure}


    \centering
    \includegraphics[scale=0.12]{Pictures/StagHunt/OptGap.pdf}
    \includegraphics[scale=0.12]{Pictures/StagHunt/Cost.pdf}
    \caption{Trade-off between Steering Gap (Left) and Steering Cost (Right). (averaged over 5x5 uniformly distributed grids as initializations of $\vecpi_1$, see Appx.~\ref{appx:initialization}).}\label{fig:StagHunt_Tradeoff}
\end{figure}

\subsubsection{Experiment Details in Grid-World Version of Stag Hunt Game}\label{appx:exp_staghunt_normal_form}
We recall the illustration in LHS of Fig.~\ref{fig:staghunt_grid_world}.
We consider a 3x3 grid world environment with two agents ({\color{blue}{blue}} and {\color{red}{red}}).
At the bottom-left and up-right blocks, we have `stag' and `hares', respectively, whose positions are fixed during the game.
At the beginning of each episode, agents start from the up-left and bottom-right blocks, respectively.

For each time step $h\in[H]$, every agent can take four actions \{\texttt{up,down,left,right}\} to move to the blocks next to their current blocks.
But if the agent hits the wall after taking the action (e.g. the agent locates at the most right column and takes the action \texttt{right}), it will not move.
As long as one agent reaches the block with either stag or hare, the agents will receive rewards and be reset to the initial position (up-left and bottom-right blocks).
The reward is defined by the following.
\begin{itemize}[leftmargin=*]
    \item If both agents reach the block with stag at the same time, each of them receive reward 0.25.
    \item If both agents reach the block with hares at the same time, each of them receive reward 0.1.
    \item If one agent reaches the block with hares, it will get reward 0.2 and the other get reward 0.
    \item In other cases, the agents receive reward 0.
\end{itemize}
We choose $H=16$. The best strategy is that all the agents move together towards the block with Stag, so within one episode, the agents can reach the Stag 16 / 2 = 8 times, and the maximal total return would be 8 * 0.25 = 4.0.

In the following, we introduce the training details. Our grid-world environment and the PPO training algorithm is built based on the open source code from \citep{lu2022model}.

\paragraph{Agents Learning Dynamics}
The agents will receive a 3x3x4 image encoding the position of all objects to make the decision.
The agents adopt a CNN, and utilize PPO to optimize the CNN parameters with learning rate 0.005.

\paragraph{Steering Setup and Details in Training Steering Strategy}
Our steering strategy is another CNN, which takes the agents recent trajectories as input.
More concretely, for each steering iteration $t$, we ask the agents to interact and generated 256 episodes with length $H$, and concatenate them together to a tensor with shape [256 * $H$, 3, 3, 4].
The mediator takes that tensor as input and output an 8-dimension steering reward vector.
Here the steering rewards corresponds to the additional rewards given to the agents when one of them reach the blocks with stag or hares (we do not provide individual incentives for states and actions before reaching those blocks).
To be more concrete, the 8 rewards correspond to the additional reward for {\color{blue}{blue}} and {\color{red}{red}} agents for the following 4 scenarios: (1) both agents reach stag together (2) both agents reach hares together (3) this agent reach stag while the other does not reach stag (4) this agent reach hares while the other does not reach hares.

The steering strategy is also trained by PPO. 
We choose $\beta =25$ and learning rate 0.001.
We consider the total utility as the goal function, and we adopt the similar empirical trick as the normal-form version, where we include $\eta^\goal$ into the reward function for every $t\in[T]$ (Eq.~\eqref{eq:empirical_trick}).
The results in Fig.~\ref{fig:staghunt_grid_world} is the average of 5 steering strategies trained by different seeds for 80 iterations.
The two-sigma error bar is shown.

\subsubsection{Experiments in Matching Pennies}
Matching Pennies is a two-player zero-sum game with two actions $\texttt{H}$=Head and $\texttt{T}$=Tail and its payoff matrix is presented in Table~\ref{table:matching_pennies}.
\begin{table}[h]
    \centering
    \caption{Payoff Matrix of Two-Player Game Matching Pennies. Two actions \texttt{H} and \texttt{T} stand for \texttt{Head} and \texttt{Tail}, respectively.}\label{table:matching_pennies}
    \def\arraystretch{1.1}
    \begin{tabular}{ccc}
          & \texttt{H} & \texttt{T} \\ \cline{2-3}
     \texttt{H}   &  \multicolumn{1}{|c|}{(1, -1)} &  \multicolumn{1}{c|}{(-1, 1)}  \\ \cline{2-3}
     \texttt{T}   &  \multicolumn{1}{|c|}{(-1, 1)} &  \multicolumn{1}{c|}{(1, -1)}  \\ \cline{2-3}
    \end{tabular}
\end{table}
\paragraph{Choice of $\eta^\goal$}
In this game, the unique Nash Equilibrium is the uniform policy $\vecpi^\NE$ with $\pi^{n,\NE}(\texttt{H}) = \pi^{n,\NE}(\texttt{T}) = \frac{1}{2}$ for all $n\in\{1,2\}$.
We consider the distance with $\vecpi^\NE$ as the goal function, i.e. $\eta^\goal = -\|\vecpi - \vecpi^\NE\|_2$.

\paragraph{Experiment Setups}
We follow the same steering strategy and training setups for Stag Hunt Game in Appx.~\ref{appx:StagHunt_NormalForm}.
The agents follow NPG to update the policies with learning rate $\alpha = 10$.

\paragraph{Experiment Results}
As shown in Fig.~\ref{fig:comp_w_wo_steering_ZeroSum}-(a), we can observe the cycling behavior without steering guidance \citep{akin1984evolutionary, mertikopoulos2018cycles}.
In contrast, our learned steering strategy can successfully guide the agents towards the desired Nash.
In Fig.~\ref{fig:comp_w_wo_steering_ZeroSum}-(b), we also report the trade-off between steering gap and steering cost with different choice of $\beta$.

\begin{figure}[t]
    \begin{subfigure}{0.5\textwidth}
        \centering
        \includegraphics[scale=0.12]{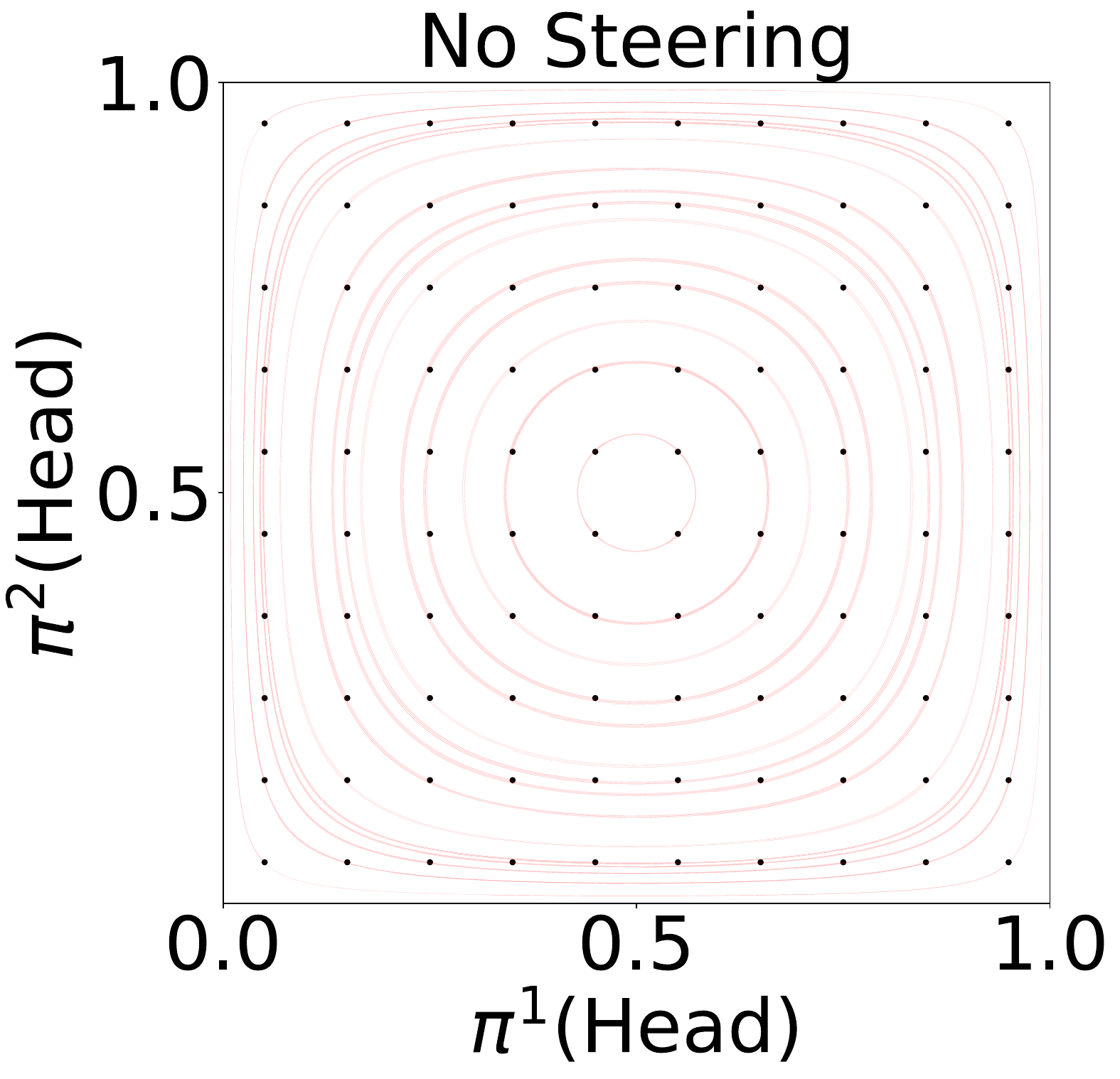}
        \includegraphics[scale=0.12]{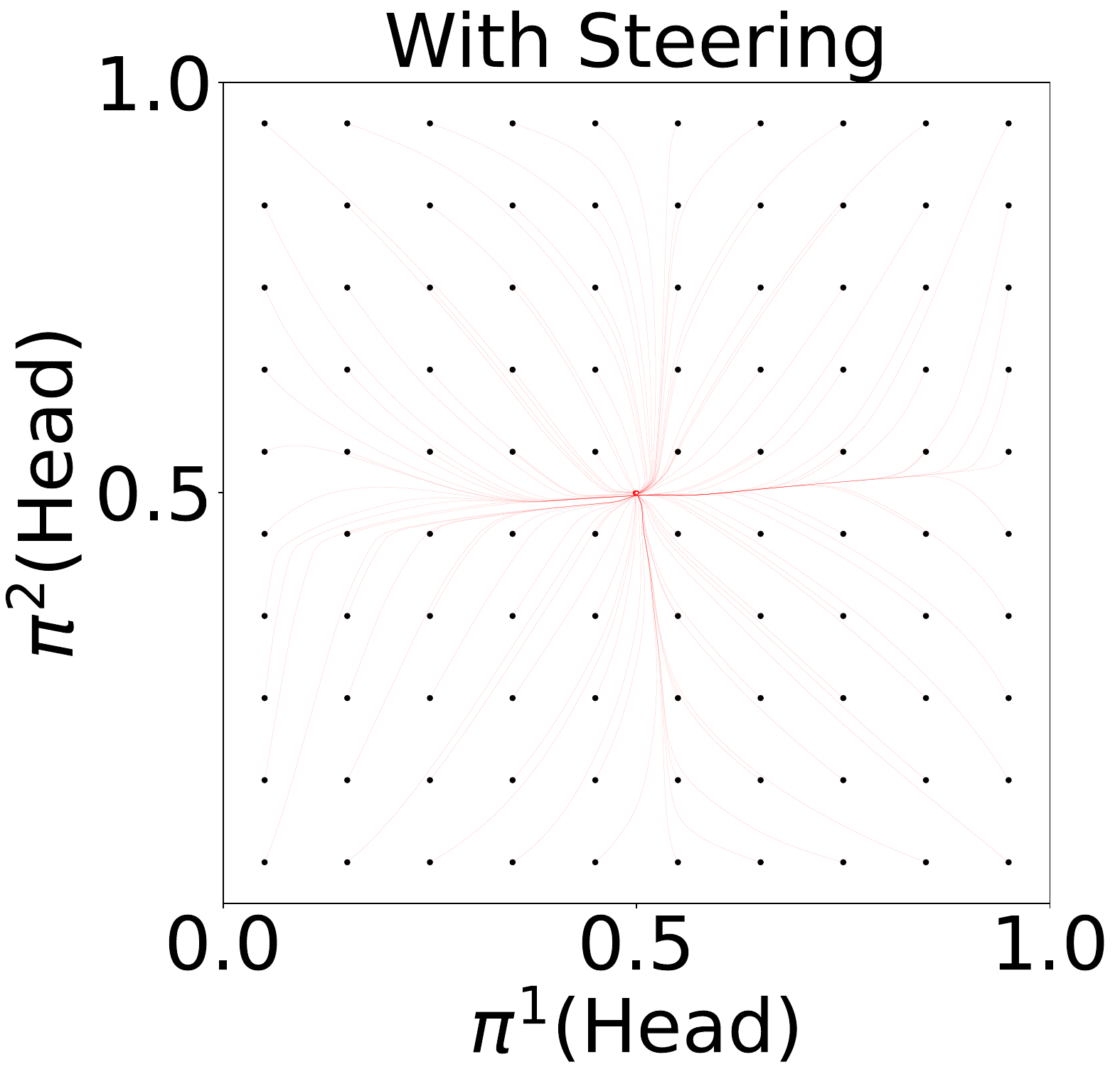}
        \caption{Dynamics of Agents Policies.}
    \end{subfigure}
    \hspace*{\fill} 
    \begin{subfigure}{0.5\textwidth}
        \includegraphics[scale=0.12]{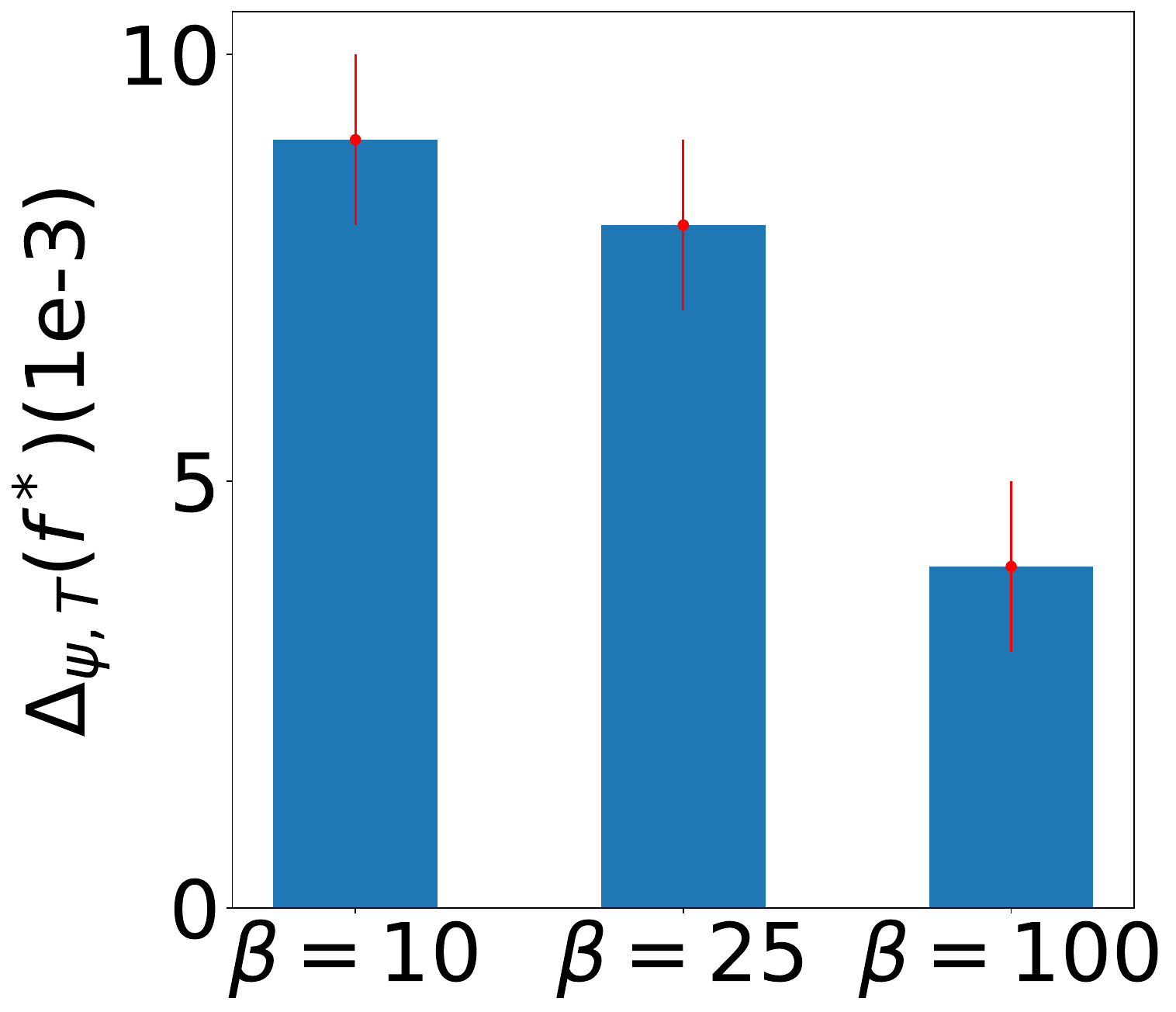}
        \includegraphics[scale=0.12]{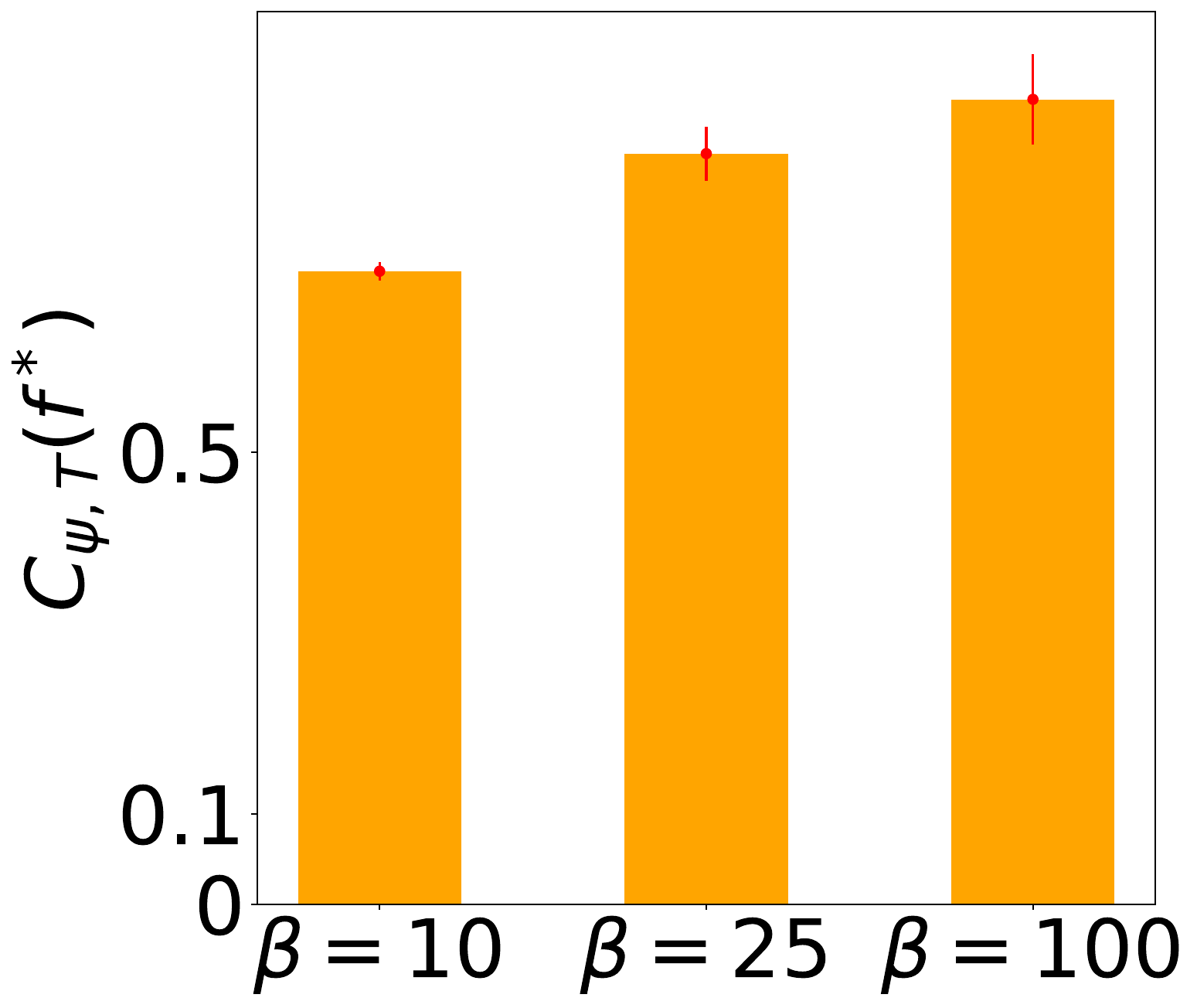}
        \vspace{0.23cm}
        \caption{Trade-off between Accuracy and Cost by $\beta$.}
    \end{subfigure}
    \caption{
        Experiments in MatchingPennies. 
        (a) $x$ and $y$ axes correspond to the probability that agents take \texttt{Head}. Black dots mark the initial policies, and red curves represents the trajectories of agents policies. The steering strategy to plot the figure is trained with $\beta = 25$. (b) We compare $\beta=10,25,100$. Error bar shows 95\% confidence intervals. (averaged over 5x5 uniformly distributed grids as initializations of $\vecpi_1$, see Appx.~\ref{appx:initialization})}
    \label{fig:comp_w_wo_steering_ZeroSum}
\end{figure}

\subsection{Experiments for Unknown Model Setting}
\subsubsection{Details for Experiments with Small Model Set $\cF$}\label{appx:exp_unknown_small_set}

The results in Table~\ref{table:belief_state_based_strategy} is averaged over 5 seeds and the error bars show 95\% confidence intervals.
\paragraph{Training Details for $\psi_{0.7}^*$ and $\psi_{1.0}^*$}
The training of $\psi_{0.7}^*$ and $\psi_{1.0}^*$ follow the similar experiment setup as Appx.~\ref{appx:exp_staghunt_normal_form}, except here the agents adopt random learning rates.
For the choice of $\beta$, we train the optimal steering strategy with $\beta \in \{10,20,30,40,50,60,70,80,90,100\}$ for both $f_{0.7}$ and $f_{1.0}$, and choose the minimal $\beta$ such that the resulting steering strategy can achieve almost 100\% accuracy (i.e. $\Delta_{\psi,f} \leq \epsilon$ for almost all 5x5 uniformly distributed initial policies generated by process in Appx.~\ref{appx:initialization}).
As we reported in the main text, we obtain $\beta = 70$ for $f_{0.7}$ and $\beta = 20$ for $f_{1.0}$.

\paragraph{Training Details for $\psi^*_{\text{Belief}}$}
For the training of $\psi_{\text{Belief}}^*$, the input of the steering strategy is the original state (Eq.~\eqref{eq:state_input}) appended by the belief state of the model.
In each steering step $t\in[T]$, we assume the mediator can observe a learning rate sample $\alpha$, and use it to update the model belief state correspondingly.
The regularization coefficient $\beta$ for the training of $\psi_{\text{Belief}}$ is set to be the expected regularization coefficient over the belief state $\beta = b(f_{0.7}) \cdot 70 + b(f_{1.0}) \cdot 20$.
In another word, we use the sum of the coefficient of two models weighted by the belief state. This is reasonable by the definition of the reward function in the belief state MDP lifted from the original POMDP.
$\psi_{\text{Belief}}^*$ is trained the PPO algorithm.

During the training of $\psi^*_{\text{Belief}}$, we find that the train is not very stable, possibly because the chosen $\beta$ for two models are quite different.
Therefore, we keep tracking the steering gap of the steering strategy during the training and save the model as long as it outperforms the previous ones in steering gap.
Our final evaluation is based on that model.

\subsubsection{Details for Experiments with Large Model Set $\cF$}\label{appx:exp_unknown_large_set}

We set $U_{\max} = 1.0$, and the random exploration strategy (red curve in the left sub-plot in Fig.~\ref{fig:exploration}) will sample the steering reward uniformly from the interval $[0, U_{\max}]$.
We use the PPO \citep{stablebaselines3} to train of exploration policy and also the steering strategy given hidden model.
To amplify the exploration challenge, we set $\beta^n = 1$ when $\pi^n(\texttt{A}) \leq 0.5$ and increase to $\beta^n = 10$ when $\pi^n(\texttt{A}) \leq 0.5$.
As a result, if the mediator follows first-explore-then-exploit strategy and fail to distinguish avaricious agents from the normal ones, adopting large steering reward can lead to much worse performance.

For the training of exploration policy, although the learning signal $\mathbb{I}[f=f_{\MLE}]$ in Proc.~\ref{procedure:large_model_set} is supported by theory, it contains much less information than the posterior probability $[\Pr(f|\vecpi_1,u_1,...,\vecpi_{T},u_{T},\vecpi_{T+1})]_{f\in\cF}$.
Therefore, empirically, we instead train a history-independent steering strategy to maximize the posterior probability of $f$:
\begin{align}
\psi^{\Explore}\gets \arg\max_\psi \frac{1}{|\cF|}\sum_{f\in\cF} \EE_{\psi, f}[\sum_{n\in[N]}\Pr(\lambda^n|\vecpi_1,u_1,...,\vecpi_{\tilde T}, u_{\tilde T} \vecpi_{\tilde T+1})]. \label{eq:explore_train_detail}
\end{align}
Here we use the sum of posteriors of $\lambda^n$s since the $\lambda^n$s are independent for all $n\in[N]$.
We observe it results in better performance, and it is doable by keep tracking the model belief state of each agent.
Besides, similar to Stag Hunt games, we observe that using the posterior $\Pr(\lambda^n|\vecpi_1,u_1,...,\vecpi_{t},u_{t},\vecpi_{t+1})$ as rewards in the non-terminal steps $t < T$ increase the performance, and we use the same trick (Eq.~\eqref{eq:empirical_trick}).

To plot the results in the middle and right sub-plots in Fig.~\ref{fig:exploration}, for each model $f^*\in\{f_1,f_2,f_3\}$, we train three steering strategies (with the same state design in Eq.~\eqref{eq:state_input}). The first one is the oracle strategy, which starts with $\vecpi_1$ and steering for $T=500$ steps.
The second one is the \texttt{FETE} strategy, including an exploration policy and another exploitation policy. 
The exploration policy is trained following the above Eq.~\eqref{eq:explore_train_detail} with exploration horizon $\tilde{T} = 30$.
Then, we estimate the model from samples generated by the exploration policy, and train another exploitation policy following the remaining step of \texttt{FETE} with exploitation horizon $T=470$ (Procedure~\ref{procedure:large_model_set}).
The third strategy \texttt{FETE-RE} is the same as \texttt{FETE} except we just use random policy as the exploration policy, and estimate the model by interaction samples generated by that.
During the evaluation, for \texttt{FETE} and \texttt{FETE-RE}, we steer with the exploration policy for the first 30 steps, and execute the exploitation policy for the rest.
The results are averaged over 5 seeds and two-sigma error bar is shown.

\subsection{Explanation of the Consistency of the Adaption}\label{appx:adaption_explantion}
We first want to highlight it is not easy to have an apples-to-apples comparison with \citep{canyakmaz2024steering}.
First, because the experiment setting in \citep{canyakmaz2024steering} does not present significant exploration challenges, we design and decide to evaluate both methods in the “avaricious agents” setting (Fig.~\ref{fig:exploration}).
Second, SIAR-MPC is specialized for polynomial function classes and the dynamics tractable by MPC, making it difficult to generalize beyond that setting.
Therefore, we have to do some necessary adaption.
To ensure the fair comparison, we consider to use \texttt{FETE-RE} as the adaption of SIAR-MPC in our setting.
For the exploration stage, \texttt{FETE-RE} aligns with SIAR-MPC in using random exploration. For the model identification and exploitation phase, \texttt{FETE-RE} adopts MLE estimation and RL methods to train the exploitation policy, which inherit the same inspirits as SIAR-MPC and also aligned with our original \texttt{FETE}.

From another perspective, the main focus of our empirical comparison between \texttt{FETE} and \citep{canyakmaz2024steering} is the impact of different exploration strategies on the final steering gap and steering costs.
This is reasonable. Because from the discussion in Sec.~\ref{sec:unknown_large_case}, we can conclude that our FETE is more general compared with SIAR-MPC in \citep{canyakmaz2024steering} in terms of both the model estimation strategy and exploitation strategy. The main distinction and improvement of our FETE compared with \citep{canyakmaz2024steering} is our strategic exploration strategy.

\subsection{A Summary of the Compute Resources by Experiments in this Paper}\label{appx:compute_resources}
\paragraph{Experiments on Two-Player Normal-Form Games}
For the experiments in `Stag Hunt' and `Matching Pennies' (illustrated in Fig.~\ref{fig:StagHunt},~\ref{fig:StagHunt_Tradeoff},~\ref{fig:comp_w_wo_steering_ZeroSum}), we only use CPUs (AMD EPYC 7742 64-Core Processor). It takes less than 5 hours to finish the training.

\paragraph{Experiments on Grid-World Version of `Stag Hunt'}
For the experiments in grid-world `Stag Hunt' (illustrated in Fig.~\ref{fig:staghunt_grid_world}), we use one RTX 3090 and less than 5 CPUs (AMD EPYC 7742 64-Core Processor). The training (per seed) takes around 48 hours.

\paragraph{Experiments on $N$-Player Normal-Form Cooperative Games}
For the experiments in cooperative games (illustrated in Fig.~\ref{fig:exploration}), we only use CPUs (AMD EPYC 7742 64-Core Processor).
It takes less than 10 hours to finish the training.

\section{Additional Discussion about Generalizing our Results}
In this section, we discuss some extensions of the principle and algorithms in this paper to more general settings.

\paragraph{Non-Tabular Setting}
When the game is non-tabular and its state and action spaces are infinite, the steering problem itself is fundamentally challenging without additional assumptions, since the policies are continuous distributions with infinite dimension and the learning dynamics can be arbitrarily complex.
Nonetheless, when the agents’ policies and steering rewards are parameterized by finite variables, our methods and algorithms can still be generalized by treating the parameters as representatives.

As a concrete example, the Linear-Quadratic (LQ) game is a popular model involving countinuous state and action spaces \citep{jacobson1973optimal,bacsar2008h,zhang2019policy}. In zero-sum LQ game, the game dynamics are characterized by a linear system:
\begin{align*}
    x_{t+1} = Ax_t + By_t + Cz_t,
\end{align*}
with one-step reward function
\begin{align*}
    r^1(x_t,y_t,z_t) = -r^2(x_t,y_t,z_t) = x_t\trans Q x_t + y_t\trans R^u y_t - z_t\trans R^v z_t.
\end{align*}
Here $x_t,x_{t+1} \in \mR^d$ are the system states, $y_t\in\mR^{m_1}$ and $z_t\in\mR^{m_2}$ denote the actions of two agents. 

Besides, the agents policies are parameterized by matrices $K_t \in \mR^{m_1\times d},L_t\in\mR^{m_2\times d}$, i.e.
\begin{align*}
    y_t = -K_t x_t,\quad z_t = -L_t x_t.
\end{align*}
Following the quadratic form of the original reward, one may consider quadratic steering reward functions with parameters $\Theta_t := (\Theta_{t}^Q,\Theta_{t}^u,\Theta_{t}^v)$ and $\Xi_t := (\Xi_{t}^Q,\Xi_{t}^u,\Xi_{t}^v)$, such that, the steering reward for two agents at step $t$ is specified by:
\begin{align*}
    u^1_t(x_t,y_t,z_t) = x_t\trans \Theta_t^Q x_t + y_t\trans \Theta_{t}^u y_t - z_t\trans \Theta_{t}^v z_t,\\
    u^2_t(x_t,y_t,z_t) = x_t\trans \Xi_{t}^Q x_t + y_t\trans \Xi_{t}^u y_t - z_t\trans \Xi_{t}^v z_t,
\end{align*}
and the reward after modification would be:
\begin{align*}
    r^1(x_t,y_t,z_t) + u^1_t(x_t,y_t,z_t) =& x_t\trans (\Theta_t^Q + Q) x_t + y_t\trans (\Theta_{t}^u + R^u) y_t - z_t\trans (\Theta_{t}^v + R^v) z_t,\\
    r^2(x_t,y_t,z_t) + u^2_t(x_t,y_t,z_t) =& x_t\trans (\Theta_t^Q-Q ) x_t + y_t\trans (\Theta_{t}^u-R^u) y_t - z_t\trans (\Theta_{t}^v-R^v) z_t.
\end{align*}
Although the state, action and steering reward spaces are continuous, both the policies and steering reward are determined by their parameters. Therefore, the agents' learning dynamics can be modeled by a function $f^*$ mapping between those parameters instead:
\begin{align*}
    (K_{t+1}, L_{t+1}) \sim f^*(\cdot|\underbrace{(K_{t}, L_{t})}_{\text{agents' policies}},~\underbrace{(\Theta_t^Q + Q,\Theta_t^u + R^u,\Theta_t^u + R^v, \Xi_t^Q - Q,\Xi_t^u - R^u,\Xi_t^u - R^v)}_{\text{modified~rewards}}).
\end{align*}
Besides, the learning of steering strategy is equivalent to learning a function mapping $\psi$ from parameters in history $\{(K_\tau, L_\tau, \Theta_\tau, \Xi_\tau)\}_{\tau=1}^{t-1} \cup \{(K_t,L_t)\}$ to the next steering reward parameter $(\Theta_t, \Xi_t)$.
Since both the policy parameters and steering reward parameters have finite dimension, this problem is tractable under our frameworks.

\paragraph{Uncountable function class $\cF$}
Our results can be extended to cases where the model class $\mathcal{F}$ is infinite but has a finite covering number. We denote $\mathcal{F}_{\epsilon_0}$ as the $\epsilon_0$-cover for $\mathcal{F}$, s.t.
\begin{align*}
    \forall f\in\cF,\quad \exists f' \in \cF_{\epsilon_0},~\text{s.t.}~\max_{\vecpi \in \Pi,\vecu} \TV\Big(f(\cdot|\vecpi,\vecu) - f'(\cdot|\vecpi,\vecu)\Big) \leq \epsilon_0.
\end{align*}
where $\TV$ denotes the total variation distance.
If $\mathcal{F}$ is uncoutable but $\mathcal{F}_{\epsilon_0}$ is finite, we run our algorithms with $\mathcal{F}_{\epsilon_0}$ instead of $\cF$. Under Assump.~\ref{assump:realizability}, we denote $f^*_{\epsilon_0} \in \cF_{\epsilon_0}$ is the function $\epsilon_0$ close to $f^*$. By simmulation lemma, then we have:
\begin{align*}
    |\EE_{\psi, f^*}[\eta^\goal(\vecpi_{T+1})] - \EE_{\psi, f^*_{\epsilon_0}}[\eta^\goal(\vecpi_{T+1})]| \leq & T\cdot \epsilon_0 \cdot \eta_{\max}\\
    |\EE_{\psi, f^*}[\sum_{t=1}^T \eta^\cost(\vecpi_{t},\vecu_t)] - \EE_{\psi, f^*_{\epsilon_0}}[\sum_{t=1}^T \eta^\cost(\vecpi_{t},\vecu_t)]| \leq & T^2\cdot \epsilon_0\cdot \max_{\vecpi,\vecu} \eta^\cost(\vecpi,\vecu).
\end{align*}

As we can see, we can still optimize the objective in Eq.~\eqref{obj:objective_function} with $\cF_{\epsilon_0}$, and then transfer guarantees on steering gap and cost for $f^*_{\epsilon_0}$ (e.g. the the worst case guarantees in Prop.~\ref{prop:justification}) to $f^*$ with additional $O(T^2\cdot \epsilon_0)$ errors, which is ignorable when $\epsilon_0$ is small enough.

\paragraph{Non-Markovian Learning Dynamics}
In general, non-Markovian learning dynamics is intractable, as implied by the fundamental difficulty in learning optimal policies in POMDPs. However, when some special structures exhibit, our methods for Markovian agents can be generalized. One example is the non-Markovian agents with finite-memory, i.e.,
\begin{align*}
    \vecpi_{t+1} \sim f^*(\cdot|\vecpi_{t-m+1},\vecr+\vecu_{t-m+1},...,\vecpi_{t},\vecr+\vecu_{t}).
\end{align*}
This can be reformulated by a Markovian dynamics 
\begin{align*}
    \bm{x}_{t+1} \sim F^*(\cdot|{\bm{x}}_t, \vecr + \vecu_{t}),
\end{align*}
with the same steering rewards as actions but a new definition of ``state'': $\bm{x}_t := \{\vecpi_{t-m+1},\vecr+\vecu_{t-m+1},...,\vecpi_{t-1},\vecr+\vecu_{t-1},\vecpi_{t}\}$.
Comparing with Def.~\ref{def:Markovian}, the dimension of the state space is expanded by $m$ times, which is still tractable for small $m$.

\paragraph{Neural Networks as Model Class to Approximate Complex $f^*$}
The main principle for choosing $\mathcal{F}$ is to ensure our “realizability” assumption holds with high probability, i.e. the true model $f^* \in \mathcal{F}$. The concrete choice of $\mathcal{F}$ depends on the prior knowledge we have about the agents’ learning dynamics. In general, the less prior knowledge we have, the larger $\mathcal{F}$ should be to ensure realizability, and vice versa.

In practice, one ``safe-choice'' can be consider a class of parameterized neural networks as $\cF$, since it has been proven in deep RL and supervised learning literature that neural networks have powerful approximation ability when $f^*$ is potentially very complex.
Because in our formulation, we allow the randomness of next policy $\vecpi_{t+1}$ (instead of a deterministic output) given $\vecpi_t$ and $\vecr+\vecu_t$, we may consider a neural network taking the concatenation of $\vecpi_t$, $\vecr+\vecu_t$ and another random Gaussian vector $\xi$ as inputs. Here the noise vector is introduced to model the stochasticity of $\vecpi_t$.

The parameters of neural networks are in general continous variables, which implies the model class is uncountable. However, if the parameters has bounded value range, we can show the finite covering number on the parameter space. If we consider Lipschitz continuous activation functions (which is most of the cases), it implies the bounded covering number.

Besides, when considering neural networks, the resulting model class $\cF$ can be extremely large and the MLE-based strategic exploration in Procedure~\ref{procedure:large_model_set} will be inefficient.
We highlight that we design such exploration step in order to align with the main principle: \emph{the algorithm design should be supported by theoretical guarantees on the performance of the learned steering strategy}. This focus on theoretical rigor is the main factor limiting the scalability of our algorithms in more complex settings.
Conversely, if we relax the requirements on theoretical guarantees, it is not very challenging to adapt our algorithms to complex scenarios. For example, we can instead consider more scalable exploration methods, such as Random Network Distillation (RND) \citep{burda2018exploration} or Bootstrapped DQN \citep{osband2016deep}, although without theoretical guarantees.

\end{document}